\documentclass[11pt]{article}
\usepackage{appendix}
\usepackage[figuresright]{rotating}
\usepackage{amsmath,amssymb,amscd,amsthm}
\usepackage{graphicx}
\usepackage{dcolumn}
\usepackage{bm}
\usepackage{ifthen}
\usepackage{rays_defs_18}

\usepackage{hyperref}
\hypersetup{
    bookmarks=true,         
    unicode=false,          
    pdftoolbar=true,        
    pdfmenubar=true,        
    pdffitwindow=false,     
    pdfstartview={FitH},    
    pdfauthor={Reinhard Heckel},     
    pdfsubject={Subject},   
    pdfcreator={Reinhard Heckel},   
    pdfproducer={Producer}, 
    pdfnewwindow=true,      
    colorlinks=true,       
    linkcolor=red,          
    citecolor=green,        
    filecolor=magenta,      
    urlcolor=cyan           
}

\usepackage{filecontents}

\usepackage[
backend=bibtex,
url=false,isbn=false,doi=false,
date=year,
hyperref=auto,
style=alphabetic,
natbib=true,
sorting=nty,
firstinits=true,
maxnames=2, maxbibnames=10,
]{biblatex}

\newcommand{\vct}[1]{{\bf #1}}

\bibliography{./bibliography.bib} 

\usepackage{comment}

\usepackage{tikz}
\usepackage{pgfplots}
\usepgfplotslibrary{groupplots} 
\usetikzlibrary{pgfplots.groupplots}
\usetikzlibrary{matrix,arrows,decorations.pathmorphing}
\usepackage{pgfplots,pgfplotstable}
\usepgfplotslibrary{fillbetween}
\pgfplotsset{compat=1.10}

\usetikzlibrary{matrix,arrows,decorations.pathmorphing}


\usepackage{amsmath,amssymb,ifthen,color,framed,bm}

\usepackage{colortbl}
\definecolor{tblblue}{RGB}{101,124,191}
\definecolor{tblred}{rgb}{1,0.93,0.93}
\definecolor{DarkBlue}{rgb}{0,0,0.7} 
\definecolor{BrickRed}{RGB}{203,65,84}


\usepackage{placeins}

\newcommand{\opnorm}[1]{\left\|#1\right\|}
\newcommand{\fronorm}[1]{\left\|#1\right\|_{F}}

\newcommand{\twonorm}[1]{\left\|#1\right\|_{2}}

\newcommand{\infnorm}[1]{\left\|#1\right\|_{\infty}}
\newcommand{\E}{\operatorname{\mathbb{E}}}

\newtheorem{lemma}{Lemma}
\newtheorem{definition}{Definition}
\newtheorem{assumption}{Assumption}

\newtheorem{theorem}{Theorem}

\newtheorem{proposition}{Proposition}

\setlength{\oddsidemargin}{0pt}
\setlength{\evensidemargin}{0pt}
\setlength{\textwidth}{6.5in}
\setlength{\topmargin}{0in}
\setlength{\textheight}{8.5in}



%
%
%

\newcommand{\errorband}[6]{ \pgfplotstableread{#1}\datatable \addplot
  [name path=pluserror,draw=none,no markers,forget plot] table
  [x={#2},y expr=\thisrow{#3}+\thisrow{#4}] {\datatable};

  \addplot [name path=minuserror,draw=none,no markers,forget plot]
    table [x={#2},y expr=\thisrow{#3}-\thisrow{#4}] {\datatable};

  \addplot [forget plot,fill=#5,opacity=#6]
    fill between[on layer={},of=pluserror and minuserror];

  \addplot [#5,thick,no markers]
    table [x={#2},y={#3}] {\datatable};
}

\newcommand\mystackrel[2]{\stackrel{\text{#1}}{#2}}

\newcommand\relu{\mathrm{ReLU}}
\newcommand\cn{\mathrm{cn}}

\newcommand{\R}{\mathbb{R}}
\newcommand{\pp}{p}

\renewcommand\diag{\mathrm{diag}}
\newcommand\loss{\mathcal L}

\newcommand\kout{k_{\mathrm{out}}}
\newcommand\prior{G}
\newcommand\param{\mC}

\newcommand\img{\vx}
\newcommand\noise{\vz}

\newcommand\setB{\mathcal B}

\newcommand\Jac{\mathcal J}
\newcommand\stepsize{\eta}

\newcommand\vsigma{\bm \sigma}
\newcommand\vtheta{\bm \theta}
\newcommand\mSigma{\bm \Sigma}
\newcommand\N{N} 

\newcommand\iter{\tau} 
\newcommand\iteralt{t} 

\newcommand\filter{\vu}


\begin{document}

\begin{center}

{\bf{\LARGE{
Denoising and Regularization via Exploiting the Structural Bias of Convolutional Generators
}}}

\vspace*{.2in}

{\large{
\begin{tabular}{cccc}
Reinhard Heckel$^\ast$ and Mahdi Soltanolkotabi$^\dagger$
\end{tabular}
}}

\vspace*{.05in}

\begin{tabular}{c}
$^\ast$Dept. of Electrical and Computer Engineering, Technical University of Munich \\
$^\dagger$Dept. of Electrical and Computer Engineering, University of Southern California
\end{tabular}

\vspace*{.1in}

\today

\vspace*{.1in}

\end{center}


\begin{abstract}
Convolutional Neural Networks (CNNs) have emerged as highly successful tools for image generation, recovery, and restoration. A major contributing factor to this success is that convolutional networks impose strong prior assumptions about natural images. A surprising experiment that highlights this architectural bias towards natural images is that one can remove noise and corruptions from a natural image without using any training data, by simply fitting (via gradient descent) a randomly initialized, over-parameterized convolutional generator to the corrupted image. While this over-parameterized network can fit the corrupted image perfectly, surprisingly after a few iterations of gradient descent it generates an almost uncorrupted image. This intriguing phenomenon enables state-of-the-art CNN-based denoising and regularization of other inverse problems. In this paper, we attribute this effect to a particular architectural choice of convolutional networks, namely convolutions with fixed interpolating filters. We then formally characterize the dynamics of fitting a two-layer convolutional generator to a noisy signal and prove that early-stopped gradient descent denoises/regularizes. Our proof relies on showing that convolutional generators fit the structured part of an image significantly faster than the corrupted portion. 
\end{abstract}

\section{Introduction}
Convolutional neural networks are extremely popular for image generation. 
The majority of image generating networks is convolutional, ranging from deep convolutional Generative Adversarial Networks (DC-GANs)~\citep{radford_unsupervised_2015} to the U-Net~\citep{ronneberger_u-net_2015}. 
It is well known that convolutional neural networks 
incorporate implicit assumptions about the signals they generate, such as pixels that are close being related. 
This makes them particularly well suited for modeling distributions of images. 
It is less well known, however, that those prior assumptions build into the architecture are so strong that convolutional neural networks are useful even without ever being exposed to training data.

The latter was first shown in the Deep Image Prior (DIP) paper~\citep{ulyanov_deep_2018}. 
\citet{ulyanov_deep_2018} observed that when `training' a standard convolutional auto-encoder such as the popular U-net~\citep{ronneberger_u-net_2015} on a single noisy image and regularizing by early stopping, the network 
denoises the image with excellent performance. 
This is based on the empirical observation that un-trained convolutional auto-decoders fit a single natural image faster when optimized with gradient descent than pure noise. 
Many elements of an auto-encoder, however are irrelevant for this effect: A more recent paper~\citep{heckel_deep_2019} proposed a much simpler image generating network, termed the deep decoder. 
This network can be seen as the relevant part of a convolutional generator architecture to function as an image prior, and can be obtained from a standard convolutional autoencoder by removing the encoder, the skip connections, and perhaps most notably, the trainable convolutional filters of spatial extent larger than one. 
Thus, the deep decoder does not use learned or trainable convolutional filters like conventional convolutional networks do, and instead only uses convolutions with fixed convolutional kernels to generate an image.
\begin{figure}
\begin{center}
\resizebox{\textwidth}{!}{
\begin{tikzpicture}
\begin{groupplot}[
y tick label style={/pgf/number format/.cd,fixed,precision=3},
scaled y ticks = false,
legend style={at={(1,1)}},
         title style={at={(0.5,3.85cm)}, anchor=south},
         group
         style={group size= 3 by 1, xlabels at=edge bottom, 
           horizontal sep=1.8cm, vertical sep=2.5cm}, xlabel={iteration}, ylabel={MSE},
         width=0.25*6.5in,height=0.23*6.5in, ymin=0,xmin=10]
	\nextgroupplot[title = {(a) fitting noisy image},xmode=log,ymax=0.025,width=0.45*6.5in] 
	\addplot +[mark=none] table[x index=0,y index=3]{./dat/DD_denoising_curves.dat};
	\addplot +[mark=none] table[x index=0,y index=3]{./dat/DIP_denoising_curves.dat};
	

	\nextgroupplot[title = {(b) fitting clean image},xmode=log,ylabel ={MSE}]
	\addplot +[mark=none] table[x index=0,y index=1]{./dat/DD_denoising_curves.dat};
	\addlegendentry{\small DD-O}

	\addplot +[mark=none] table[x index=0,y index=1]{./dat/DIP_denoising_curves.dat};
	\addlegendentry{\small DIP}

	\nextgroupplot[title = {(c) fitting noise},xmode=log,ylabel ={MSE}]
	\addplot +[mark=none] table[x index=0,y index=2]{./dat/DD_denoising_curves.dat};
	\addplot +[mark=none] table[x index=0,y index=2]{./dat/DIP_denoising_curves.dat};
      
\end{groupplot}          

\draw (0.22,2.2) circle(1.5pt) -- (1cm,2.35cm) node[above,inner sep=-0.5]{
\includegraphics[width=1.8cm]{./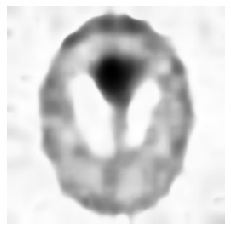}
};

\draw (1.6,0.35) circle(1.5pt) -- (3cm,2.35cm) node[above,inner sep=-0.5]{
\includegraphics[width=1.8cm]{./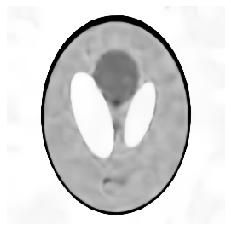}
};

\draw (5,1.55) circle(1.5pt) -- (5cm,2.35cm) node[above,inner sep=-0.5]{
\includegraphics[width=1.8cm]{./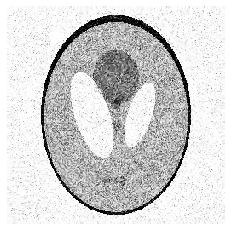}
};


\draw (9,0) circle(1.5pt) -- (8cm,2.35cm) node[above,inner sep=-0.5]{
\includegraphics[width=1.8cm]{./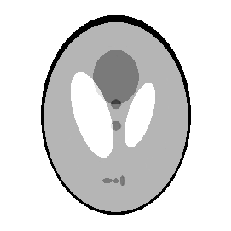}
};

\draw (14.3,0.3) circle(1.5pt) -- (13cm,2.35cm) node[above,inner sep=-0.5]{
\includegraphics[width=1.65cm]{./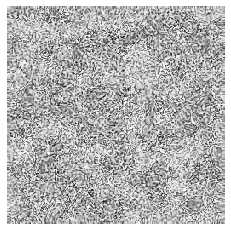}
};
\end{tikzpicture}
}
\end{center}
\caption{
\label{fig:dipcurves}
Fitting an over-parameterized Deep Decoder (DD-O) and the deep image prior (DIP) to a (a) noisy image, (b) clean image, and (c) pure noise. Here, MSE denotes Mean Square Error of the network output with respect to the clean image in (a) and fitted images in (b) and (c). While the network can fit the noise due to over-parameterization, it fits natural images in significantly fewer iterations than noise. Hence, when fitting a noisy image, the image component is fitted faster than the noise component which enables denoising via early stopping.
}
\end{figure}


In this paper, we study such a simple, untrained convolutional network theoretically.
We consider the over-parameterized regime where the network has sufficiently many parameters to represent an arbitrary image (including noise) perfectly and show that:
\begin{align*}
\textit{Fitting convolutional generators via early stopped gradient descent provably denoises ``natural" images.}
\end{align*}
To prove this statement, we characterize how the network architecture governs the dynamics of fitting over-parameterzed networks to a single (noisy) image. In particular we prove:
\begin{align*}
\textit{Convolutional generators optimized with gradient descent fit natural images faster than noise.}
\end{align*}

We depict this phenomena in Figure~\ref{fig:dipcurves} where we fit a randomly initialized over-parameterized convolutional generator to an image via running gradient descent on the objective
$
\loss(\param) = \norm[2]{\prior(\param) - \vy}^2.
$
Here, $\prior(\param)$ is the convolutional generator with weight parameters $\param$, 
and $\vy$ is either a noisy image, a clean image, or noise. 
This experiment demonstrates that an over-parameterized convolutional network fits a natural image (Figure \ref{fig:dipcurves}b) much faster than noise (Figure \ref{fig:dipcurves}c). Thus, when fitting the noisy image (Figure \ref{fig:dipcurves}a), early stopping the optimization enables image denoising.
This effect is so strong that it gives state-of-the-art denoising performance (see Figure~\ref{fig:denoising} and Appendix~\ref{sec:denoisingperformance}).
Beyond denoising, this effect also enables significant improvements in regularizing a variety of inverse problems such as compressive sensing~\citep{veen_compressed_2018,heckel_regularizing_2019}.



\begin{figure}
\begin{center}

\resizebox{\textwidth}{!}{%

\begin{tikzpicture}

\newcommand\xspace{2.7}
\newcommand\yspace{1}
\newcommand\ymargin{1}
\newcommand\xmargin{1}
\newcommand\ycap{-2.8cm}
\newcommand\iwidth{2.8cm}

\node at (0,-1*\yspace) {\includegraphics[width=\iwidth]{./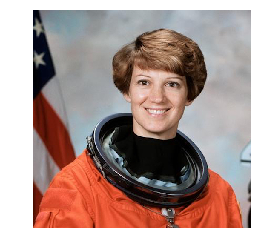}};
\node at (1*\xspace,-1*\yspace) {\includegraphics[width=\iwidth]{./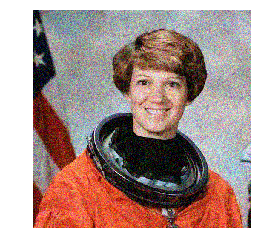}};
\node at (2*\xspace,-1*\yspace) {\includegraphics[width=\iwidth]{./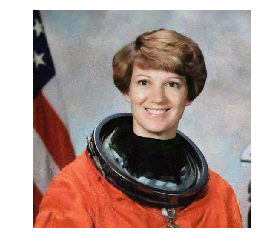}};
\node at (3*\xspace,-1*\yspace) {\includegraphics[width=\iwidth]{./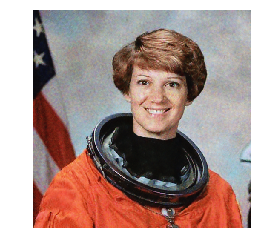}};
\node at (4*\xspace,-1*\yspace) {\includegraphics[width=\iwidth]{./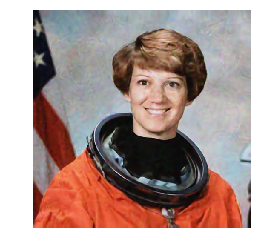}};
\node at (5*\xspace,-1*\yspace) {\includegraphics[width=\iwidth]{./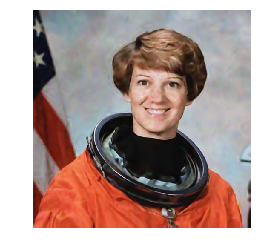}};

\node at (0,\ycap) {original image };
\node at (1*\xspace,\ycap) {\parbox{4cm}{\centering noisy image \\19.1dB}};
\node at (2*\xspace,\ycap) {\parbox{4cm}{\centering BM3D \\ 28.6dB}};
\node at (3*\xspace,\ycap) {\parbox{4cm}{\centering DIP \\ 29.2dB}};
\node at (4*\xspace,\ycap) {\parbox{4cm}{\centering DD-U \\ 29.6dB}};
\node at (5*\xspace,\ycap) {\parbox{4cm}{\centering DD-O \\ 29.9dB}};

\end{tikzpicture}
}
\end{center}
\vspace{-0.4cm}
\caption{
\label{fig:denoising}
Denoising with BM3D and various convolutional generators. The relative ranking of algorithms in this picture is representative and maintained on a larger set of test images (see Appendix~\ref{sec:denoisingperformance}).
DD-U is an under-parameterized deep decoder. 
DIP and DD-O are over-parameterized convolutional generators and with early stopping outperform the BM3D algorithm, the next best method that does not require training data. 
}
\end{figure}


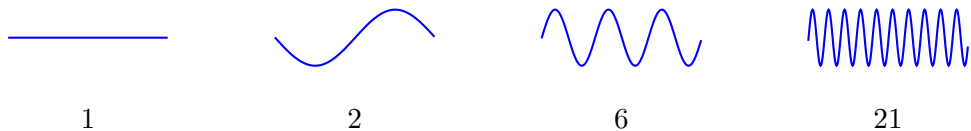
\begin{figure}
\begin{center}
%
%
\begin{tikzpicture}
\begin{groupplot}[axis lines=none,
y tick label style={/pgf/number format/.cd,fixed,precision=3},
scaled y ticks = false,
legend style={at={(1.2,1)}},
         title style={at={(0.5,-1.1cm)}, anchor=south}, group
         style={group size= 4 by 1, xlabels at=edge bottom,
           horizontal sep=1cm, vertical sep=1.3cm},
         width=0.25*\textwidth,height=0.15*6.5in]

\nextgroupplot[title={1}]
\addplot +[mark=none,thick] table[x index=0,y index=1]{./dat/circulant_singular_vectors.dat};
\nextgroupplot[title={2}]
\addplot +[mark=none,thick] table[x index=0,y index=2]{./dat/circulant_singular_vectors.dat};
\nextgroupplot[title={6}]
\addplot +[mark=none,thick] table[x index=0,y index=3]{./dat/circulant_singular_vectors.dat};
\nextgroupplot[title={21}]
\addplot +[mark=none,thick] table[x index=0,y index=4]{./dat/circulant_singular_vectors.dat};
\end{groupplot}          
\end{tikzpicture}
\end{center}
\vspace{-0.5cm}
\caption{
\label{fig:circulantsingvectors}
The 1st, 2nt, 6th, and 21th trigonometric basis functions in dimension $n=300$. 
}
\end{figure}
\subsection{Contributions and overview of results}
In this paper we take a step towards understanding why fitting a convolutional generator via early stopped gradient descent leads to such surprising denoising and regularization capabilities. 
\begin{itemize}
\item We first show experimentally that this denoising phenomena is primarily attributed to convolutions with fixed interpolation kernels, typically implicitly implemented by bi-linear upsampling operations in the convolutional network.
\item We then show that fitting an over-parameterized convolutional networks (with fixed convolutional filters) via early stopped gradient descent to a signal provably denoises it. 

Specifically, let $\vx\in \reals^n$ be a smooth signal that can be represented exactly as a linear combination of the $\pp$ orthogonal trigonometric functions of lowest frequency (defined in equation~\eqref{eq:trigonfunc}, see Figure~\ref{fig:circulantsingvectors} for a depiction). 
A smooth signal is a good model for a natural image since natural images are well approximated by low-frequency components. Specifically, Figure 4 in \citep{Simoncelli_Olshausen_2001} shows that the power spectrum of a natural image (i.e., the energy distribution by frequency) decays rapidly from low frequencies to high frequencies. 
In contrast, Gaussian noise has a flat power spectrum.

Our goal is to obtain an estimate of this signal from a noisy observation $\vy=\vx + 
\vz$ where $\vz \sim \mc N(0, \frac{\varsigma^2}{n} \mtx{I})$ is Gaussian noise. Let $\hat \vy = G(\mC_\iter)$ be the estimate obtained from early stopping the fitting of a two-layer convolutional generator to $\vy$. We prove this estimate achieves the denoising rate
\[
\twonorm{\hat \vy - \vx}^2
\leq
c\frac{\pp}{n}\varsigma^2,
\]
with $c$ a constant. This rate is optimal up to a constant factor.
\item 
Our denoising result follows from a detailed analysis of gradient descent applied to fitting a convolutional generator $G(\mtx{C})$ to a noisy signal $\vy=\vx+\vz$ with $\vx$ representing the signal and $\vz$ the noise. 
We show that there are weights $\sigma_1,\ldots, \sigma_n$ associated with the trigonometric basis function that only depend on the convolutional filter used, and which  typically decay quickly from the low-frequency to the high-frequency trigonometric basis functions. 
These weights in turn determine the speed at which the associated components of the signal are fitted. 
Specifically, we show that the dynamics of gradient descent are approximately given by
\[
G(\mC_\iter) - \vx
\approx
\underbrace{
\sum_{i=1}^n \vw_i \innerprod{\vw_i}{\vx} (1-\eta \sigma_i^2)^\iter
}_{\text{error in fitting signal}}
+
\underbrace{
\sum_{i=1}^n \vw_i \innerprod{\vw_i}{\vz} ((1-\eta \sigma_i^2)^\iter - 1)
}_{\text{fit of noise}}.
\]
The convolutional filters commonly used are such that the weights $\sigma_1,\ldots,\sigma_n$ decays quickly, implying that low-frequency components in the trigonometric expansion are fitted significantly faster than high frequency ones. So if the signal mostly consists of low-frequency components, we can choose an early stopping time such that the error in fitting the signal is very low, and thus the signal part is well described, whereas at the same time only a small part of the noise, specifically the part aligned with the low-frequency components has been fitted.
\end{itemize}





\section{Convolutional generators}
\label{sec:modelempirics}

A convolutional generator maps an input tensor $\mB_0$ to an image only using upsampling and convolutional operations, followed by channel normalization (a special case of batch normalization) and applications of non-linearities, see Figure~\ref{fig:architecture}. All previously mentioned convolutional generator networks~\citep{radford_unsupervised_2015,ronneberger_u-net_2015} including the networks considered in the DIP paper~\citep{ulyanov_deep_2018} primarily consist of those operations.

For motivating the architecture of the convolutional generators studied in this paper, we first demonstrate in Section~\ref{imp} that convolutions with fixed interpolation filters are critical to the denoising performance with early stopping. 
Specifically, we empirically show that convolutions with \emph{fixed} convolutional kernels are critical for convolutional generators to fit natural images faster than noise. 
In Section~\ref{arch} we formally introduce the class of convolutional generators studied in this paper, and finally, in Section~\ref{sec:simplemodel}, we introduce a minimal convolutional architecture which is the focus of our theoretical results.

\subsection{The importance of fixed convolutional filters}
\label{imp}

Convolutions with fixed convolutional kernels are critical for denoising with early stopping, because they are critical for the phenomena that natural images are fitted significantly faster than noise. 
To see this, consider the experiment in Figure~\ref{fig:phantormMRImain} in which we fit an image and noise by minimizing the least-squares loss via gradient descent with i) a convolutional generator with only fixed convolutional filters (see Section \ref{arch} below for a precise description) and ii) a conventional convolutional generator with trainable convolutional filters (essentially the architecture from the popular DC-GAN generators, see Appendix~\ref{sec:numbias} for details and additional numerical evidence).
As illustrated in Figure~\ref{fig:phantormMRImain}, the convolutional network with \emph{fixed filters} fits the natural image much faster than noise, whereas the network with learned convolutional filters, only fits it slightly faster, and this effect vanishes as the network becomes highly overparameterized. 
Thus, fixed convolutional filters enable un-trained convolutional networks to function as highly effective image priors.
We note that the upsampling operation present in most architectures implicitly incorporates a convolution with a \emph{fixed} convolutional (interpolation) filter.

%
%
%
%
%
%
%
%


\begin{figure}
\begin{center}
\begin{tikzpicture}

\node at (-1.9,-1.1) {\includegraphics[width = 0.12\textwidth]{./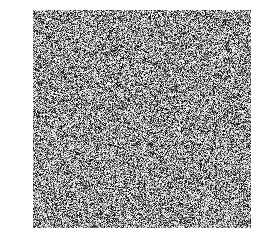} };
\node at (-1.9,1.1) {\includegraphics[width = 0.12\textwidth]{./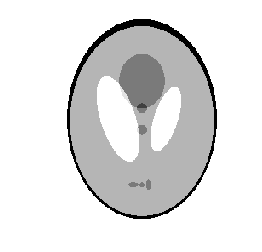} };

\begin{groupplot}[
yticklabel style={/pgf/number format/.cd,fixed,precision=3},
y label style={at={(axis description cs:0.25,.5)}},
legend style={at={(1,1)} , font=\tiny ,
/tikz/every even column/.append style={column sep=-0.1cm}
 },
         group
         style={group size= 2 by 2, xlabels at=edge bottom,
         yticklabels at=edge left,
         xticklabels at=edge bottom,
         horizontal sep=0.2cm, vertical sep=0.2cm,
         }, 
           xlabel = {optimizer steps},
         width=0.23*6.5in,height=0.2*6.5in, ymin=0,xmin=10,
         ]

%
%
	
\nextgroupplot[title = {fixed kernels},xmode=log]
	
	\addplot +[mark=none] table[x index=1,y index=2]{./dat/results_fixedimg.dat};
	\addlegendentry{1}

	\addplot +[mark=none] table[x index=1,y index=4]{./dat/results_fixedimg.dat};
	\addlegendentry{4}	
	\addplot +[mark=none] table[x index=1,y index=6]{./dat/results_fixedimg.dat};
	\addlegendentry{16}	
%
%
	
\nextgroupplot[title = {learned kernels},xmode=log]
	
	\addplot +[mark=none] table[x index=1,y index=2]{./dat/results_deconv_img.dat};
	\addlegendentry{1}

	\addplot +[mark=none] table[x index=1,y index=4]{./dat/results_deconv_img.dat};
	\addlegendentry{4}	
	\addplot +[mark=none] table[x index=1,y index=6]{./dat/results_deconv_img.dat};
	\addlegendentry{16}	
	
%
\nextgroupplot[xmode=log, xlabel = {\scriptsize optimizer steps}]
	
	\addplot +[mark=none] table[x index=1,y index=2]{./dat/results_fixednoise.dat};
	\addplot +[mark=none] table[x index=1,y index=4]{./dat/results_fixednoise.dat};
	\addplot +[mark=none] table[x index=1,y index=6]{./dat/results_fixednoise.dat};
	
%
	
\nextgroupplot[xmode=log, xlabel = {\scriptsize optimizer steps},scaled y ticks = false]
	
	\addplot +[mark=none] table[x index=1,y index=2]{./dat/results_deconv_noise.dat};
	\addplot +[mark=none] table[x index=1,y index=4]{./dat/results_deconv_noise.dat};
	\addplot +[mark=none] table[x index=1,y index=6]{./dat/results_deconv_noise.dat};
      
\end{groupplot}


\begin{scope}[scale=0.93,xshift = -9.5cm,yshift=-0.5cm]
\usetikzlibrary{fadings,shapes.arrows,shadows}   

\def\w{0.2}

\tikzset{arrow1/.style={
    fill = DarkBlue,
    scale=0.5,
    single arrow head extend=0.35cm,
    single arrow,
    minimum height=1cm,
}}

\foreach \x/\y/\label in {1/0.5cm/0, 2/0.65cm/1, 3/0.8cm/2,5/1.1/d}{
\draw (\x-\w, -\y) rectangle (\x+\w,\y);
\node at (\x-\w,-\y) [above right,rotate=90] {\tiny $n_\label$};
\node at (\x,-\y) [below] {\tiny $k$};
\node at (\x,0) {\scriptsize $\mB_\label$};
}

\foreach \x in {1,2,3,4}{
\node at (\x+0.5,0) [arrow1] {};
}
\node at (4,0) {\ldots};

\node at (5+0.5,0) [arrow1,BrickRed] {};
\def\ww{0.1}
\def\xx{6}
\draw (\xx-\ww, -1.1) rectangle (+\xx,+1.1);

\node at (1,2) [arrow1] {};
\node at (1.2,2) [right] {\scriptsize Convolution + ReLU + normalization};
\node at (1,1.5) [arrow1,BrickRed] {};
\node at (1.2,1.5) [right] {\scriptsize lin. combinations, sigmoid};

\end{scope}

      
\end{tikzpicture}
\end{center}
\vspace{-0.5cm}
\caption{
\label{fig:phantormMRImain}
\label{fig:architecture}
{\bf Left panel:}
Convolutional generators. The output is generated through repeated convolutional layers, channel normalization, and applying ReLU non-linearities.
{\bf Right panel:}
Fitting the phantom MRI and noise with different architectures of depth $d=5$, for different number of over-parameterization factors (1,4, and 16). 
Gradient descent on convolutional generators involving fixed convolutional matrixes fit an image significantly faster than noise.
}
\end{figure}

\subsection{Architecture of convolutional generator with fixed convolutions}
\label{arch}

In this section, we describe the architecture of the deep decoder~\citep{heckel_deep_2019}, a convolutional network with \emph{fixed} convolutional operators only.
In this architecture, the channels in the $(i+1)$-th layer are given by 
\[
\mB_{i+1} = \cn(\relu(\mU_i\mB_i \mC_i )), 
\quad i = 0, \ldots, d-1,
\]
and finally, the output of the $d$-layer network is formed as
\[
\img = 
\mB_d \mC_{d+1}.
\]
Here, the coefficient matrices $\mC_i \in \reals^{k \times k }$ and $\mC_{d+1} \in \reals^{k \times \kout}$ contain the weights of the network. 
The number of channels, $k$, determines the number of weight parameters of the network, given by $d k^2 + \kout k$. 
Each column of the tensor $\mB_i \mC_i \in \reals^{n_i \times k}$ is formed by taking linear combinations of the channels of the tensor $\mB_i$ in a way that is consistent across all pixels, 
and the ReLU activation function is given by $\relu(t)=\max(0,t)$. Then, $\cn(\cdot)$ performs the channel normalization operation which normalizes each channel individually and can be viewed as a special case of the popular batch normalization operation~\citep{ioffe_batch_2009}. 


The operator $\mU_i \in \reals^{n_{i+1} \times n_i}$ is a tensor implementing an upsampling and most importantly a convolution operation with a fixed kernel. This fixed kernel was chosen in all experiments above as a triangular kernel so that $\mU$ performs bi-linear 2x upsampling (this is the standard implementation in the popular software packages pytorch and tensorflow). As mentioned earlier this convolution with a \emph{fixed kernel} is critical for fitting natural images faster than complex ones.


\subsection{Two layer convolutional generator studied theoretically in this paper}
\label{sec:simplemodel}

The simplest model to study the denoising capability of convolutional generators and the phenomena that a natural image is fitted faster than a complex one theoretically is a network with only one hidden layers and one output channel i.e., $G(\mC) \in \reals^n$. 
Then, the generator becomes
\[
G(\mC) = \relu( \mU \mB_0 \mC_0 ) \vc_1,
\]
where $\mU \in \reals^{n\times n}$ is a circulant matrix that implements a convolution with a filter $\filter$.
In this paper we consider the over-parameterized regime where $k \geq 2n$. 
Note that scaling the $i$-th column of $\mC_0$ with a non-negativ factor is equivalent to scaling the $i$-th entry of the output weights $\vc_1$ with the same factor. 
We therefore fix the output weights to $\vc_1 = \vv$, where $\vv = [1,\ldots,1, -1,\ldots,-1] / \sqrt{k}$. 
Next, note that both the generators with the fixed output weights and non-fixed output weights have the same range in the regime $k\geq 2n$. 
Next, because the matrix $\mB_0$ is Gaussian, with probability one, it has full rank and thus spans $\reals^n$.
It follows that optimizing over the parameter $\mC_0$ in $\mC = \mB_0\mC_0$ is equivalent to optimizing over the matrix $\mC \in \reals^{n \times k}$. 
We therefore consider the generator
\begin{align}
\label{simplifiedgen}
G(\mC) = \relu( \mU \mC) \vv,
\end{align}
where $\vv = [1,\ldots,1, -1,\ldots,-1] / \sqrt{k}$ is fixed and $\mC \in \reals^{n \times k}$ is the new coefficient matrix we optimize over.
Figure~\ref{fig:toy} in the appendix shows that even this simple two-layer convolutional network fits a simple image faster than noise. 
This is the simplest model in which the phenomena that a convolutional networks fits structure faster than noise can reliably be observed. 
As a consequence, the dynamics of training the model~\eqref{simplifiedgen} are the focus of the remainder of this paper.


\section{Warmup: Dynamics of gradient descent on least squares}
\label{sec:linearLS}

As a prelude for studying the dynamics of fitting convolutional generators via a \emph{non-linear} least square problem, we study the dynamics of gradient descent applied to a \emph{linear} least squares problem. We demonstrate how early stopping can lead to denoising capabilities even with a simple linear model. 
We consider a least-squares problem of the form
\[
\loss(\vc)
= 
\frac{1}{2}
\norm[2]{\vy - \mJ \vc}^2,
\]
and study gradient descent with a constant step size $\stepsize$ starting at $\vc_0 = \vect{0}$. 
The updates are given by
\[
\vc_{\iter + 1} 
= \vc_\iter - \stepsize \nabla \loss(\vc_\iter),
\quad \nabla \loss(\vc)  
= \transp{\mJ} (\mJ \vc - \vy).
\]
The following simple proposition characterizes the trajectory of gradient descent.
\begin{figure}
\begin{center}
\resizebox{\textwidth}{!}{
\begin{tikzpicture}

\begin{groupplot}[
legend style={at={(1,1)}},
         group
         style={group size= 3 by 1, xlabels at=edge bottom,
         xticklabels at=edge bottom,
         horizontal sep=2.2cm, vertical sep=2.2cm,
         }, 
         xlabel = {optimizer steps},
         width=0.3*6.5in,height=0.25*6.5in, 
         ]


\nextgroupplot[ ylabel = {singular value $\sigma_i$} , xlabel={$i$}]
	\addplot +[mark=none] table[x index=0,y index=1]{./dat/ls_spectrum.dat};


\nextgroupplot[ ylabel = {$\norm[2]{\vy - \mJ \vc_\iter}$} , xlabel={$\iter$}]
	\addplot +[mark=none] table[x index=0,y index=1]{./dat/ls_residuals.dat};
	

\nextgroupplot[ ylabel = {$\norm[2]{\vw_1 - \mJ \vc_\iter}$} , xlabel={$\iter$} ]
	\addplot +[mark=none] table[x index=0,y index=2]{./dat/ls_residuals.dat};

\end{groupplot}          
\end{tikzpicture}
}
\end{center}
\vspace{-0.5cm}
\caption{
\label{fig:toyls}
Gradient descent on the least squares problem of minimizing 
$\norm{\vy - \mJ \vc_\iter}^2$, where $\mJ \in \reals^{100\times 100}$ has decaying singular values (left panel) and the observation is the sum of a signal component, equal to the leading singular vector $\vw_1$ of $\mJ$, and a noisy component $\vz \sim \mc N(0, (1/n) \mI)$, i.e., $\vy = \vw_1 + \vz$. The signal component $\vw_1$ is fitted significantly faster than the other components (right panel), thus early stopping enables denoising.
}
\end{figure}
\begin{proposition}
\label{prop:residual}
Let $\mtx{J}\in\R^{n\times m}$ be a matrix with left singular vectors $\vw_1,\ldots,\vw_n\in\R^{n}$ and corresponding singular values $\sigma_1\ge \sigma_2\ge \ldots \geq \sigma_n$. 
Then the residual after $\iter$ steps, $\vr_\iter = \vy - \mJ \vc_\iter$, of gradient descent starting at $\vc_0=0$ is
\[
\vr_\iter
=  \sum_{i=1}^n \vw_i \innerprod{\vw_i}{\vy} ( 1 - \stepsize \sigma_i^2 )^\iter.
\]
\end{proposition}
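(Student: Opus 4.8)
The plan is to reduce the claim to a clean recursion for the residual and then diagonalize a single symmetric matrix using the singular value decomposition of $\mJ$. First I would write out one gradient step directly in terms of the residual: combining the update $\vc_{\iter+1} = \vc_\iter - \stepsize \transp{\mJ}(\mJ\vc_\iter - \vy)$ with the definition $\vr_\iter = \vy - \mJ\vc_\iter$, a short computation gives
\[
\vr_{\iter+1} = \vy - \mJ\vc_{\iter+1} = \vr_\iter - \stepsize\,\mJ\transp{\mJ}\vr_\iter = (\mI - \stepsize\,\mJ\transp{\mJ})\vr_\iter.
\]
Since we start at $\vc_0 = \vect{0}$ we have $\vr_0 = \vy$, so unrolling this linear recursion $\iter$ times yields the closed form $\vr_\iter = (\mI - \stepsize\,\mJ\transp{\mJ})^\iter \vy$.

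The second step is to diagonalize the matrix $\mI - \stepsize\,\mJ\transp{\mJ}$. I would observe that $\mJ\transp{\mJ} \in \R^{n\times n}$ is symmetric positive semidefinite, with eigenvectors given precisely by the left singular vectors $\vw_1,\ldots,\vw_n$ of $\mJ$ and eigenvalues $\sigma_1^2 \ge \cdots \ge \sigma_n^2$, i.e. $\mJ\transp{\mJ} = \sum_{i=1}^n \sigma_i^2 \vw_i\transp{\vw_i}$ where $\{\vw_i\}$ is a complete orthonormal basis of $\R^n$. Expanding the identity in this same basis, $\mI = \sum_{i=1}^n \vw_i\transp{\vw_i}$, gives the orthogonal eigendecomposition
\[
\mI - \stepsize\,\mJ\transp{\mJ} = \sum_{i=1}^n (1 - \stepsize\sigma_i^2)\,\vw_i\transp{\vw_i}.
\]
Because the $\vw_i$ are orthonormal, the $\iter$-th matrix power is obtained simply by raising each eigenvalue to the power $\iter$, so $(\mI - \stepsize\,\mJ\transp{\mJ})^\iter = \sum_{i=1}^n (1-\stepsize\sigma_i^2)^\iter\,\vw_i\transp{\vw_i}$. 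Applying this to $\vy$ and using $\transp{\vw_i}\vy = \innerprod{\vw_i}{\vy}$ reproduces exactly the claimed expression for $\vr_\iter$.

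The only point requiring a little care — and it is not really an obstacle — is ensuring that the left singular vectors form a complete orthonormal basis of $\R^n$, so that both $\mI$ and $\mJ\transp{\mJ}$ expand in the same eigenbasis and the power identity applies term by term. This is automatic since $\mJ\transp{\mJ}$ is symmetric on $\R^n$ and the statement already indexes a full set of $n$ singular values $\sigma_1 \ge \cdots \ge \sigma_n$ (some possibly zero when $m < n$). Beyond that the argument is an elementary computation, so I do not expect any genuine difficulty; the content is purely that fitting noiseless and noisy components decouples across the eigenbasis, with each coordinate contracting at its own rate $(1-\stepsize\sigma_i^2)^\iter$.
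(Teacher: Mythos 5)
Your proof is correct and follows essentially the same route as the paper: unroll the residual recursion $\vr_{\iter+1} = (\mI - \stepsize\,\mJ\transp{\mJ})\vr_\iter$ from $\vr_0 = \vy$, then diagonalize $\mJ\transp{\mJ}$ via the left singular vectors to read off the per-mode contraction rates. The remark about completeness of the eigenbasis is a fine (if minor) point of extra care that the paper leaves implicit.
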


Suppose that the signal $\vy$ lies in the column span of $\mJ$, and that the stepsize is chosen sufficiently small (i.e., $\stepsize \leq 1/\opnorm{\mtx{J}}^2$). 
Then, by Proposition~\ref{prop:residual}, gradient descent converges to a zero-loss solution and thus fits the signal perfectly.
More importantly, gradient descent fits the components of $\vy$ corresponding to large singular values faster than it fits the components corresponding to small singular values.
 
To explicitly show how this observation enables regularization via early stopped gradient descent, suppose our goal is to find a good estimate of a signal $\vx$ from a noisy observation
\[
\vy = \vx + \noise,
\]
where the signal $\vx$ lies in a signal subspace that is spanned by the $\pp$ leading left-singular vectors of $\mJ$. Then, by Proposition~\ref{prop:residual}, the signal estimate after $\iter$ iterations, $\mJ \vc_\iter$, obeys
\begin{align}
\label{eq:linearspread}
\twonorm{\mJ \vc_\iter-\vx}
\leq
(1 - \stepsize \sigma_p^2)^\iter \twonorm{\vx}
+
E(\vz),
\quad
E(\vz)
\defeq
\sqrt{
\sum_{i=1}^n
(( 1 - \stepsize \sigma_i^2 )^\iter - 1)^2
\innerprod{\vw_i}{\vz}^2.
}
\end{align}
Thus, after a few iterations most of the signal has been fitted (i.e., $(1 - \stepsize \sigma_p)^{\iter}$ is small). Furthermore, if we assume that the ratio $\sigma_{p} / \sigma_{p+1}$ is sufficiently large so that the spread between the two singular values separating the signal subspace from the rest is sufficiently large, most of the noise outside the signal subspace has not been fitted (i.e., $(( 1 - \stepsize \sigma_i^2 )^\iter - 1)^2 \approx 0$ for $i= p+1,\ldots,n$). 

In particular, suppose the noise vector has a Gaussian distribution given by $\vz \sim \mc N(0, \frac{\varsigma^2}{n} \mI )$. Then $E(\vz)\approx \varsigma \sqrt{\frac{\pp}{n}}$ so that after order $\iter = \log(\epsilon)/\log(1-\stepsize \sigma_p^2)$ iterations, with high probability, 
\[
\twonorm{\mJ \vc_\iter-\vx}
\leq
\epsilon \twonorm{\vx}  +c\varsigma\sqrt{\frac{\pp}{n}}.
\]
This demonstrates, that provided the signal lies in a subspace spanned by the leading singular vectors, early stoped gradient descent reaches the optimal denoising rate of $\varsigma\sqrt{ p / n }$ after a few iterations. See Figure~\ref{fig:toyls} for a numerical example demonstrating this phenomena. 

%

\section{Dynamics of gradient descent on convolutional generators}
\label{sec:main}
We are now ready to study the implicit bias of gradient descent towards natural/structured images theoretically. 
Consider the two-layer network (introduced in Section~\ref{sec:simplemodel}) of the form
\[
G(\mC)
=
\relu(\mU \mC) \vv,
\]
where $\vv = [1,\ldots,1, -1,\ldots,-1] / \sqrt{k}$, and with weight parameter $\mC \in \reals^{n\times k}$, and recall that $\mU$ is a circulant matrix implementing a convolution with a kernel $\vu \in \reals^n$. 
We fit the generator to a signal $\vy \in \reals^n$ by minimizing the non-linear least squares objective
\begin{align}
\label{lossreq}
\loss(\mC) 
= 
\frac{1}{2}
\norm[2]{\vy - G(\mC)}^2
\end{align}
with (early-stopped gradient) descent with a constant stepsize $\stepsize$ starting at a random initialization $\mC_0$ of the weights. The iterates are given by
\begin{align}
\label{eq:graddescent}
\mC_{\iter+1} =\mC_\iter- \stepsize\nabla\mathcal{L}(\mC_\iter).
\end{align}

In our warmup section on linear least squares we saw that the singular vectors and values of the matrix $\mJ$ determine the speed at which different components of the noisy signal $\vy$ are fitted by gradient descent. The main insight that enables us to extend this intuition to the nonlinear case is that the role of the matrix $\mJ$ can be replaced with the Jacobian of the generator, defined as $\mathcal{J}(\mtx{C}):=\frac{\partial}{\partial \mtx{C}} G(\mtx{C})$. Contrary to the linear least squares problem, however, in the nonlinear case, the Jacobian is not constant and changes across iterations. Nevertheless, we show that the eigen-values and vectors of the Jacobian at the random initialization govern the dynamics of fitting the network throughout the iterative updates. 

For the two-layer convolutional generator that we consider, the left eigenvectors of the Jacobian mapping can be well approximated by the trigonometric basis functions, defined below, throughout the updates. 
Interestingly, the form of these eigenvectors only depends on the network architecture and not the convolutional kernel used. 
\begin{definition}\label{basisfunc}
The trigonometric basis functions $\vw_1,\ldots,\vw_n$ are defined as
\begin{align}
\label{eq:trigonfunc}
[\vw_{i+1}]_j
=
\frac{1}{\sqrt{n}}
\begin{cases}
1 & i = 0 \\
\sqrt{2} \cos( 2\pi ji/n) & i = 1,\ldots,n/2-1 \\
(-1)^{j} & i =n/2 \\
\sqrt{2} \sin(2\pi ji/n) & i = n/2+1,\ldots,n-1
\end{cases}.
\end{align}

Figure~\ref{fig:circulantsingvectors} depicts some of these eigenvectors.
\end{definition}
In addition to the left eigenvectors we can also approximate the spectrum of the Jacobian throughout the updates by an associated filter/kernel that only depends on the original filter/kernel used in the network architecture. 
\begin{definition}[Dual kernel]\label{dualker} Associated with a kernel $\vu\in\R^n$ we define the dual kernel $\vsigma\in\R^n$ as 
\begin{align*}
\vsigma=\twonorm{\vu}\sqrt{\Bigg|\mtx{F}g\left(\frac{\vu\circledast\vu}{\twonorm{\vu}^2}\right)\Bigg|}
\quad\text{with}\quad g(t)=\frac{1}{2}\left(1-\frac{\cos^{-1}\left(t\right)}{\pi}\right)t.
\end{align*}
 Here, for two vectors $\vu,\vct{v}\in\R^n$, $\vu\circledast\vct{v}$ denotes their circular convolution, the scalar non-linearity $g$ is applied entrywise, and $\mF$ is the discrete Fourier transform matrix. 
\end{definition}
In Figure~\ref{fig:kernels}, we depict two commonly used interpolation kernels $\vu$, namely a triangular and a Gaussian kernel (recall that the standard upsampling operator is a convolution with a triangle), along with the induced dual kernel $\vsigma$. 
The figure shows that the dual kernel $\vsigma$ induced by these kernels has a few large values associated with the low frequency trigonometric functions, and the other values associated with high frequencies are very small. 
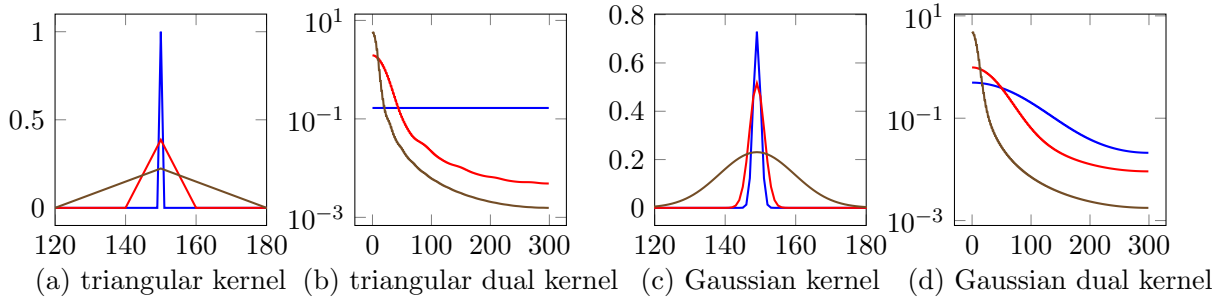
\begin{figure}
\begin{center}
\resizebox{\textwidth}{!}{
\begin{tikzpicture}
\begin{groupplot}[
y tick label style={/pgf/number format/.cd,fixed,precision=3},
scaled y ticks = false,
legend style={at={(1,1)}},
         title style={at={(0.5,-1.3cm)}, anchor=south}, group
         style={group size= 4 by 1, xlabels at=edge bottom, 
           horizontal sep=1.2cm, vertical sep=1.5cm},
         width=0.27*6.5in,height=0.27*6.5in]

\nextgroupplot[title = {(a) triangular kernel},xmin=120,xmax=180] 
\addplot +[mark=none,thick] table[x index=0,y index=1]{./dat/triangles_kernels.dat};
\addplot +[mark=none,thick] table[x index=0,y index=2]{./dat/triangles_kernels.dat};
\addplot +[mark=none,thick] table[x index=0,y index=3]{./dat/triangles_kernels.dat};
\nextgroupplot[title = {(b) triangular dual kernel},ymode=log] 
\addplot +[mark=none,thick] table[x index=0,y index=1]{./dat/triangles_spectra.dat};
\addplot +[mark=none,thick] table[x index=0,y index=2]{./dat/triangles_spectra.dat};
\addplot +[mark=none,thick] table[x index=0,y index=3]{./dat/triangles_spectra.dat};
\nextgroupplot[title = {(c) Gaussian kernel},xmin=120,xmax=180] 
\addplot +[mark=none,thick] table[x index=0,y index=1]{./dat/Gaussian_kernels.dat};
\addplot +[mark=none,thick] table[x index=0,y index=2]{./dat/Gaussian_kernels.dat};
\addplot +[mark=none,thick] table[x index=0,y index=3]{./dat/Gaussian_kernels.dat};
\nextgroupplot[title = {(d) Gaussian dual kernel},ymode=log] 
\addplot +[mark=none,thick] table[x index=0,y index=1]{./dat/Gaussian_spectra.dat};
\addplot +[mark=none,thick] table[x index=0,y index=2]{./dat/Gaussian_spectra.dat};
\addplot +[mark=none,,thick] table[x index=0,y index=3]{./dat/Gaussian_spectra.dat};
\end{groupplot}          
\end{tikzpicture}
}
\end{center}
\vspace{-0.5cm}
\caption{
\label{fig:kernels}
Triangular and Gaussian kernels and the weights associated to low-frequency trigonometric functions they induce, for a generator network of output dimension $n=300$. The wider the kernels are, the more the weights are concentrated towards the low-frequency components of the signal.
}
\end{figure}

With these definitions in place we are ready to state our main denoising result.
A denoising result requires a signal model---we assume a low-frequency signal $\vx$ that can be represented as a linear combination of the first $p$-trigonometric basis functions.
Note that this is a good model for a natural image since natural images are well approximated by low-frequency components. Specifically, Figure 4 in \citep{Simoncelli_Olshausen_2001} shows that the power spectrum of a natural image (i.e., the energy distribution by frequency) decays rapidly from low frequencies to high frequencies. 
In contrast, Gaussian noise has a flat power spectrum.

\begin{theorem}[Denoising with early stopping]\label{denoisthm}
Let $\vx \in \reals^n$ be a signal in the span of the first $\pp$ trigonometric basis functions, and consider a noisy observation 
\[
\vy = \vx + \noise,
\]
where $\noise$ is Gaussian noise with distribution $\mathcal{N}(\vct{0},\frac{\varsigma^2}{n}\mI)$, for some variance $\varsigma^2\geq 0$. 
To denoise this signal, we fit a two layer generator network $G(\mtx{C})=\relu(\mU \mC) \vv$ with,
\begin{align}
\label{kreq}
k
\ge 
C_{\vu}n/\epsilon^8,
\end{align}
channels, for some $\epsilon>0$, and 
with convolutional kernel $\vu$ of the convolutional operator $\mU$ and associated dual kernel $\vsigma$, to the noisy signal $\vy$.
Here, $C_{\vu}$ a constant only depending on the convolutional kernel $\vu$. 
Then, with probability at least $1-e^{-k^2}-\frac{1}{n^2}$, the reconstruction error obtained after $\iter = \log(1 - \sqrt{\pp/n}) / \log(1-\eta \sigma_{\pp+1}^2)$ iterations of gradient descent~\eqref{eq:graddescent} 
with step size $\eta\le \frac{1}{\infnorm{\mF \vu }^2}$ ($\mF \vu$ is the Fourier transform of $\vu$)
starting from 
$\mC_0$ with i.i.d.~$\mathcal{N}(0,\omega^2)$, entries, $\omega \propto \frac{\norm[2]{\vy}}{\sqrt{n} }$, is bounded by
\begin{align}
\label{convconc}
\twonorm{G(\mtx{C}_\iter)-\vct{x} }
&\leq
(1 - \stepsize \sigma_p^2)^\iter \twonorm{\vct{x}}
+ \varsigma \sqrt{\frac{2p}{n}} + \epsilon \norm[2]{\vy}.
\end{align}
\end{theorem}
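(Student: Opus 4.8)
The plan is to reduce the nonlinear gradient descent dynamics to the linear least squares picture of Proposition~\ref{prop:residual}, with the fixed matrix $\mtx{J}$ replaced by the Jacobian $\Jac(\mC)=\frac{\partial}{\partial\mC}G(\mC)$ frozen at the random initialization $\mC_0$. Writing $G(\mC)=\sum_{j=1}^k v_j\relu(\mU\vc_j)$ with $\vc_j$ the $j$-th column of $\mC$, differentiation gives $\frac{\partial G_a}{\partial C_{bj}}=v_j\ind{(\mU\vc_j)_a>0}\,U_{ab}$, and hence, using $v_j^2=1/k$,
\[
\Jac(\mC)\transp{\Jac(\mC)}=\big(\mU\transp{\mU}\big)\odot\Big(\tfrac1k\sum_{j=1}^k\ind{\mU\vc_j>0}\,\transp{\ind{\mU\vc_j>0}}\Big).
\]
Because $\mU$ is circulant, $\mU\transp{\mU}$ is circulant with first row the circular autocorrelation $\vu\circledast\vu$, so the correlation of the Gaussian pair $((\mU\vc)_a,(\mU\vc)_{a'})$ is $\rho_{a,a'}=(\vu\circledast\vu)_{a-a'}/\twonorm{\vu}^2$. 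Taking the expectation over the Gaussian columns, the indicator average converges to $\frac12\big(1-\cos^{-1}(\rho_{a,a'})/\pi\big)$, and multiplying by $(\mU\transp{\mU})_{a,a'}=\twonorm{\vu}^2\rho_{a,a'}$ yields $\twonorm{\vu}^2 g(\rho_{a,a'})$. Thus $\E[\Jac\transp{\Jac}]=\twonorm{\vu}^2 g\big(\vu\circledast\vu/\twonorm{\vu}^2\big)$ is a circulant matrix whose eigenvectors are exactly the trigonometric basis $\vw_1,\ldots,\vw_n$ of Definition~\ref{basisfunc} and whose eigenvalues are $\sigma_1^2,\ldots,\sigma_n^2$ with $\vsigma$ the dual kernel of Definition~\ref{dualker}.

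The technical work is then organized into three steps. First (concentration at initialization), I would show that $\Jac(\mC_0)\transp{\Jac(\mC_0)}$ concentrates around $\E[\Jac\transp{\Jac}]$ in operator norm; since this is an average of $k$ i.i.d.\ bounded rank-structured terms, a matrix concentration bound gives a deviation of order $1/\sqrt{k}$, so the eigenvectors stay $\epsilon$-aligned with the $\vw_i$ and the eigenvalues $\epsilon$-close to $\sigma_i^2$ once $k$ is large. Second (Jacobian stability / lazy training), I would show that along the entire trajectory $\mC_0,\mC_1,\ldots,\mC_\iter$ the weights move little, so that only a vanishing fraction of the ReLU activation patterns $\ind{\mU\vc_j>0}$ flip; consequently $\Jac(\mC_\iter)\transp{\Jac(\mC_\iter)}$ stays within $\epsilon$ of $\Jac(\mC_0)\transp{\Jac(\mC_0)}$ in operator norm for all $\iter$ up to the stopping time. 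The heavy over-parameterization $k\ge C_{\vu}n/\epsilon^8$ is what forces both deviations below the required tolerance. Third (reduction to the linear recursion), with an approximately constant Jacobian the residual $\vr_\iter=\vy-G(\mC_\iter)$ satisfies $\vr_{\iter+1}\approx(\mI-\stepsize\Jac\transp{\Jac})\vr_\iter$, so that, as in Proposition~\ref{prop:residual},
\[
G(\mC_\iter)\approx\sum_{i=1}^n\vw_i\innerprod{\vw_i}{\vy}\big(1-(1-\stepsize\sigma_i^2)^\iter\big),
\]
up to an accumulated error that the over-parameterization keeps below $\epsilon\twonorm{\vy}$.

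With this approximate identity in hand the final bound follows exactly as in the linear warmup of equation~\eqref{eq:linearspread}. Using $\vy=\vx+\noise$, orthonormality of the $\vw_i$, and that $\vx=\sum_{i\le\pp}\vw_i\innerprod{\vw_i}{\vx}$ lies in the first $\pp$ basis functions, one obtains
\[
G(\mC_\iter)-\vx\approx-\sum_{i\le\pp}\vw_i\innerprod{\vw_i}{\vx}(1-\stepsize\sigma_i^2)^\iter+\sum_{i=1}^n\vw_i\innerprod{\vw_i}{\noise}\big(1-(1-\stepsize\sigma_i^2)^\iter\big).
\]
The signal term is bounded by $(1-\stepsize\sigma_\pp^2)^\iter\twonorm{\vx}$ because $\sigma_i\ge\sigma_\pp$ for $i\le\pp$. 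For the noise term I would split the sum at $\pp$: for $i\le\pp$ the factor $(1-(1-\stepsize\sigma_i^2)^\iter)^2\le1$ contributes $\sum_{i\le\pp}\innerprod{\vw_i}{\noise}^2$, which concentrates to $\pp\varsigma^2/n$ since each $\innerprod{\vw_i}{\noise}\sim\mathcal N(0,\varsigma^2/n)$; for $i>\pp$ the choice $\iter=\log(1-\sqrt{\pp/n})/\log(1-\stepsize\sigma_{\pp+1}^2)$ makes $1-(1-\stepsize\sigma_{\pp+1}^2)^\iter=\sqrt{\pp/n}$, so each factor is at most $\pp/n$ (and only smaller for $i>\pp+1$ since $\sigma_i\le\sigma_{\pp+1}$ there), which against the remaining noise energy $\sum_{i>\pp}\innerprod{\vw_i}{\noise}^2\approx\varsigma^2$ contributes a further $\approx\pp\varsigma^2/n$. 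Summing, the noise contribution is at most $\varsigma\sqrt{2\pp/n}$ with high probability, and absorbing the approximation error into $\epsilon\twonorm{\vy}$ yields the stated bound~\eqref{convconc}.

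The main obstacle is the second step: controlling the Jacobian uniformly over \emph{all} iterations up to the stopping time. Because $\sigma_{\pp+1}$ can be extremely small, the stopping time $\iter$ may be large, and perturbation errors in $\Jac(\mC_\iter)$ can accumulate over the many updates while the residual is driven down. The argument must therefore simultaneously track that the iterates stay in a small neighborhood of $\mC_0$ (so few activations flip) and that the spectrum of the Jacobian remains aligned with $(\vw_i,\sigma_i)$ to within $\epsilon$ throughout; quantifying this trade-off is precisely what dictates the strong dependence $k\gtrsim n/\epsilon^8$.
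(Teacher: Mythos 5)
Your proposal follows essentially the same route as the paper: compute the expected Jacobian $\E[\Jac\transp{\Jac}]$ via the arccos kernel to identify the circulant structure with trigonometric eigenvectors and dual-kernel eigenvalues, establish matrix concentration at initialization and Jacobian stability along the trajectory (few ReLU sign flips under heavy over-parameterization), reduce to the linear residual recursion of Proposition~\ref{prop:residual}, and finish with the same signal/noise split and the Gaussian concentration giving $\varsigma\sqrt{2p/n}$. The only cosmetic difference is that the paper works with a fixed reference Jacobian $\mJ$ satisfying $\mJ\transp{\mJ}=\mtx{\Sigma}(\mU)$ exactly and bounds the deviation of the residuals by induction, rather than tracking approximate eigenvectors of the perturbed Jacobian, which sidesteps eigenvector-perturbation issues but implements the same idea.
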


Note that for this choice of stopping time, provided that dual kernel decays sharply around the $\pp$-th singular value, the first term in the bound~\eqref{convconc} (i.e., $(1 - \stepsize \sigma_p^2)^\iter \approx 0$) essentially vanishes and the error bound becomes $O(\varsigma \sqrt{\frac{\pp}{n}})$. 
The dual kernel decays sharply around the leading eigenvalues provided the kernel is for example a sufficiently wide triangular or Gaussian kernel (see Figure~\ref{fig:kernels}). 

This result demonstrates that when the noiseless signal  $\vct{x}$ is sufficiently structured (e.g.~contains only the $\pp$ lowest frequency components in the trigonometric basis) and the convolutional generator has sufficiently many channels, then early stopped gradient descent achieves a near optimal denoising performance proportional to $\varsigma\sqrt{\frac{\pp}{n}}$. This theorem is obtained from a more general result stated in the appendix which characterizes the evolution of the reconstruction error obtained by the convolutional generator. 

\begin{theorem}[Reconstruction dynamics of convolutional generators]\label{dynamicthm} Consider the setting and assumptions of Theorem \ref{denoisthm} but now with a fixed noise vector $\vct{z}$, and without an explicit stopping time. 
Then, for all iterates $\tau$ obeying $\tau\le \frac{100}{\eta \sigma_\pp^2}$ and provided that 
$k
\ge 
C_{\vu}n / \epsilon^8$, for some $\epsilon \in (0, \frac{\sigma_p}{\sigma_1} ]$, 
with probability at least $1-e^{-k^2}-\frac{1}{n^2}$,
the reconstruction error obeys 
\begin{align*}
\twonorm{G(\mC_\iter)-\vct{x}}
&\leq
(1 - \stepsize \sigma_p^2)^\iter \twonorm{\vct{x}}
+
\sqrt{ \sum_{i=1}^n ( (1 - \stepsize \sigma_i^2 )^\iter -1 )^2 \innerprod{\vw_i}{\vz}^2
} + \epsilon \norm[2]{\vy}.
\end{align*}
\end{theorem}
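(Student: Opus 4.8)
The plan is to mimic the linear warm-up of Proposition~\ref{prop:residual}, replacing the fixed design matrix $\mtx{J}$ by the Jacobian $\Jac(\mtx{C}):=\frac{\partial}{\partial\mtx{C}}G(\mtx{C})$ evaluated at the random initialization, and then showing that the nonlinear gradient-descent trajectory stays close to the one generated by this \emph{frozen} Jacobian. Writing $\vr_\iter:=G(\mtx{C}_\iter)-\vy$ for the residual, a first-order expansion of one step of~\eqref{eq:graddescent} gives $\vr_{\iter+1}=(\mtx{I}-\eta\,\Jac_\iter\transp{\Jac_\iter})\vr_\iter+\vct{e}_\iter$, where $\Jac_\iter:=\Jac(\mtx{C}_\iter)$ and $\vct{e}_\iter$ is a second-order remainder. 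If the Gram operator $\Jac_\iter\transp{\Jac_\iter}$ could be replaced throughout by the fixed operator $\mtx{M}:=\sum_{i=1}^n\sigma_i^2\,\vw_i\transp{\vw_i}$, unrolling the recursion as in Proposition~\ref{prop:residual} would give $G(\mtx{C}_\iter)-\vy=-\sum_i\vw_i\innerprod{\vw_i}{\vy}(1-\eta\sigma_i^2)^\iter+(\mtx{I}-\eta\mtx{M})^\iter G(\mtx{C}_0)$. Adding $\vy-\vx=\vz$, regrouping via $\vy=\vx+\vz$, and using $\innerprod{\vw_i}{\vx}=0$ for $i>p$ reproduces exactly the signal/noise decomposition of the overview; bounding its norm by the triangle inequality together with $0\le 1-\eta\sigma_i^2\le 1-\eta\sigma_p^2$ for $i\le p$ yields the first two terms of the claim. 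Everything else---the remainders $\vct{e}_\iter$, the drift $\Jac_\iter\transp{\Jac_\iter}-\mtx{M}$, and the initial output $G(\mtx{C}_0)$---must be shown to contribute at most $\epsilon\norm[2]{\vy}$.

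First I would pin down $\mtx{M}$ and prove $\Jac_0\transp{\Jac_0}\approx\mtx{M}$ at initialization. Since $G(\mtx{C})=\relu(\mU\mtx{C})\vv$, the Gram matrix $\Jac_0\transp{\Jac_0}$ is a channel-average of rank-one terms built from $\mU$ masked by the activation pattern $\ind{\mU\mtx{C}_0>0}$; taking the expectation over the Gaussian columns of $\mtx{C}_0$ produces the order-one arc-cosine (ReLU) kernel, which is precisely the scalar map $g$ of Definition~\ref{dualker} applied to the normalized filter autocorrelation $\vu\circledast\vu/\twonorm{\vu}^2$. Because $\mU$ is circulant this expectation is itself circulant, hence diagonalized by the trigonometric basis of Definition~\ref{basisfunc} with eigenvalues exactly $\sigma_i^2=\twonorm{\vu}^2|[\mF g(\vu\circledast\vu/\twonorm{\vu}^2)]_i|$; this is the origin of both definitions and explains why the eigenvectors are kernel-independent while the eigenvalues are not. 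Concentration of the empirical Gram matrix around $\mtx{M}$ then follows from averaging over the $k$ independent channels at rate $\sim 1/\sqrt{k}$, via a covering plus matrix-concentration argument; requiring this, and the downstream approximations, to be of size $\epsilon\norm[2]{\vy}$ is what forces the over-parameterization $k\ge C_{\vu}n/\epsilon^8$.

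The hard part is controlling the two remaining error sources over the \emph{entire} horizon $\iter\le 100/(\eta\sigma_p^2)$, which is long because $\sigma_p$ is small. I would run an induction showing that, as long as $\Jac_\iter\transp{\Jac_\iter}$ stays within $O(\epsilon)$ of $\mtx{M}$, the residual contracts on the leading modes, so that the cumulative weight movement $\sum_\iter\twonorm{\mtx{C}_{\iter+1}-\mtx{C}_\iter}=\eta\sum_\iter\twonorm{\transp{\Jac_\iter}\vr_\iter}$ is bounded by $O(\norm[2]{\vy}/\sigma_p)$; a movement this small flips only a vanishing fraction of the $nk$ ReLU neurons, which keeps $\Jac_\iter$, and hence its Gram, close to $\Jac_0$ and closes the induction. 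The two genuinely delicate points are (i) the non-smoothness of the ReLU, which forces one to bound the \emph{number} of sign changes of $\mU\mtx{C}_\iter$ rather than to differentiate through the activation, and (ii) the accumulation of the per-step errors $\vct{e}_\iter$ and the drift across up to $100/(\eta\sigma_p^2)$ steps, each of which must be small enough that the resulting geometric series stays below $\epsilon\norm[2]{\vy}$---this compounding of many small errors is precisely what the strong over-parameterization is purchased to absorb. Finally, the initial-output term is handled by noting that $\E[G(\mtx{C}_0)]=\vct{0}$ by the sign symmetry of $\vv$ and that $G(\mtx{C}_0)$ inherits the low-pass spectrum of $\mU$, so its energy concentrates on the low-frequency modes that $(\mtx{I}-\eta\mtx{M})^\iter$ damps; combining the three controlled error sources with the decomposition of the first paragraph gives the stated bound.
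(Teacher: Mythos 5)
Your proposal follows essentially the same route as the paper's proof: linearize gradient descent around the random initialization using a reference Jacobian whose Gram matrix is the expected (arc-cosine) kernel $\mtx{\Sigma}(\mU)$, observe that this kernel is circulant and hence diagonalized by the trigonometric basis with eigenvalues given by the dual kernel, control the drift of the Jacobian by counting ReLU sign flips within a small ball around initialization, and run an induction over the horizon to show the nonlinear residual tracks the linear one up to an accumulated error of order $\epsilon\twonorm{\vy}$ (the paper's Theorems~\ref{thm:maintechnical} and~\ref{lem:rk} together with Lemmas~\ref{lem:ConcentrationLemma}, \ref{lem:jacobianperturbation}, and \ref{lem:InitialResidual}). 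The only cosmetic differences are that the paper disposes of the $G(\mC_0)$ term simply by choosing the initialization scale $\omega$ small (rather than via any spectral/low-pass argument) and bounds the weight movement through $\norm[2]{\pinv{\mJ}\vr_0}\le\norm[2]{\vr_0}/\sigma_n$ rather than your sharper-looking $O(\norm[2]{\vy}/\sigma_p)$ horizon-dependent estimate.
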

This theorem characterizes the reconstruction dynamics of convolutional generators throughout the updates. In particular, it explains why convolutional generators fit a natural signal significantly faster than noise, and thus early stopping enables denoising and regularization. To see this, note that as mentioned previously each of the basis functions $\vw_i$ has a (positive) weight $\sigma_i > 0$ associated with it that only depends on the convolutional kernel used in the architecture (through the definition of the dual kernel). These weights determine how fast the different components of the noisy signal are fitted by gradient descent. As we demonstrated earlier in Figure~\ref{fig:kernels}, for typical convolutional filters those weights decay very quickly from low to high frequency basis functions. As a result, when the signal $\vx$ is sufficiently structured (i.e.~lies in the range of the $p$ trigonometric functions with lowest frequencies), after a few iterations most of the signal is fitted (i.e., $(1 - \stepsize \sigma_p^2)^{\iter}$ is small), while most of the noise has not been fitted (i.e., $(( 1 - \stepsize \sigma_i^2 )^\iter - 1)^2 \approx 0$ for $i= p+1,\ldots,n$).
Thus, early stopping achieves denoising.

The proof, provided in the appendix, is based on associating the following linear least-squares problem with the non-linear least squares problem~\eqref{lossreq}:
\begin{align*}
\loss_{\mathrm{lin}}(\vc)
=
\frac{1}{2}
\norm[2]{G(\mC_0) + \mJ (\vc - \mathrm{vect}(\mC_0)) - \vy}^2.
\end{align*}
Here, $\mathrm{vect}(\mC_0) \in \reals^{kn}$ is a vectorized version of the matrix $\mC_0$, and $\mJ \in \reals^{n \times n k}$ approximates the Jacobian of $G(\mC)$ around the initialization $\mC_0$. The linear least-squares problem is obtained by linearizing the non-linear problem around the initialization. 
In the proof we show that if i) the network is sufficiently over-parameterized and if ii) the network is randomly initialized, then the linearized problem is a good approximation of the non-linear problem.
We then show that the singular values of the Jacobian $\mJ$ are the trigonometric basis functions, and the singular values are the values of the dual kernel.
The proof is then concluded by characterizing the trajectory of gradient descent applied to the linear least-squares problem above.


%
%
%
%
%
%
%
%
%
%
%
%
%
%


\subsection{
Multilayer networks and moderate over-parameterization}

Our theoretical results rely on the finding that for over-parameterized single hidden-layer networks, the leading singular vectors of the Jacobian are the trigonometric functions throughout all iterations, and that the associated weights (i.e., the singular values) are concentrated towards the low frequency components.
In this regime, the dynamics are well approximated by a linear model. This general strategy can be extended to multi-layer networks, however if the network is not highly over-parameterized, an associated linear model might not be a good approximation. 

In order to understand whether our finding of low-frequency components being being fitted faster than high-frequency ones carries over to multi-layer networks and to the moderately over-parameterized regime, in this section, we study muli-layer networks in the moderately overparameterized regime numerically. We show that for a multilayer network, the spectrum of the Jacobian is concentrated towards singular vectors/functions that are similar to the low-frequency components. We also show that throughout training those functions do vary, albeit the low frequency components do not change significantly and the spectrum remains concentrated towards the low frequency components.
This shows that the implications of our theory continue to apply to muli-layer networks.

In more detail, we take a standard one dimensional deep decoder with $d=4$ layers with output in $\reals^{512}$ and with $k=64$ channels in each layer. 
Recall that the standard one dimensional decoder obtains layer $i+1$ from layer $i$ by linearly combining the channels of layer $i$ with learnable coefficients followed by linear upsampling (which involves convolution with the triangular kernel $[1/2,1,1/2]$). 
The number of parameters is $d \times k^2 = 32\cdot 512$, so the network is over-parameterized by a factor of $32$. 
In Figure~\ref{fig:singvectneural}, we display the singular values as well as the leading singular vectors/function of the Jacobian at initialization ($t=1$) and after $t=50$ and $t=3500$ iterations of gradient descent. As can be seen the leading singular vectors ($s=1-5$) are close to the trigonometric basis functions and do not change dramatically throughout training. 
The singular vectors corresponding to increasingly smaller singular values ($s=20,50,100,150$) contain increasingly higher frequency components but are far from the high-frequency trigonometric basis functions. 

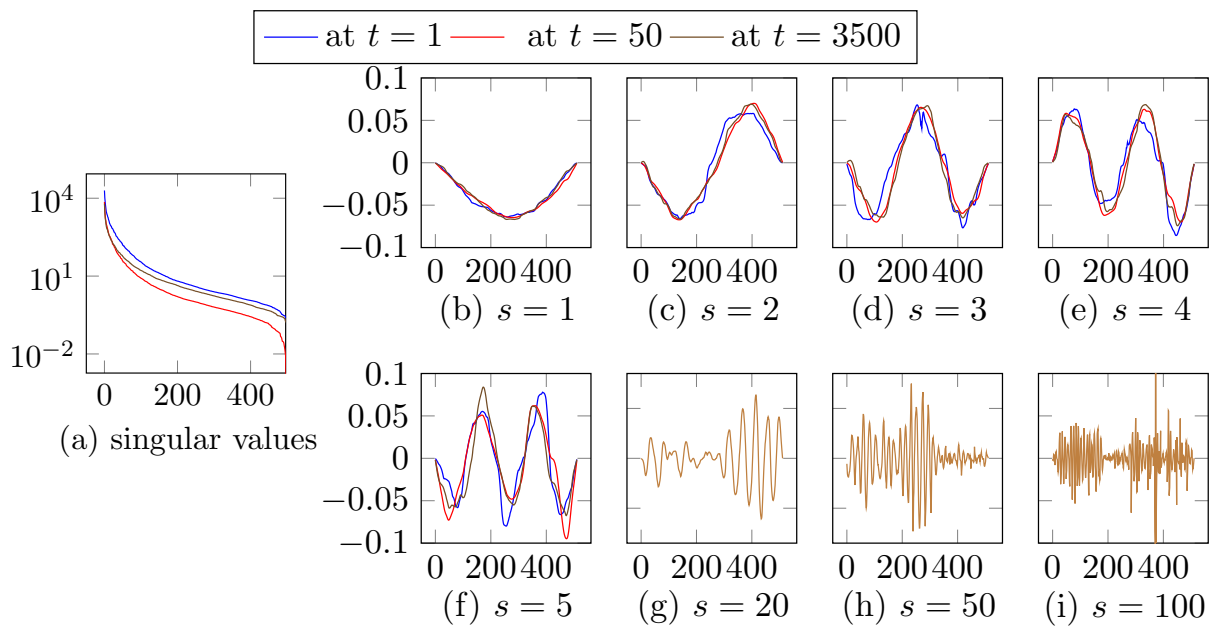
\begin{figure}
\begin{center}
\resizebox{\textwidth}{!}{
\begin{tikzpicture}

\begin{scope}[scale=0.93,xshift = -4cm,yshift=-1.5cm]

\begin{groupplot}[
y tick label style={/pgf/number format/.cd,fixed,precision=3},
scaled y ticks = false,
legend columns=3, 
        legend style={
                    at={(1,1.3)},
                   /tikz/column 3/.style={
                column sep=5pt,
            },
        },
         title style={at={(1.0cm,-0.55cm)}, anchor=south}, group
         style={group size= 5 by 2, xlabels at=edge bottom, ylabels at=edge left,yticklabels at=edge left,
           horizontal sep=0.4cm, vertical sep=1.4cm}, xlabel={}, ylabel={},
         width=0.24*6.5in,height=0.24*6.5in,
         ]

\nextgroupplot[title={(a) singular values},
ymode=log,title style={at={(1.0cm,-2.2cm)}, anchor=south},         xmax=496,] 
	\addplot +[mark=none] table[x index=0,y index=1]{./dat/JacobianSingularVal.dat};
	\addplot +[mark=none] table[x index=0,y index=2]{./dat/JacobianSingularVal.dat};
	\addplot +[mark=none] table[x index=0,y index=3]{./dat/JacobianSingularVal.dat};
	
\end{groupplot}           

\end{scope}


\begin{groupplot}[
y tick label style={/pgf/number format/.cd,fixed,precision=3},
scaled y ticks = false,
legend columns=3, 
        legend style={
                    at={(0.5,1.4)},
                   /tikz/column 3/.style={
                column sep=5pt,
            },
        },
         title style={at={(0.8cm,-1.25cm)}, anchor=south}, group
         style={group size= 4 by 2, xlabels at=edge bottom, ylabels at=edge left,yticklabels at=edge left,
           horizontal sep=0.4cm, vertical sep=1.4cm}, xlabel={}, ylabel={},
         width=0.21*6.5in,height=0.21*6.5in,
         ymax=0.1,ymin=-0.1,
         ]
	\nextgroupplot[title={(b) $s=1$}]
	\addplot +[mark=none] table[x index=0,y index=1]{./dat/JacobianTop6.dat};
	\addplot +[mark=none] table[x index=0,y index=2]{./dat/JacobianTop6.dat};
	\addplot +[mark=none] table[x index=0,y index=3]{./dat/JacobianTop6.dat};

	\nextgroupplot[title={(c) $s=2$}] 
	\addplot +[mark=none] table[x index=0,y index=4]{./dat/JacobianTop6.dat};
	\addplot +[mark=none] table[x index=0,y index=5]{./dat/JacobianTop6.dat};
	\addplot +[mark=none] table[x index=0,y index=6]{./dat/JacobianTop6.dat};

	\nextgroupplot[title={(d) $s=3$}] 
	\addplot +[mark=none] table[x index=0,y index=7]{./dat/JacobianTop6.dat};
	\addlegendentry{at $t=1$}
	\addplot +[mark=none] table[x index=0,y index=8]{./dat/JacobianTop6.dat};
		\addlegendentry{at $t=50$}

	\addplot +[mark=none] table[x index=0,y index=9]{./dat/JacobianTop6.dat};
	\addlegendentry{at $t=3500$}
	
	\nextgroupplot[title={(e) $s=4$}] 
	\addplot +[mark=none] table[x index=0,y index=10]{./dat/JacobianTop6.dat};
	\addplot +[mark=none] table[x index=0,y index=11]{./dat/JacobianTop6.dat};
	\addplot +[mark=none] table[x index=0,y index=12]{./dat/JacobianTop6.dat};
	
	\nextgroupplot[title={(f) $s=5$}] 
	\addplot +[mark=none] table[x index=0,y index=13]{./dat/JacobianTop6.dat};
	\addplot +[mark=none] table[x index=0,y index=14]{./dat/JacobianTop6.dat};
	\addplot +[mark=none] table[x index=0,y index=15]{./dat/JacobianTop6.dat};

	\nextgroupplot[title={(g) $s=20$},ymax=0.17,ymin=-0.17] 
	\addplot +[brown,mark=none] table[x index=0,y index=2]{./dat/JacobianOther6.dat};
	
	\nextgroupplot[title={(h) $s=50$},ymax=0.17,ymin=-0.17]	
	\addplot +[brown,mark=none] table[x index=0,y index=4]{./dat/JacobianOther6.dat};

	\nextgroupplot[title={(i) $s=100$},ymax=0.2,ymin=-0.2]	
	\addplot +[brown,mark=none] table[x index=0,y index=6]{./dat/JacobianOther6.dat};

	
\end{groupplot}           
\end{tikzpicture}
}
\end{center}

\caption{
\label{fig:singvectneural}
The Singular value distribution of the Jacobian of a four-layer deep decoder after $t=50$ and $t=3500$ iterations of gradient descent (panel {\bf (a)}), 
along with the corresponding singular vectors/function {\bf (b-i)}. The singular functions corresponding to the large singular vectors are close to the low-frequency Fourier modes and do not change significantly through training. 
}
\end{figure}
\section{Related literature}
As mentioned before, the DIP paper~\citep{ulyanov_deep_2018} was the first to show that over-parameterized convolutional networks enable solving denoising, inpainting, and super-resolution problems well even without any training data. 
Subsequently, the paper~\citep{heckel_deep_2019} proposed a much simpler image generating network, termed the deep decoder.
The papers~\citep{veen_compressed_2018,heckel_regularizing_2019,jagatap_algorithmic_2019,Bostan_Heckel_Chen_Kellman_Waller_2020} have shown that the DIP and the deep decoder also enable solving or regularizing compressive sensing problems and other inverse problems.

Since the convolutional generators considered here are image-generating deep networks, our work is also related to methods that rely on trained deep image models. Deep learning based methods are either trained end-to-end for tasks ranging from compression~\citep{toderici_variable_2015,agustsson_soft_2017,theis_lossy_2017,burger_image_2012,zhang_beyond_2017} to denoising \citep{burger_image_2012,zhang_beyond_2017}, or are based on learning a generative image model (by training an  autoencoder or GAN~\citep{hinton_reducing_2006,goodfellow_generative_2014}) and then using the resulting model to solve inverse problems such as compressed sensing \citep{bora_compressed_2017, hand_global_2017}, denoising~\citep{heckel_deep_2018}, or phase retrieval~\citep{handleongvoroninski2019}, by minimizing an associated loss. 
In contrast to the method studied here, where the optimization is over the weights of the network, in all the aforementioned methods, the weights are adjusted only during training and then are fixed upon solving the inverse problem. 

A large body of work focuses on understanding the optimization landscape of the simple nonlinearities or neural networks \citep{WF, soltanolkotabi2017learning, brutzkus2017globally, zhong2017recovery,oymak2018stochastic,fu2018local,tu2015low} when the labels are created according to a planted model. Our proofs rely on showing that the dynamics of gradient descent on an over-parameterized network can be related to that of a linear network or a kernel problem. This proof technique has been utilized in a variety of recent publication~\citep{soltanolkotabi2018theoretical, venturi2018spurious, du_gradient_2018,Oymak:2018aa,oymak_towards_2019,arora_fine-grained_2019, oymak_generalization_2019}. Two recent publication have used this proof technique to show that functions are learned at different rates: \citet{basri_convergence_2019} have shown that functions of different frequencies are learned at different speeds, and~\citet{arora_fine-grained_2019} has provided a theoretical explanation of the empirical observation that a simple 2-layer network fits random labels slower than actual labels in the context of classification. 
A recent publication \citep{li_gradient_2019} focuses on demonstrating how early stopping leads to robust classification in the presence of label corruption under a cluster model for the input data. 
Neither of the aforementioned publication however, does address denoising in a regression setting or fitting convolutional generators of the form studied in this paper.


\section*{Code}
Code to reproduce the experiments is available at 
\url{https://github.com/MLI-lab/overparameterized_convolutional_generators}.
\section*{Acknowledgements}
R. Heckel is partially supported by NSF award IIS-1816986, acknowledges support of the NVIDIA Corporation in form of a GPU, and would like to thank Tobit Klug for proofreading a previous version of this manuscript. M. Soltanolkotabi is supported by the Packard Fellowship in Science and Engineering, a Sloan Research
Fellowship in Mathematics, an NSF-CAREER under award \#1846369, the Air Force Office of Scientific
Research Young Investigator Program (AFOSR-YIP) under award \#FA9550-18-1-0078, an NSF-CIF award
\#1813877, DARPA under the Learning with Less Labels (LwLL) program, and a Google faculty research award.

\printbibliography


\appendix

\section{\label{sec:denoisingperformance}Denoising performance of untrained convolutional generators}

In this section, we provide further details on the denoising performance of deep neural networks. We compare the denoising performance of four methods: 
i) the BM3D algorithm~\citep{dabov_image_2007} as a standard baseline for denoising,
ii) the deep image prior~\citep{ulyanov_deep_2018}, which is a U-net like  encoder-decoder convolutional architecture applied with exactly the parameters as proposed for denoising in the paper~\citep{ulyanov_deep_2018}, and early stopped after 1900 iterations, again with the same stopping time as proposed in the original paper,
iii) an under-parameterized deep decoder with five layers and $k=128$ channels in each layers, trained for 3000 iterations (which is close to convergence),
iv) an over-parameterized deep decoder with five layers and $k=512$ channels in each layer, early stopped at 1500 iterations. 
We compared the performance of those four methods on denoising 100 randomly chosen images from the ImageNet validation set. The code to reproduce the results contains a list of the images we considered.
Each image has $512\times 512$ pixels and three color channels. We added the same random Gaussian noise to each color channel, because we are interested in evaluating the performance by imposing structural assumptions on the image.
Table~\ref{tab:comparisondenoising} below show that the overparameterized deep decoder with early stopping performs best for this task.

\begin{table}[h!]
\begin{center}
\begin{tabular}{ll*{10}{c}r}
Noise level & 20.60\\
Deep decoder with k=512 and early stopping & 28.08dB\\
deep decoder with k=128 without early stopping & 27.84dB \\
DIP & 25.76 dB \\
BM3D & 25.52 dB 
\end{tabular}
\end{center}
\caption{
\label{tab:comparisondenoising}
Average performance for denoising images color image with the same Gaussian noise added to each color channel with BM3D and un-trained convolutional neural networks. 
The overparameterized deep decoder performs best for this task.
}
\end{table}


\section{\label{sec:numbias}A numerical study of the implicit bias of convolutional networks}

In this section, we empirically demonstrate that convolutions with \emph{fixed} convolutional kernels are critical for convolutional generators to fit natural images faster than noise. Towards this goal, we study numerically the following four closely related architectural choices, which differ in the upsampling/no-upsampling and convolutional operations which generate the activations in the $(i+1)$-st layer, $\mB_{i+1}$, from the activations in the $i$-th layer, $\mB_i$:
\begin{itemize}
\item[i)]
{\bf Bilinear upsampling and linear combinations.}
Layer $i+1$ is obtained by linearly combining the channels of layer $i$ with learnable coefficients (i.e., performing one-times-one convolutions), followed by bi-linear upsampling. 
This is the deep decoder architecture from~\citep{heckel_deep_2018}. 
\item[ii)] 
{\bf Fixed interpolation kernels and linear combinations.}
Layer $i+1$ is obtained by linearly combining the channels of layer $i$ with learnable coefficients followed by convolving each channel with the same 4x4 interpolation kernel that is used in the linear upsampling operator.
\item[iii)] 
{\bf Parameterized convolutions:}
Layer $i+1$ is obtained from layer $i$ though a convolutional layer. 
\item[iv)] 
{\bf Deconvolutional network:}
Layer $i+1$ is obtained from layer $i$ though a deconvolution layer. This is essentially the DC-GAN~\citep{radford_unsupervised_2015} generator architecture. 
\end{itemize}
To emphasize that architectures i)-iv) are structurally very similar operations, we recall that each operation consists only of upsampling and convolutional operations. 
Let $\mT(\vc)\colon \reals^{n}\to \reals^{n}$ be the convolutional operator with kernel $\vc$, let $\vu$ the linear upsampling kernel (equal to $\vu=[0.5,1,0.5]$ in the one-dimensional case), and let $\mU \colon \reals^{n} \to \reals^{2n}$ be an upsampling operator, that in the one dimensional case transforms $[x_1,x_2,\ldots, x_n]$ to $[x_1,0,x_2,0,\ldots,x_n,0]$. In each of the architectures i)-iv), the $\ell$-th channel of layer $i+1$ is obtained from the channels in the $i$-th layer as:
$
\vb_{i+1,\ell}
= 
\relu \left( \sum_{j = 1}^k   \mM(\vc_{ij\ell}) \vb_i \right),
$
where the linear operator $\mM$ is defined as follows for the four architectures
\[
\text{
i) $\mM(c) = c  \mT(\vu) \mU$,\quad
ii) $\mM(c) = c \mT(\vu)$,\quad
iii) $\mM(\vc) = \mT(\vc)$,\quad
iv) $\mM(\vc) = \mT(\vc)\mU$.
}
\]
The coefficients associated with the $i$-th layer are given by $\mC_i = \{\vc_{ij\ell}\}$, and all coefficients of the networks are $\mC = \{ \vc_{ij\ell} \}$.
Note that here, the coefficients or parameters of the networks are the weights and not the input to the network.

\subsection{Demonstrating implicit bias of convolutional generators}

We next show that convolutional generators with \emph{fixed} convolutional operations fit natural or simple images significantly faster than complex images or noise. 
Throughout this section, for each image or signal $\vx^\ast$ we fit weights by minimizing the loss
\[
\loss(\param) = \norm[2]{\prior(\param) - \img^\ast }^2
\]
with respect to the network parameters $\param$ using plain gradient descent with a fixed stepsize. 

In order to exemplify the effect, we fit the phantom MRI image as well as noise for each of the architectures above for a $5$-layer network. We choose the number of channels, $k$, such that the over-parameterization factor (i.e., the ratio of number of parameters of the network over the output dimensionality) is $1,4$, and $16$, respectively. 
The results in Figure~\ref{fig:phantormMRI} show that for architectures i) and ii) involving \emph{fixed convolutional operations}, gradient descent requires more than one order of magnitude fewer iterations to obtain a good fit of the phantom MRI image relative to noise. 
For architectures iii) and iv), with trainable convolutional filters, we see a smaller effect, but the effect essentially vanishes when the network is highly over-parameterized.

This effect continues to exist for natural images in general, as demonstrated by Figure~\ref{fig:forimagenet} which depicts the average and standard deviation of the loss curves of 100 randomly chosen images from the imagenet dataset. 

We also note that the effect continues to exist in the sense that highly structured images with a large number of discontinuities are difficult to fit. An example is the checkerboard image in which each pixel alternates between 1 and 0; this image leads to the same loss curves as noise.

\begin{figure}[t]
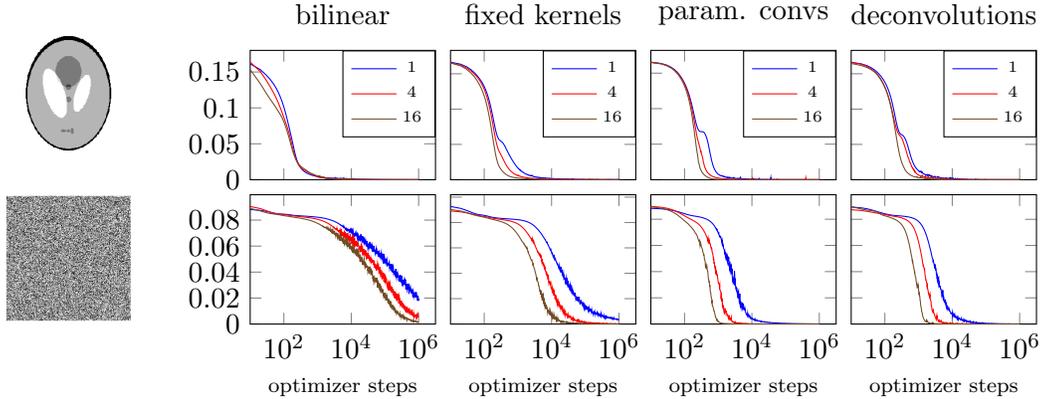

\begin{center}
\begin{tikzpicture}

\node at (-2.5,-1.1) {\includegraphics[width = 0.12\textwidth]{./dat/noise.png} };
\node at (-2.5,1.1) {\includegraphics[width = 0.12\textwidth]{./dat/mri.png} };

\begin{groupplot}[
yticklabel style={/pgf/number format/.cd,fixed,precision=3},
y label style={at={(axis description cs:0.25,.5)}},
legend style={at={(1,1)} , font=\tiny ,
/tikz/every even column/.append style={column sep=-0.1cm}
 },
         group
         style={group size= 4 by 2, xlabels at=edge bottom,
         yticklabels at=edge left,
         xticklabels at=edge bottom,
         horizontal sep=0.2cm, vertical sep=0.2cm,
         }, 
           xlabel = {optimizer steps},
         width=0.245*6.5in,height=0.2*6.5in, ymin=0,xmin=10,
         ]

\nextgroupplot[title = {bilinear},xmode=log]
	
	\addplot +[mark=none] table[x index=1,y index=3]{./dat/results_img.dat};
	\addlegendentry{1}

	\addplot +[mark=none] table[x index=1,y index=5]{./dat/results_img.dat};
	\addlegendentry{4}	
	\addplot +[mark=none] table[x index=1,y index=7]{./dat/results_img.dat};
	\addlegendentry{16}
	
\nextgroupplot[title = {fixed kernels},xmode=log]
	
	\addplot +[mark=none] table[x index=1,y index=2]{./dat/results_fixedimg.dat};
	\addlegendentry{1}

	\addplot +[mark=none] table[x index=1,y index=4]{./dat/results_fixedimg.dat};
	\addlegendentry{4}	
	\addplot +[mark=none] table[x index=1,y index=6]{./dat/results_fixedimg.dat};
	\addlegendentry{16}	
\nextgroupplot[title = {param. convs},xmode=log]
	
	\addplot +[mark=none] table[x index=1,y index=2]{./dat/results_learnedimg.dat};
	\addlegendentry{1}

	\addplot +[mark=none] table[x index=1,y index=4]{./dat/results_learnedimg.dat};
	\addlegendentry{4}	
	\addplot +[mark=none] table[x index=1,y index=6]{./dat/results_learnedimg.dat};
	\addlegendentry{16}	
	
\nextgroupplot[title = {deconvolutions},xmode=log]
	
	\addplot +[mark=none] table[x index=1,y index=2]{./dat/results_deconv_img.dat};
	\addlegendentry{1}

	\addplot +[mark=none] table[x index=1,y index=4]{./dat/results_deconv_img.dat};
	\addlegendentry{4}	
	\addplot +[mark=none] table[x index=1,y index=6]{./dat/results_deconv_img.dat};
	\addlegendentry{16}	
	
\nextgroupplot[xmode=log, xlabel = {\scriptsize optimizer steps},scaled y ticks = false]
	
	\addplot +[mark=none] table[x index=1,y index=2]{./dat/results_noise.dat};
	\addplot +[mark=none] table[x index=1,y index=4]{./dat/results_noise.dat};
	\addplot +[mark=none] table[x index=1,y index=6]{./dat/results_noise.dat};
\nextgroupplot[xmode=log, xlabel = {\scriptsize optimizer steps}]
	
	\addplot +[mark=none] table[x index=1,y index=2]{./dat/results_fixednoise.dat};
	\addplot +[mark=none] table[x index=1,y index=4]{./dat/results_fixednoise.dat};
	\addplot +[mark=none] table[x index=1,y index=6]{./dat/results_fixednoise.dat};
	
\nextgroupplot[xmode=log, xlabel = {\scriptsize optimizer steps},scaled y ticks = false]
	
	\addplot +[mark=none] table[x index=1,y index=2]{./dat/results_learnednoise.dat};
	\addplot +[mark=none] table[x index=1,y index=4]{./dat/results_learnednoise.dat};
	\addplot +[mark=none] table[x index=1,y index=6]{./dat/results_learnednoise.dat};
	
\nextgroupplot[xmode=log, xlabel = {\scriptsize optimizer steps},scaled y ticks = false]
	
	\addplot +[mark=none] table[x index=1,y index=2]{./dat/results_deconv_noise.dat};
	\addplot +[mark=none] table[x index=1,y index=4]{./dat/results_deconv_noise.dat};
	\addplot +[mark=none] table[x index=1,y index=6]{./dat/results_deconv_noise.dat};
      
\end{groupplot}          
\end{tikzpicture}
\end{center}
\vspace{-0.5cm}
\caption{
\label{fig:phantormMRI}
Fitting the phantom MRI and noise with different architectures of depth $d=5$, for different number of over-parameterization factors (1,4, and 16). 
Gradient descent on convolutional generators involving fixed convolutional matrixes fit an image significantly faster than noise.
}
\end{figure}


In our next experiment, we highlight that the distance between final and initial network weights is a key feature that determines the difference of fitting a natural image and noise.
Towards this goal, we again fit the phantom MRI image and noise for the architecture i) and an over-parameterization factor of $4$ and record, for each layer $i$ the relative distance 
$\|\mC_i^{(t)} - \mC_i^{(0)} \| / \| \mC_i^{(0)} \|$, 
where $\mC_i^{(0)}$ are the weights at initialization (we initialize randomly), and $\mC_i^{(t)}$ are the weights at the optimizer step $t$. 
The results, plotted in Figure~\ref{fig:decnw}, show that to fit the noise, the weights have to change significantly, while for fitting a natural image they only change slightly.


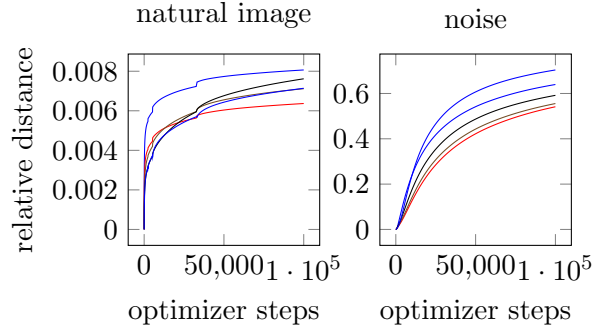
\begin{figure}[t]
\begin{center}
\begin{tikzpicture}


\begin{groupplot}[
y tick label style={/pgf/number format/.cd,fixed,precision=3},
scaled y ticks = false,
         scaled x ticks = false,
legend style={at={(1,1)}},
         group
         style={group size= 2 by 2, 
         horizontal sep=0.8cm, vertical sep=0.8cm,
         }, 
         xlabel = {optimizer steps},
         width=0.25*6.5in,height=0.25*6.5in,
         ]

\nextgroupplot[title = {natural image},  ylabel = {relative distance}]
	\addplot +[mark=none] table[x index=0,y index=2]{./dat/weightchange_signal.dat};
	\addplot +[mark=none] table[x index=0,y index=3]{./dat/weightchange_signal.dat};
	\addplot +[mark=none] table[x index=0,y index=4]{./dat/weightchange_signal.dat};
	\addplot +[mark=none] table[x index=0,y index=5]{./dat/weightchange_signal.dat};	\addplot +[mark=none] table[x index=0,y index=6]{./dat/weightchange_signal.dat};		
\nextgroupplot[title = {noise}]
	\addplot +[mark=none] table[x index=0,y index=2]{./dat/weightchange_noise.dat};
	\addplot +[mark=none] table[x index=0,y index=3]{./dat/weightchange_noise.dat};
	\addplot +[mark=none] table[x index=0,y index=4]{./dat/weightchange_noise.dat};
	\addplot +[mark=none] table[x index=0,y index=5]{./dat/weightchange_noise.dat};
	\addplot +[mark=none] table[x index=0,y index=6]{./dat/weightchange_noise.dat};		

\end{groupplot}          
\end{tikzpicture}
\end{center}
\vspace{-0.6cm}
\caption{
\label{fig:decnw}
The relative distances of the weights in each layer from its random initialization.
The weights need to change significantly more to fit the noise, compared to an image, thus a natural image lies closer to a random initialization than noise.
}
\end{figure}






\begin{figure}[t]
\begin{center}
\begin{tikzpicture}
\begin{groupplot}[
y tick label style={/pgf/number format/.cd,fixed,precision=3},
scaled y ticks = false,
group style={group size=2 by 1, 
horizontal sep=2cm }, width=0.34\textwidth, height=0.25\textwidth,
    xticklabel style={/pgf/number format/fixed, /pgf/number
      format/precision=3}, xmin = 0,ymin = 0]
	
\nextgroupplot[
xlabel = {\scriptsize optimizer steps},
    title={(a) fitting images}, 
    no markers,
    enlarge x limits=false,
    xmode=log,
    yticklabel style={/pgf/number format/fixed,/pgf/number format/precision=3 },]
\errorband{./dat/imgnetcurve_av.dat}{0}{1}{2}{DarkBlue}{0.4}

\nextgroupplot[
xlabel = {\scriptsize optimizer steps},
    title={(b) fitting pure noise}, 
    no markers,
    enlarge x limits=false,
    xmode=log,
    yticklabel style={/pgf/number format/fixed,/pgf/number format/precision=3 }]
\errorband{./dat/imgnetcurve_noise_av.dat}{0}{1}{2}{DarkBlue}{0.4}

\end{groupplot}
\end{tikzpicture}
\end{center}
\vspace{-0.7cm}
\caption{\label{fig:forimagenet} 
The loss curves for architecture i), a convolutional generator with linear upsampling operations, averaged over 100 $3 \times 512\times 512$ (color) images from the Imagenet dataset. The error band is one standard deviation. 
Convolutional generators fit natural images significantly faster than noise.}
\end{figure}
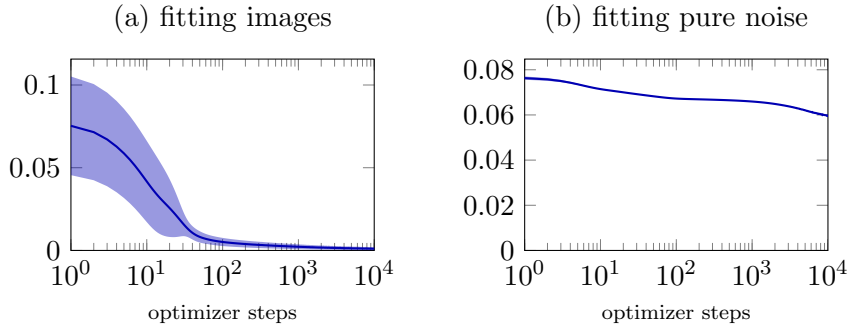

\begin{figure}
\begin{center}
\begin{tikzpicture}


\begin{groupplot}[
y label style={at={(axis description cs:0.25,.5)}},
legend style={at={(1,1)}},
         group
         style={group size= 4 by 1, xlabels at=edge bottom,
         yticklabels at=edge left,
         xticklabels at=edge bottom,
         horizontal sep=0.6cm, vertical sep=0.2cm,
         }, 
         width=0.26*6.5in,height=0.22*6.5in, ymin=0,xmin=1,
         ]

%
\nextgroupplot[title = {(a) step function}]
	\addplot +[mark=none] table[x index=0,y index=1]{./dat/oned_step_sig.dat};
	\addplot +[mark=none] table[x index=0,y index=2]{./dat/oned_step_sig.dat};
	
\nextgroupplot[ymode=log,xmode=log,xlabel = {optimizer steps},title = {(b) step function fit MSE}]
	\addplot +[mark=none] table[x index=0,y index=1]{./dat/oned_step_mse.dat};
%
\nextgroupplot[title = {(c) noise}]
	\addplot +[mark=none] table[x index=0,y index=1]{./dat/oned_noise_sig.dat};
	\addplot +[mark=none] table[x index=0,y index=2]{./dat/oned_noise_sig.dat};	

\nextgroupplot[ymode=log,xmode=log,xlabel = {optimizer steps},title = {(d) noise fit MSE}]
	\addplot +[mark=none] table[x index=0,y index=1]{./dat/oned_noise_mse.dat};	
		


\end{groupplot}          
\end{tikzpicture}
\end{center}
\vspace{-0.5cm}
\caption{
\label{fig:toy}
Fitting a step function and noise with a two-layer deep decoder: Even for a two-layer network, the simple image (step function) is fitted significantly faster than the noise.
}
\end{figure}
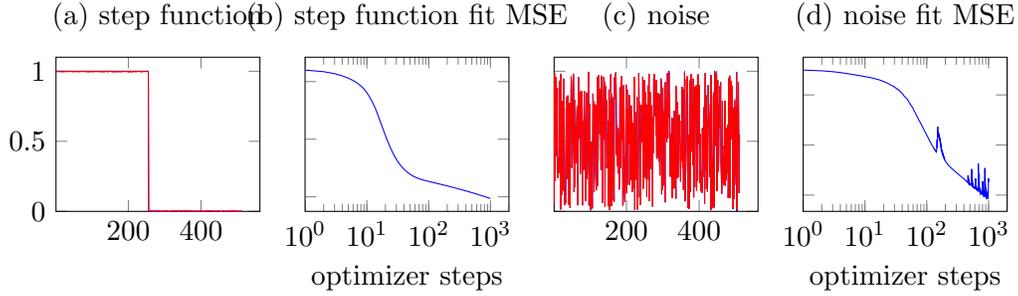


\section{The spectrum of the Jacobian of the deep decoder and deep image prior}

Our theoretical results predict that for over-parameterized networks, the parts of the signal that is aligned with the leading singular vectors of the Jacobian at initialization is fitted fastest. In this section we briefly show that natural images are much more aligned with the leading singular vectors than with Gaussian noise, which is equally aligned with each of the singular vectors.

Towards this goal, we compute the norm of the product of the Jacobian at a random initalization, $\mC_0$, with a signal $\vy$ as this measures the extent to which the signal is aligned with the leading singular vectors, due to
\[
\norm{\transp{\Jac}(\mC_0) \vy}^2
=
\norm{\mV \Sigma \transp{\mW} \vy}^2
=
\sum_i \sigma_i^2 \innerprod{\vw_i}{\vy}^2,
\]
where $\Jac(\mC_0) = \mW \Sigma \transp{\mV}$ is the singular value decomposition of the Jacobian.

Figure~\ref{fig:dist_gradnorms} depicts the distribution of the norm of the product of the Jacobian at initialization, $\Jac(\mC_0)$, with an image $\vy^\ast$ or noise $\vz$ of equal norm ($\norm{\vy^\ast} = \norm{\vz}$). 
Since for both the deep decoder and the deep image prior, the norm of product of the Jacobian and the noise (i.e., $\norm{\transp{\Jac}(\mC_0) \vz}$) is significantly smaller than that with a natural image (i.e., $\norm{\transp{\Jac}(\mC_0) \vy^\ast}$), it follows that a structured image is much better aligned with the leading singular vectors of the Jacobian than the Gaussian noise, which is approximately equally aligned with any of the singular vectors. 
Thus, the Jacobian at random initialization has an approximate low-rank structure, with natural images lying in the space spanned by the leading singularvectors.

\definecolor{DarkBlue}{rgb}{0,0,0.7} 
\definecolor{BrickRed}{RGB}{203,65,84}

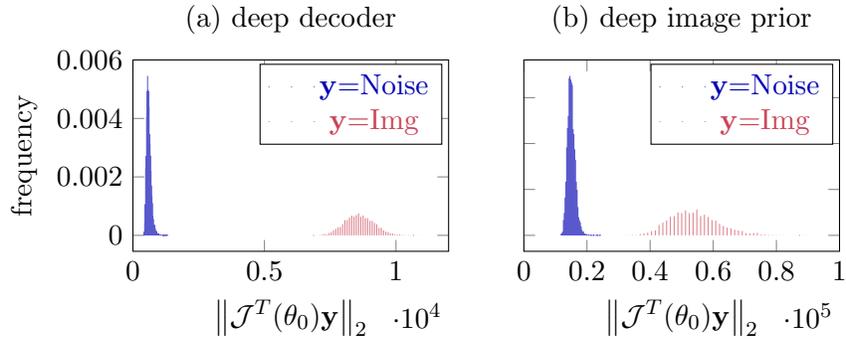
\begin{figure}[t]
\begin{center}
\begin{tikzpicture}

\begin{groupplot}[
y tick label style={/pgf/number format/.cd,fixed,precision=3},
scaled y ticks = false,
         group style={group size=4 by 3, 
         ylabels at=edge left,yticklabels at=edge left,    
         horizontal sep=1cm,vertical sep=2.3cm}, xlabel={$\norm[2]{\transp{\Jac}(\theta_0) \vy}$}, ylabel={frequency},
         width=0.35*6.5in,
         height=0.25*6.5in
]
	\nextgroupplot[title = {(a) deep decoder},xmin = 0,xmax= 12000]
	\addplot +[ycomb,DarkBlue!65,mark=none] table[x index=0,y index=1]{./dat/JacobianNoiseImg.csv};
	\addlegendentry{\textcolor{DarkBlue}{$\vy$=Noise}}
	
	\addplot +[ycomb,BrickRed!65,mark=none] table[x index=0,y index=1]{./dat/JacobianCleanImg.csv};
	\addlegendentry{\textcolor{BrickRed}{$\vy$=Img}}
	
		\nextgroupplot[title = {(b) deep image prior},xmin = 0,xmax= 100000]
	\addplot +[ycomb,DarkBlue!65,mark=none] table[x index=0,y index=1]{./dat/JacobianNoiseImgDIP.csv};
	\addlegendentry{\textcolor{DarkBlue}{$\vy$=Noise}}

	\addplot +[ycomb,BrickRed!65,mark=none] table[x index=0,y index=1]{./dat/JacobianCleanImgDIP.csv};
	\addlegendentry{\textcolor{BrickRed}{$\vy$=Img}}

\end{groupplot}          
\end{tikzpicture}
\end{center}

\vspace{-0.5cm}
\caption{
\label{fig:dist_gradnorms}
The distribution of the $\ell_2$-norm of the inner product of the Jacobian of a deep decoder (a) and a deep image prior (b) at a random initialization, with an image $\vy = \vx^\ast$ and noise $\vy = \vz$, both of equal norm. 
For both deep decoder and deep image prior, this quantity is significantly smaller for noise than for a natural image. Thus, a natural image is better aligned with the leading singular vectors of the Jacobian than noise. 
This demonstrates that the Jacobian of the networks is approximately low-rank, with natural images lying in the space spanned by the leading singularvectors.
}
\end{figure}


%
%
%


\clearpage

\section{Proofs and formal statement of results}

The results stated in the main text are obtained from a slightly more general result which applies beyond convolutional networks. 
Consider neural network generators of the form
\begin{align}
\label{neuralgen}
G(\mC) = \relu( \mU \mC) \vv,
\end{align}
with $\mC\in\reals^{n\times k}$, and $\mU\in\reals^{n\times n}$ an arbitrary fixed matrix, and $\vv\in\reals^k$, with half of the entries equal to $+1/\sqrt{k}$ and the other half equal to $-1/\sqrt{k}$.

The (transposed) Jacobian of  $\relu(\mU \vc)$ is $\transp{\mU} \diag( \relu'(\mU \vc) ) $.
Thus the Jacobian of $G(\mC)$ is given by
\begin{align}
\label{eq:JacmC}
\transp{\Jac}(\mC)
=
\begin{bmatrix}
v_1 \transp{\mU} \diag( \relu'(\mU \vc_1) ) \\
\vdots \\
v_k \transp{\mU} \diag( \relu'(\mU \vc_k)) \\
\end{bmatrix}
\in \reals^{nk \times n},
\end{align}
where $\relu'$ is the derivative of the activation function. 
Next we define a notion of expected Jacobian. 
Towards this goal, we first define the matrix 
\begin{align*}
\mtx{\Sigma}(\mU)
\defeq
\EX{\Jac(\mC)\Jac^T(\mC)} 
\in \reals^{n\times n}
\end{align*}
associated with the generator $G(\mC) = \relu( \mU \mC) \vv$. 
Here, expectation is over $\mC$ with iid $\mc N(0,\omega^2)$ entries. 
Note that since the derivative of the ReLU non-linearity is invariant to the scaling of $\mC$, so is the Jacobian $\Jac(\mC)$, but to be consistent throughout we take expectation over $\mC$ with iid $\mc N(0,\omega^2)$ entries. 
Consider the eigenvalue decomposition of $\mtx{\Sigma}(\mU)$ given by
\[
\mtx{\Sigma}(\mU) =\sum_{i=1}^n \sigma_i^2\vw_i\vw_i^T.
\]
Our results depend on the largest and smallest eigenvalue of $\mtx{\Sigma}(\mU)$, defined throughout as
\[
\alpha = \sigma_n^2,
\quad \beta = \sigma_1^2 = \norm{\mU}.
\]

With these definitions in place we are ready to state our result about neural generators.

\begin{theorem}
\label{thm:mainResult}
Consider a noisy signal $\vy\in\reals^n$ given by  
\[
\vy = \vct{x} + \noise,
\]
where $\vct{x}\in\R^n$ is assumed to lie in the signal subspace spanned by the $\pp$ leading singular vectors $\vw_1,\ldots,\vw_n$ of $\mtx{\Sigma}(\mU)$, and $\noise\in\reals^n$ is an arbitrary noise vector.
Suppose that the number of channels obeys
\begin{align}
\label{eq:thmoverparamass}
k
\geq
c n \xi^{-8}
\left(
1
+
\frac{\xi}{4} \frac{\alpha}{\beta}
\eta T \beta^2
\right)^2
\frac{\beta^{18}}{\alpha^{18}}
\end{align}
where $c$ is a fixed numerical constant and $\xi$ an error tolerance parameter obeying $0<\xi \le 1/\sqrt{32\log\left(\frac{2n}{\delta} \right)}$. Moreover, $T$ is the maximal number of gradient descent steps, and we assume it obeys $1 \leq T \leq \frac{2^5 \beta^2}{\eta \xi^2 \alpha^2}$. 
We fit the neural generator $G(\mC)$ to the noisy signal $\vy\in\reals^n$ by minimizing the loss
\begin{align}
\label{lossc}
\mathcal{L}(\mC)=\frac{1}{2}\norm[2]{G(\mC)-\vy}^2
\end{align}
via running gradient descent 
starting from $\mC_0$ with i.i.d.~$\mathcal{N}(0,\omega^2)$ entries, $\omega = \frac{\norm[2]{\vy}}{\sqrt{n} \beta } \xi \frac{\alpha^2}{\beta^2}$, and step size obeying $\eta\le 1/\beta^2$. Then, with probability at least $1-e^{-k^2}-\delta$, for all iterations $\iter \leq T$, 
\begin{align}
\label{finalconcc}
\norm[2]{\vct{x} - G(\mtx{C}_\iter)}
&\leq
(1 - \stepsize \sigma_\pp^2)^\iter \norm[2]{\vct{x}}
+
\sqrt{ \sum_{i=1}^n ( (1 - \stepsize \sigma_i^2 )^\iter -1 )^2 \innerprod{\vw_i}{\vz}^2
}
+
\xi \norm[2]{\vy}.
\end{align}
\end{theorem}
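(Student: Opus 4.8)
The plan is to reduce the \emph{nonlinear} least-squares dynamics to the \emph{linear} analysis of Proposition~\ref{prop:residual} by linearizing $G$ around the initialization. Write $\mJ = \Jac(\mC_0)$ for the Jacobian at initialization and compare the true iterates $\mC_\iter$ to gradient descent on the surrogate loss $\frac12\twonorm{G(\mC_0) + \mJ(\vc - \mathrm{vect}(\mC_0)) - \vy}^2$. Shifting by $\mathrm{vect}(\mC_0)$ and by $G(\mC_0)$, Proposition~\ref{prop:residual} gives the surrogate residual in closed form as $\sum_i \vw_i \innerprod{\vw_i}{\vy - G(\mC_0)} (1-\eta\sigma_i^2)^\iter$, where $\vw_i,\sigma_i^2$ are the eigenvectors/eigenvalues of $\mJ\mJ^T$. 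A first observation is that the prescribed scale $\omega$ together with the balanced sign pattern of $\vv$ forces $G(\mC_0)=\sum_j v_j \relu(\mU \vc_j)$, a signed sum of $k$ i.i.d.\ terms with cancelling means, to concentrate near $\vect{0}$: one gets $\twonorm{G(\mC_0)} = O(\omega\sqrt n)$ up to kernel-dependent constants, and the chosen $\omega$ makes this $O(\xi\twonorm{\vy})$. Substituting $\vy = \vx + \vz$, expanding $\vz$ and $\vx$ in $\{\vw_i\}$, and using that $\vx$ lies in the span of the first $\pp$ eigenvectors then produces exactly the signal term $(1-\eta\sigma_\pp^2)^\iter\twonorm{\vx}$ and the noise term $\sqrt{\sum_i ((1-\eta\sigma_i^2)^\iter - 1)^2 \innerprod{\vw_i}{\vz}^2}$ of~\eqref{finalconcc}, with the $G(\mC_0)$ contribution folded into the slack $\xi\twonorm{\vy}$.

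The second ingredient is to pin down the eigenstructure of $\mJ\mJ^T$ by concentration. From~\eqref{eq:JacmC}, $\mJ\mJ^T = \frac1k \sum_{j=1}^k \diag(\relu'(\mU \vc_j))\,\mU\mU^T\,\diag(\relu'(\mU\vc_j))$ is an empirical average of $k$ i.i.d.\ matrices with mean $\mSigma(\mU)$. Entrywise, $\E[\mJ\mJ^T]_{ab} = \Pr[(\mU\vc)_a > 0,\,(\mU\vc)_b>0]\,(\mU\mU^T)_{ab}$, and the bivariate-Gaussian orthant probability equals $\tfrac14 + \tfrac{1}{2\pi}\arcsin(\rho_{ab})$ with $\rho_{ab}$ the correlation of the two coordinates; since $\tfrac14 + \tfrac{1}{2\pi}\arcsin(t) = g(t)/t$, this reproduces precisely the $g(\cdot)$ of the dual-kernel Definition~\ref{dualker} and, in the circulant case, diagonalizes in the trigonometric basis of Definition~\ref{basisfunc}. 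A matrix concentration bound then yields $\opnorm{\mJ\mJ^T - \mSigma(\mU)}$ small once $k$ satisfies~\eqref{eq:thmoverparamass}, transferring the eigenpairs $(\sigma_i^2,\vw_i)$ of $\mSigma(\mU)$ to $\mJ\mJ^T$ up to a controlled perturbation.

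The main obstacle is the third step: controlling the deviation between the true nonlinear trajectory and the surrogate for \emph{all} $\iter \le T$ simultaneously. I would run a lazy-training / bootstrapping argument: assuming inductively that the iterates stay in a ball around $\mC_0$, I would show that (i) only a small fraction of the activation patterns $\relu'(\mU\vc_j)$ flip relative to initialization, so the instantaneous Jacobian $\Jac(\mC_\iter)$ stays close to $\mJ$ in operator norm, and (ii) consequently the nonlinear residual update $\vr_{\iter+1} = \vr_\iter - \eta\,\Jac(\mC_\iter)\Jac(\mC_\iter)^T\vr_\iter + (\text{Taylor remainder})$ differs from the linear update $(\mtx{I} - \eta\mJ\mJ^T)\vr_\iter$ only by terms that are higher order in the (small) weight movement. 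Feeding (i)--(ii) back shows the iterates never leave the ball, closing the induction. The delicate point is that these per-step perturbations accumulate over up to $T \le 2^5\beta^2/(\eta\xi^2\alpha^2)$ steps and must stay below $O(\xi\twonorm{\vy})$ uniformly; this is exactly what consumes the strong over-parameterization in~\eqref{eq:thmoverparamass}, with the powers of $\beta/\alpha$ arising from the conditioning of $\mSigma(\mU)$ and the $\xi^{-8}$ from requiring the compounded error to remain within tolerance.

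Finally I would collect the three error sources---the $G(\mC_0)$ offset, the Jacobian concentration error $\opnorm{\mJ\mJ^T - \mSigma(\mU)}$, and the accumulated trajectory-tracking error---into the single term $\xi\twonorm{\vy}$, which together with the surrogate's signal and noise terms gives~\eqref{finalconcc}. The probability $1 - e^{-k^2} - \delta$ reflects the Gaussian concentration events for $G(\mC_0)$ and $\mJ\mJ^T$ (contributing the $e^{-k^2}$ factor) and the high-probability control of the activation-flip count along the trajectory (contributing $\delta$).
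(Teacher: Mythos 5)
Your proposal follows essentially the same route as the paper: linearize around the random initialization, prove matrix concentration of $\Jac(\mC_0)\transp{\Jac}(\mC_0)$ around $\mtx{\Sigma}(\mU)$ (via the arccos-kernel computation and a matrix Hoeffding bound), show $G(\mC_0)$ is $O(\xi\twonorm{\vy})$, and run a lazy-training induction that controls activation-pattern flips to keep the nonlinear trajectory within $\xi\twonorm{\vy}$ of the linear one for all $\iter\le T$. The one step that would snag as written is ``transferring the eigenpairs $(\sigma_i^2,\vw_i)$ of $\mtx{\Sigma}(\mU)$ to $\mJ\mJ^T$ up to a controlled perturbation'' when you take $\mJ=\Jac(\mC_0)$: the bound~\eqref{finalconcc} and the hypothesis that $\vx$ lies in $\mathrm{span}\{\vw_1,\dots,\vw_\pp\}$ refer to the \emph{exact} eigenvectors of $\mtx{\Sigma}(\mU)$, and eigenvector perturbation is not controlled by $\opnorm{\mJ\mJ^T-\mtx{\Sigma}(\mU)}$ alone without eigengap assumptions, which the dual kernel need not satisfy. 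The paper avoids this by choosing the reference Jacobian of the surrogate to be a matrix $\mJ$ with $\mJ\transp{\mJ}=\mtx{\Sigma}(\mU)$ \emph{exactly} (such a $\mJ$ exists within $\epsilon_0$ of $\Jac(\mC_0)$ in operator norm once the two gram matrices are $\epsilon_0^2/4$-close), so the exact $(\sigma_i,\vw_i)$ drive the linear dynamics and the discrepancy $\opnorm{\Jac(\mC_0)-\mJ}\le\epsilon_0$ is simply absorbed into the same trajectory-tracking error as the activation-flip perturbation.
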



\subsection{Proof of Theorem~\ref{dynamicthm}}

Theorem~\ref{dynamicthm} stated in the main text follow directly from Theorem~\ref{thm:mainResult} above as follows.
We first note that for a convolutional generator (where $\mU$ implements a convolution and thus is circulant) the eigenvectors of the matrix $\mtx{\Sigma}(\mU)$ are given by the trigonometric basis functions per Definition \ref{basisfunc} and the eigenvalues are the square of the entries of the dual kernel (Definition~\ref{dualker}). To see this, we note that as detailed in Section~\ref{sec:JacobianExpectation}, 
\begin{align}
\label{eq:neuralTangentKernel}
\left[ \mtx{\Sigma}(\mU) \right]_{ij}
=
\frac{1}{2}
\left(
1 - \cos^{-1} \left( \frac{\innerprod{\vu_i}{\vu_j} }{ \norm{\vu_i} \norm{\vu_j} } \right) / \pi
\right) \innerprod{\vu_i}{\vu_j}.
\end{align}
Because the matrix $\mU$ implements a convolution with a kernel $\vu$ that is equal to its first column, the matrix $\mtx{\Sigma}(\mU)$ is again a circulant matrix and is also Hermitian. 
Thus, its spectrum is given by the Fourier transform of the first column of the circulant matrix, and its left-singular vectors are given by the trigonometric basis functions defined in equation~\eqref{eq:trigonfunc} and depicted in Figure~\ref{fig:circulantsingvectors}. 

Furthermore, using the fact that the eigenvalues of a circulant matrix are given by its discrete Fourier transform we can substitute $\beta=\opnorm{\mU}=\infnorm{ \mF \vu }$ and $\alpha=\sigma_n > 0$. 
With the assumption $\iter \leq T = \frac{100}{\stepsize \sigma_p^2}$, and using that 
\[
1+ \frac{\xi}{4} \frac{\alpha}{\beta} T \beta^2 
\leq
1 + 32 \xi \frac{\beta}{\sigma_p} 
\leq 
33,
\]
where we used the assumption $\xi \leq \frac{\sigma_p}{\beta}$, the condition~\eqref{eq:thmoverparamass} is implied by the assumption $k \geq C_\vu n / \xi^8$, with 
$C_{\vu}\propto \frac{\beta^{18}}{\alpha^{18}}$ (see equation~\eqref{kreq}, where this assumption is made).
This yields Theorem~\ref{dynamicthm}. 


\subsection{Proof of Theorem~\ref{denoisthm}}


Finally, we note that to obtain the simplified final expression in Theorem \ref{denoisthm} from Theorem~\ref{dynamicthm} we also used the fact that for a Gaussian vector $\vz$, the vector $\mtx{W}^T\vz$ is also Gaussian. Furthermore, by the concentration of Lipschitz functions of Gaussians with high probability we have
\begin{align*}
\sum_{i=1}^n ( (1 - \stepsize \sigma_i^2 )^\iter -1 )^2 \innerprod{\vw_i}{\vz}^2
&\approx \E\Big[\sum_{i=1}^n ( (1 - \stepsize \sigma_i^2 )^\iter -1 )^2 \innerprod{\vw_i}{\vz}^2\Big] \\
&\mystackrel{(i)}{=} \frac{\varsigma^2}{n} \sum_{i=1}^n ( (1 - \stepsize \sigma_i^2 )^\iter -1 )^2 \\
&\mystackrel{(ii)}{\leq} \varsigma^2\frac{2p}{n}.
\end{align*}
Here, equation~(i) follows from $\innerprod{\vw_i}{\vz}$ being zero mean Gaussian with variance $\varsigma^2/n$ (since $\vz \sim \mc N(0, (\varsigma^2/n) \mI)$, and $\norm[2]{\vw_i}=1$). 
Finally, (ii) follows by choosing the early stopping time so that it obeys $\iter \leq \log(1 - \sqrt{\pp/n}) / \log(1-\eta \sigma_{\pp+1}^2)$ which in turn implies that
$(1-\eta \sigma_{i}^2)^\iter \geq 1 - \sqrt{p/n}$, for all $i > \pp$, yielding that 
$((1-\eta \sigma_{i}^2)^\iter - 1)^2 \leq p/n$, for all $i > \pp$.

%
%
%
%



\section{The dynamics of linear and nonlinear least-squares}\label{metathms}

\newcommand\losslin{\loss_{\mathrm{lin}}}
\newcommand\flin{f_{\mathrm{lin}}}

Theorem~\ref{thm:mainResult} builds on a more general result on the dynamics of a non-linear least squares problem which applies beyond convolutional networks, and that is stated and discussed in this section.
Consider a nonlinear least-squares fitting problem of the form
\[
\loss(\vtheta) = \frac{1}{2} \norm[2]{f(\vtheta) - \vy}^2.
\]
Here, $f\colon \reals^\N \rightarrow \reals^n$ is a non-linear model with parameters $\vtheta\in\reals^\N$.
To solve this problem, we run gradient descent with a fixed stepsize $\stepsize$, starting from an initial point $\vtheta_0$, with updates of the form
\begin{align}
\label{iterupdates}
\vtheta_{\iter + 1} = \vtheta_\iter - \stepsize \nabla \loss(\vtheta_\iter)
\quad \text{where}\quad
\nabla \loss(\vtheta)
= \transp{\Jac}(\vtheta)( f(\vtheta) - \vy).
\end{align}
Here, $\Jac(\vtheta)\in\reals^{n\times \N}$ is the Jacobian associated with the nonlinear map $f$ with entries given by
$
[\Jac(\vtheta)]_{i,j} = \frac{\partial f_i(\vtheta)}{ \partial \vtheta_j }
$. 
In order to study the properties of the gradient descent iterates~\eqref{iterupdates}, we relate the non-linear least squares problem to a linearized one in a ball around the initialization $\vtheta_0$. 
We note that this general strategy has been utilized in a variety of recent publications~\citep{du_gradient_2018,arora_fine-grained_2019,oymak_towards_2019,oymak_generalization_2019}. 
The associated linearized least-squares problem is defined as
\begin{align}
\label{linprob}
\losslin(\vtheta)
=
\frac{1}{2}
\norm[2]{f(\vtheta_0) + \mJ (\vtheta - \vtheta_0) - \vy}^2.
\end{align}
Here, $\mJ\in\reals^{n\times p}$, refered to as the reference Jacobian, is a fixed matrix independent of $\vtheta$ that approximates the Jacobian mapping at initialization, $\Jac(\vtheta_0)$. 
Starting from the same initial point $\vtheta_0$, the gradient descent updates of the linearized problem are
\begin{align}
\label{iterupdateslin}
\widetilde{\vtheta}_{\iter + 1}
&= 
\widetilde{\vtheta}_\iter - \stepsize \transp{\mJ} \left(f(\vtheta_0) + \mJ (\widetilde{\vtheta}_\iter - \vtheta_0) - \vy\right) \nonumber\\
&= 
\widetilde{\vtheta}_\iter - \stepsize \transp{\mJ} \mJ \left(\widetilde{\vtheta}_\iter - \vtheta_0\right) - \stepsize \transp{\mJ} \left( f(\vtheta_0) - \vy \right).
\end{align}
To show that the non-linear updates~\eqref{iterupdates} are close to the  linearized iterates~\eqref{iterupdateslin}, we make the following assumptions:
\begin{assumption}[Bounded spectrum]\label{BndSpec} We assume the singular values of the reference Jacobian obey
\begin{align}
\label{eq:Jacbounded0}
\alpha \le \sigma_{n} \le \sigma_{1} \le \beta.
\end{align}
Furthermore, we assume that the Jacobian mapping associated with the nonlinear model $f$ obeys
\begin{align}
\label{eq:Jacbounded}
\norm{\Jac(\vtheta)} \leq \beta\quad\text{for all}\quad \vtheta\in\reals^\N.
\end{align}
\end{assumption}
\begin{assumption}[Closeness of the reference and initialization Jacobians]\label{assJac2} We assume the reference Jacobian and the Jacobian of the nonlinearity at initialization $\Jac(\vtheta_0)$ are $\epsilon_0$-close in the sense that
\begin{align}
\label{eq:assJacJclose}
\norm{\Jac(\vtheta_0) - \mJ} \leq \epsilon_0.
\end{align}
\end{assumption}

\begin{assumption}[Bounded variation of Jacobian around initialization]\label{assJac3}
We assume that within a radius $R$ around the initialization, the Jacobian varies by no more than $\epsilon$ in the sense that
\begin{align}
\label{eq:Jacclose}
\norm{\Jac(\vtheta) - \Jac(\vtheta_0)}
\leq
\frac{\epsilon}{2},
\quad 
\text{for all}
\quad
\vtheta \in \setB_R(\vtheta_0),
\end{align}
where $\setB_R(\vtheta_0) \defeq \{\vtheta \colon \norm{\vtheta - \vtheta_0} \leq R\}$ is the ball with radius $R$ around $\vtheta_0$.
\end{assumption}
Our first result shows that under these assumptions the nonlinear iterative updates \eqref{iterupdates} are intimately related to the linear iterative updates \eqref{iterupdateslin}. Specifically, we show that the residuals associated with these two problems defined below
\begin{align}
&\text{nonlinear residual:}\quad \vr_\iter := f(\vtheta_\iter) - \vy\label{twores1}\\
&\text{linear residual:}\quad\quad\text{ }\text{ }\widetilde{\vr}_\iter := \left(\mI - \stepsize \mJ \transp{\mJ}\right)^\iter \vr_0,\label{twores2}
\end{align}
are close in the proximity of the initialization.
\begin{theorem} [Closeness of linear and nonlinear least-squares problems]
\label{thm:maintechnical}
Assume the Jacobian mapping $\Jac(\vtheta)\in\reals^{n\times \N}$ associated with the function $f(\vtheta)$ obeys Assumptions \ref{BndSpec}, \ref{assJac2}, and \ref{assJac3} around an initial point $\vtheta_0\in\reals^\N$ with respect to a reference Jacobian $\mJ\in\reals^{n\times \N}$ and with parameters $\alpha, \beta, \epsilon_0, \epsilon$, and $R$. Furthermore, assume the radius $R$ is given by
\begin{align}
\label{eq:defR}
\frac{R}{2} 
\defeq
\norm[2]{\mJ^\dagger\vr_0}
+
\frac{1}{\alpha^2}(\epsilon_0 + \epsilon)
\left( 1 + 2 \eta T \beta^2 \right) \norm[2]{\vr_0},
\end{align}
with $T$ a constant obeying $1\leq T \leq \frac{1}{2\eta\epsilon^2}$, and $\pinv{\mJ}$ the pseudo-inverse of $\mJ$. 
We run gradient descent with stepsize $\stepsize \leq \frac{1}{\beta^2}$ on the linear and non-linear least squares problem, starting from the same initialization $\vtheta_0$. 
Then, for all $\tau \le T$ the iterates of the original and the linearized problems and their corresponding residuals obey
\begin{align}
\norm[2]{\vr_\iter - \widetilde{\vr}_\iter} 
&\leq
2\frac{\beta}{\alpha^2} (\epsilon_0 + \epsilon)
\norm[2]{\vr_0}
\label{eq:propvrkclose} \\
\norm[2]{\vtheta_\iter - \widetilde{\vtheta}_\iter} 
&\leq 
\frac{1}{\alpha^2}(\epsilon_0 + \epsilon)
\left( 1 + 2 \eta \iter \beta^2 \right) \norm[2]{\vr_0}.
\label{eq:propthetakclose}
\end{align}
Moreover, for all iterates $\tau \leq T$, 
\begin{align}
\label{eq:indhypinRBall}
\norm[2]{\vtheta_\iter - \vtheta_0} \leq \frac{R}{2}.
\end{align}
\end{theorem}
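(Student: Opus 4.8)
The plan is to establish all three conclusions \eqref{eq:propvrkclose}, \eqref{eq:propthetakclose}, and \eqref{eq:indhypinRBall} simultaneously by a single induction on $\iter$, driven by one structural fact: since every singular value of the reference Jacobian satisfies $\sigma_i \ge \alpha > 0$ and $\stepsize \le 1/\beta^2$, the matrix $\mI - \stepsize\mJ\transp{\mJ}$ is positive semidefinite with $\norm{\mI - \stepsize\mJ\transp{\mJ}} \le 1 - \stepsize\alpha^2 < 1$. This geometric contraction is what allows the accumulated perturbations to be summed into bounds that are uniform in $\iter$ (for the residual), and it is the reason a strictly positive lower bound $\alpha$ on the spectrum is indispensable. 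The inductive hypothesis carries, for all $s \le \iter$, the containment $\vtheta_s \in \setB_{R/2}(\vtheta_0) \subset \setB_R(\vtheta_0)$ together with the residual estimate \eqref{eq:propvrkclose}.

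First I would set up the nonlinear residual recursion. Writing the update as $\vtheta_{\iter+1} = \vtheta_\iter - \stepsize\transp{\Jac}(\vtheta_\iter)\vr_\iter$ and applying the fundamental theorem of calculus to $f(\vtheta_{\iter+1}) - f(\vtheta_\iter)$ introduces the averaged Jacobian $\overline{\Jac}_\iter := \int_0^1 \Jac(\vtheta_\iter + t(\vtheta_{\iter+1}-\vtheta_\iter))\,dt$ and yields the exact recursion $\vr_{\iter+1} = (\mI - \stepsize\,\overline{\Jac}_\iter\transp{\Jac}(\vtheta_\iter))\vr_\iter$, to be compared against $\widetilde{\vr}_{\iter+1} = (\mI - \stepsize\mJ\transp{\mJ})\widetilde{\vr}_\iter$. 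Under the inductive hypothesis, convexity of the ball puts the whole segment from $\vtheta_s$ to $\vtheta_{s+1}$ inside $\setB_R(\vtheta_0)$, so Assumptions \ref{assJac2} and \ref{assJac3} give $\norm{\Jac(\vtheta_s) - \mJ} \le \epsilon_0 + \epsilon/2$ and $\norm{\overline{\Jac}_s - \mJ} \le \epsilon_0 + \epsilon/2$, while Assumption \ref{BndSpec} gives the uniform bound $\norm{\Jac(\vtheta_s)}, \norm{\overline{\Jac}_s} \le \beta$. Splitting $\overline{\Jac}_s\transp{\Jac}(\vtheta_s) - \mJ\transp{\mJ} = \overline{\Jac}_s(\transp{\Jac}(\vtheta_s) - \transp{\mJ}) + (\overline{\Jac}_s - \mJ)\transp{\mJ}$ then bounds the gap between the two update operators by $2\beta(\epsilon_0 + \epsilon)$. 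The core step is the error recursion for $\vct{e}_\iter := \vr_\iter - \widetilde{\vr}_\iter$ (with $\vct{e}_0 = 0$): one finds $\vct{e}_{\iter+1} = (\mI - \stepsize\mJ\transp{\mJ})\vct{e}_\iter + \Delta_\iter$ with $\norm{\Delta_\iter} \le 2\stepsize\beta(\epsilon_0+\epsilon)\norm{\vr_\iter}$; since $\norm{\vr_\iter} \le \norm{\widetilde{\vr}_\iter} + \norm{\vct{e}_\iter} \le \norm{\vr_0} + \norm{\vct{e}_\iter}$ and $\norm{\vct{e}_\iter}$ is controlled by the hypothesis, unrolling against the contraction factor $1-\stepsize\alpha^2$ and summing the geometric series (total weight $\le 1/(\stepsize\alpha^2)$) closes the induction at $\norm{\vct{e}_{\iter+1}} \le 2\frac{\beta}{\alpha^2}(\epsilon_0+\epsilon)\norm{\vr_0}$.

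With the residual bound in hand, the iterate estimate \eqref{eq:propthetakclose} follows by telescoping the update difference as $\vtheta_\iter - \widetilde{\vtheta}_\iter = -\stepsize\sum_{s=0}^{\iter-1}\left[(\transp{\Jac}(\vtheta_s) - \transp{\mJ})\vr_s + \transp{\mJ}(\vr_s - \widetilde{\vr}_s)\right]$. The first family of terms is controlled by $(\epsilon_0+\epsilon)\norm{\vr_s}$ and, because the residual decays geometrically ($\norm{\vr_s}$ being of order $(1-\stepsize\alpha^2)^s\norm{\vr_0}$), sums to the $\iter$-independent contribution $\frac{1}{\alpha^2}(\epsilon_0+\epsilon)\norm{\vr_0}$; the second family is controlled by $\beta\norm{\vct{e}_s}$ with $\norm{\vct{e}_s}$ uniformly bounded by \eqref{eq:propvrkclose}, contributing $2\stepsize\iter\frac{\beta^2}{\alpha^2}(\epsilon_0+\epsilon)\norm{\vr_0}$, and together these reproduce the factor $(1 + 2\stepsize\iter\beta^2)$. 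Finally, the containment \eqref{eq:indhypinRBall} follows from $\norm{\vtheta_\iter - \vtheta_0} \le \norm{\widetilde{\vtheta}_\iter - \vtheta_0} + \norm{\vtheta_\iter - \widetilde{\vtheta}_\iter}$: an explicit SVD computation of the linear iterates gives $\norm{\widetilde{\vtheta}_\iter - \vtheta_0} \le \norm{\pinv{\mJ}\vr_0}$ (each singular direction carries a factor $|1 - (1-\stepsize\sigma_i^2)^\iter| \le 1$), and adding \eqref{eq:propthetakclose} evaluated at the horizon $T$ reproduces exactly the definition of $R/2$ in \eqref{eq:defR}, completing the inductive step.

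The main obstacle is the mutual coupling of the three claims: the residual and iterate bounds require the iterates to remain in $\setB_R(\vtheta_0)$ so that the Jacobian-closeness assumptions apply along each update segment, yet the containment itself relies on those bounds. Resolving this forces a carefully ordered single induction, and in particular one must verify that the averaged Jacobian $\overline{\Jac}_\iter$ over the segment reaching the not-yet-certified point $\vtheta_{\iter+1}$ still lies in the region where Assumption \ref{assJac3} is valid; the contraction supplied by $\alpha > 0$ is the essential mechanism that keeps the perturbations summable and the displacement within $R/2$ uniformly in $\iter$.
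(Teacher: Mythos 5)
Your overall architecture matches the paper's: a single induction carrying the containment, the residual bound, and the parameter bound; a preliminary step certifying $\vtheta_{\iter+1}\in\setB_R(\vtheta_0)$ so that the averaged Jacobian along the update segment is covered by Assumption~\ref{assJac3}; a telescoping argument for \eqref{eq:propthetakclose}; and the bound $\norm[2]{\widetilde{\vtheta}_\iter-\vtheta_0}\le \norm[2]{\pinv{\mJ}\vr_0}$ to recover \eqref{eq:indhypinRBall}. The gap is in your core step, the residual recursion. You decompose $\ve_{\iter+1}=(\mI-\stepsize\mJ\transp{\mJ})\ve_\iter+\Delta_\iter$ with $\norm[2]{\Delta_\iter}\le 2\stepsize\beta(\epsilon_0+\epsilon)\norm[2]{\vr_\iter}$, i.e.\ the perturbation is driven by the \emph{nonlinear} residual. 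Unrolling against the contraction and using $\norm[2]{\vr_s}\le\norm[2]{\vr_0}+\norm[2]{\ve_s}$ together with the inductive bound $\norm[2]{\ve_s}\le E:=2\tfrac{\beta}{\alpha^2}(\epsilon_0+\epsilon)\norm[2]{\vr_0}$ gives $\norm[2]{\ve_{\iter+1}}\le \tfrac{2\beta(\epsilon_0+\epsilon)}{\alpha^2}\bigl(\norm[2]{\vr_0}+E\bigr)=E\bigl(1+\tfrac{2\beta(\epsilon_0+\epsilon)}{\alpha^2}\bigr)>E$, so the induction does \emph{not} close at $E$. More seriously, absorbing the feedback into the homogeneous part yields the coefficient $1-\stepsize\alpha^2+2\stepsize\beta(\epsilon_0+\epsilon)$ on $\norm[2]{\ve_\iter}$, which exceeds $1$ unless $2\beta(\epsilon_0+\epsilon)<\alpha^2$ --- a smallness condition the theorem does not assume (it holds in the downstream application, but the theorem must stand without it; e.g.\ $\epsilon_0$ large and $\epsilon$ tiny makes $T\le\tfrac{1}{2\eta\epsilon^2}$ huge and your recursion can blow up geometrically).

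The fix is to regroup the other way, as the paper does via Lemma~\ref{lem:boundPerturbation}: write $\ve_{\iter+1}=(\mI-\stepsize\overline{\Jac}_\iter\transp{\Jac}(\vtheta_\iter))\ve_\iter+\stepsize(\mJ\transp{\mJ}-\overline{\Jac}_\iter\transp{\Jac}(\vtheta_\iter))\widetilde{\vr}_\iter$, so the driving term involves the \emph{linear} residual, which provably decays as $(1-\stepsize\alpha^2)^\iter\norm[2]{\vr_0}$ and is not coupled back to $\ve$. The price is that the operator acting on $\ve_\iter$ is no longer a contraction; it satisfies $\norm[2]{\ve_{\iter+1}}\le(1+\stepsize\epsilon^2)\norm[2]{\ve_\iter}+\stepsize\beta(\epsilon_0+\epsilon)\norm[2]{\widetilde{\vr}_\iter}$, with expansion factor depending only on $\epsilon$ (not $\epsilon_0$), and this is exactly what the hypothesis $T\le\tfrac{1}{2\eta\epsilon^2}$ is there to tame (Lemma~\ref{lem:sequenceserrorcont}). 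Your write-up never uses that hypothesis in Step II, which is a signal that the mechanism is off. Relatedly, in your telescoping for \eqref{eq:propthetakclose} you bound the terms $(\transp{\Jac}(\vtheta_s)-\transp{\mJ})\vr_s$ by asserting that $\norm[2]{\vr_s}$ decays like $(1-\stepsize\alpha^2)^s\norm[2]{\vr_0}$; only the linear residual is known to decay this way, so you should pair the Jacobian difference with $\widetilde{\vr}_s$ and put the discrepancy into the $\transp{\Jac}(\vtheta_s)\ve_s$ term, as in the paper's Step III.
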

The above theorem formalizes that in a (small) radius around the initialization, the non-linear problem behaves similarly as its linearization.
Thus, to characterize the dynamics of the nonlinear problem, it suffices to characterize the dynamics of the linearized problem. This is the subject of our next theorem.
%
\begin{theorem}
\label{lem:rk}
Consider a linear least squares problem~\eqref{linprob} and let $\mJ = \mW \mtx{\Sigma} \transp{\mV} \in \reals^{n\times \N}=\sum_{i=1}^n\sigma_i \vw_i \vv_i^T$
be the singular value decomposition of the matrix $\mJ$.  Then the residual $\widetilde{\vr}_\iter$ after $\iter$ iterations of gradient descent with updates~\eqref{iterupdateslin} is
\begin{align}
\label{eq:linresformula}
\widetilde{\vr}_\iter = \sum_{i=1}^n \left(1 - \stepsize \sigma^2_i\right)^\iter \vw_i \innerprod{\vw_i}{\vr_0}.
\end{align}
Moreover, using a step size satisfying $\stepsize \leq \frac{1}{\sigma_1^2}$, the linearized iterates \eqref{iterupdateslin} obey
\begin{align}
\label{eq:lindifftheta}
\norm[2]{\widetilde{\vtheta}_\iter - \vtheta_0}^2
=
\sum_{i=1}^n
\left(
\innerprod{\vw_i}{\vr_0}
\frac{1 - (1 - \stepsize \sigma^2_i)^\iter}{\sigma_i}
\right)^2.
\end{align}
\end{theorem}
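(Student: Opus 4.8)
The plan is to reduce both claims to diagonalizing a single linear recursion for the residual. First I would derive a one-step recursion for the linearized residual $\widetilde{\vr}_\iter = f(\vtheta_0) + \mJ(\widetilde{\vtheta}_\iter - \vtheta_0) - \vy$. Rewriting the update~\eqref{iterupdateslin} as $\widetilde{\vtheta}_{\iter+1} - \vtheta_0 = (\widetilde{\vtheta}_\iter - \vtheta_0) - \stepsize\transp{\mJ}\widetilde{\vr}_\iter$ (using $\vr_0 = f(\vtheta_0)-\vy$ and the identity $\vr_0 + \mJ(\widetilde{\vtheta}_\iter-\vtheta_0) = \widetilde{\vr}_\iter$) and substituting back into the definition of $\widetilde{\vr}_{\iter+1}$, I obtain $\widetilde{\vr}_{\iter+1} = (\mI - \stepsize\mJ\transp{\mJ})\widetilde{\vr}_\iter$. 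Iterating from $\widetilde{\vr}_0 = \vr_0$ gives $\widetilde{\vr}_\iter = (\mI - \stepsize\mJ\transp{\mJ})^\iter\vr_0$, which matches the definition in~\eqref{twores2}.

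Next, for~\eqref{eq:linresformula}, I would substitute the SVD $\mJ = \sum_{i=1}^n \sigma_i\vw_i\transp{\vv_i}$ so that $\mJ\transp{\mJ} = \sum_{i=1}^n \sigma_i^2\vw_i\transp{\vw_i}$. Since the left singular vectors $\vw_1,\dots,\vw_n$ form an orthonormal basis of $\reals^n$, the identity resolves as $\mI = \sum_{i=1}^n\vw_i\transp{\vw_i}$, whence $(\mI - \stepsize\mJ\transp{\mJ})^\iter = \sum_{i=1}^n (1-\stepsize\sigma_i^2)^\iter \vw_i\transp{\vw_i}$. Applying this operator to $\vr_0$ and expanding in the $\vw_i$ basis yields~\eqref{eq:linresformula} directly.

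For the parameter identity~\eqref{eq:lindifftheta}, I would set $\bm{\delta}_\iter := \widetilde{\vtheta}_\iter - \vtheta_0$, so that~\eqref{iterupdateslin} reads $\bm{\delta}_{\iter+1} = (\mI - \stepsize\transp{\mJ}\mJ)\bm{\delta}_\iter - \stepsize\transp{\mJ}\vr_0$ with $\bm{\delta}_0 = \mathbf{0}$. Unrolling this linear recursion gives $\bm{\delta}_\iter = -\stepsize\sum_{j=0}^{\iter-1}(\mI - \stepsize\transp{\mJ}\mJ)^j\transp{\mJ}\vr_0$. Using $\transp{\mJ} = \sum_i \sigma_i\vv_i\transp{\vw_i}$ and $\transp{\mJ}\mJ = \sum_i \sigma_i^2\vv_i\transp{\vv_i}$, each right singular vector $\vv_i$ is an eigenvector of $(\mI - \stepsize\transp{\mJ}\mJ)^j$ with eigenvalue $(1-\stepsize\sigma_i^2)^j$, so the geometric sum $\sum_{j=0}^{\iter-1}(1-\stepsize\sigma_i^2)^j = \frac{1-(1-\stepsize\sigma_i^2)^\iter}{\stepsize\sigma_i^2}$ collapses the expression to $\bm{\delta}_\iter = -\sum_{i=1}^n \frac{\innerprod{\vw_i}{\vr_0}}{\sigma_i}\left(1-(1-\stepsize\sigma_i^2)^\iter\right)\vv_i$. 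Orthonormality of $\{\vv_i\}$ then gives the stated squared norm.

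I expect essentially no obstacle here: this is routine linear algebra once the residual recursion is isolated. The only points requiring care are (i) verifying that the left singular vectors span all of $\reals^n$ so that the identity resolution $\mI = \sum_i\vw_i\transp{\vw_i}$ is valid, and (ii) ensuring $\sigma_i > 0$ — guaranteed by $\alpha \le \sigma_n$ in Assumption~\ref{BndSpec} — so that the division by $\sigma_i$ is legitimate and the geometric-series formula applies (the step size $\stepsize \le 1/\sigma_1^2$ keeps each ratio $1-\stepsize\sigma_i^2$ in $[0,1)$).
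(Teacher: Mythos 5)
Your proposal is correct and follows essentially the same route as the paper: the paper proves \eqref{eq:linresformula} by the same residual recursion argument as Proposition~\ref{prop:residual}, and proves \eqref{eq:lindifftheta} by writing $\widetilde{\vtheta}_\iter - \vtheta_0 = -\stepsize\sum_{t=0}^{\iter-1}\transp{\mJ}\widetilde{\vr}_t$, which is identical to your unrolled recursion since $\transp{\mJ}(\mI-\stepsize\mJ\transp{\mJ})^t = (\mI-\stepsize\transp{\mJ}\mJ)^t\transp{\mJ}$, followed by the same geometric-series collapse in the right singular basis. Your two cautionary points (full row rank so that $\sum_i\vw_i\transp{\vw_i}=\mI$, and $\sigma_i>0$ from Assumption~\ref{BndSpec} to justify the division) are exactly the implicit hypotheses the paper relies on.
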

In the next section we combine these two general theorems to provide guarantees for denoising using general neural networks.



\subsection{Proof of Theorem \ref{thm:maintechnical} (closeness of linear and non-linear least-squares)}

The proof is by induction. We suppose the statement, in particular the bounds~\eqref{eq:propvrkclose}, \eqref{eq:propthetakclose}, and \eqref{eq:indhypinRBall} hold for iterations $\iteralt \leq \iter-1$. 
We then show that they continue to hold for iteration $\iter$ in four steps. 
In Step I, we show that a weaker version of~\eqref{eq:indhypinRBall} holds, specifically that $\norm[2]{\vtheta_\iter-\vtheta_0}\le R$. 
Using this result, in Steps II and III we show that the bounds~\eqref{eq:propvrkclose} and \eqref{eq:propthetakclose} hold, respectively. Finally, in Step IV we utilize Steps I-III to complete the proof of equation~\eqref{eq:indhypinRBall}.


\paragraph{Step I: Next iterate obeys $\vtheta_\iter \in \setB_R(\vtheta_0)$.} To prove $\vtheta_\iter \in \setB_R(\vtheta_0)$, first note that by the triangle inequality and the induction assumption \eqref{eq:indhypinRBall} we have
\begin{align*}
\norm[2]{\vtheta_\iter - \vtheta_0}
\leq&
\norm[2]{\vtheta_\iter - \vtheta_{\iter-1}}
+
\norm[2]{\vtheta_{\iter-1} - \vtheta_0},\\
\leq& \norm[2]{\vtheta_\iter - \vtheta_{\iter-1}}+\frac{R}{2}.
\end{align*}
So to prove $\norm[2]{\vtheta_\tau-\vtheta_0}\le R$ it suffices to show that $\norm[2]{\vtheta_{\tau} - \vtheta_{\tau-1}} \le R/2$. To this aim note that
\begin{align}
\frac{1}{\stepsize} \norm[2]{\vtheta_{\iter} - \vtheta_{\iter-1}}
&= 
\norm[2]{\nabla \loss(\vtheta_{\iter-1})} \nonumber \\
&= 
 \norm[2]{ \transp{\Jac}(\vtheta_{\iter-1}) \vr_{\iter-1} } \nonumber \\
&\leq
\norm[2]{ \transp{\Jac}(\vtheta_{\iter-1}) \widetilde{\vr}_{\iter-1} } 
+
\opnorm{\Jac(\vtheta_{\iter-1})} 
\norm[2]{\vr_{\iter-1} - \widetilde{\vr}_{\iter-1}} 
\nonumber \\
&\leq
\norm[2]{ \transp{\mJ} \widetilde{\vr}_{\iter-1} } 
+
\opnorm{ \Jac(\vtheta_{\iter-1}) - \mJ }
\norm[2]{ \widetilde{\vr}_{\iter-1} }
+
\opnorm{\Jac(\vtheta_{\iter-1})} 
\norm[2]{\vr_{\iter-1} - \widetilde{\vr}_{\iter-1}} 
\nonumber \\
&\mystackrel{(i)}{\leq} 
\norm[2]{ \transp{\mJ} \widetilde{\vr}_{\iter-1} } 
+
 (\epsilon + \epsilon_0)
\norm[2]{ \widetilde{\vr}_{\iter-1} }
+
 \beta
\norm[2]{\vr_{\iter-1} - \widetilde{\vr}_{\iter-1}} 
\nonumber \\
&\mystackrel{(ii)}{\leq}
\frac{1}{\stepsize} \norm[2]{ \pinv{\mJ} \vr_0} 
+
(\epsilon + \epsilon_0) 
\norm[2]{ \vr_0 }
+
2\frac{\beta^2}{\alpha^2} (\epsilon_0 + \epsilon)
\norm[2]{\vr_0} \nonumber \\
&\leq \frac{R}{2}. \nonumber
\end{align}
In the above (i) follows from Assumptions \ref{BndSpec}, \ref{assJac2}, and \ref{assJac3}.
For (ii), we bounded the last term with the induction hypothesis~\eqref{eq:propvrkclose}, 
the middle term with $\norm[2]{ \widetilde{\vr}_{\iter-1} } \leq \norm[2]{ \vr_0 }$, and the first term with the bound
\begin{align*}
\norm[2]{ \transp{\mJ} \widetilde{\vr}_{\iter-1} }
&=
\norm[2]{\transp{\mJ} (\mI - \stepsize \mJ \transp{\mJ})^{\iter-1} \vr_0 } \\
&= \norm[2]{\Sigma (\mI - \stepsize \Sigma^2)^{\iter-1} \transp{\mW}\vr_0} \\
&\le \sqrt{\sum_{j=1}^n\sigma_j^2\langle \vw_j,\vr_0\rangle^2} \\
&\le \beta^2\sqrt{\sum_{j=1}^n\frac{1}{\sigma_j^2}\langle \vw_j,\vr_0\rangle^2} \\
&= \beta^2\norm[2]{\pinv{\mJ}\vr_0}\\
&\leq \frac{1}{\stepsize} \norm[2]{\pinv{\mJ}\vr_0}.
\end{align*} 
Finally, the last inequality follows by definition of $R$ in~\eqref{eq:defR} together with the fact that $T \geq 1$.

\paragraph{Step II: Original and linearized residuals are close:}
In this step, we bound the deviation of the residuals of the original and linearized problem defined as
\[
\ve_{\iter} \defeq \vr_{\iter} - \widetilde{\vr}_{\iter}.
\]
This step relies on the following lemma, which is a variant of \cite[Lem.~6.7]{oymak_generalization_2019}.

\begin{lemma}[Bound on growth of perturbations]
\label{lem:boundPerturbation}
Suppose that Assumptions \ref{BndSpec}, \ref{assJac2}, and \ref{assJac3} hold and that $\vtheta_\iter,\vtheta_{\iter + 1} \in \setB_R(\vtheta_0)$. 
Then, provided the stepsize obeys $\stepsize \leq 1/\beta^2$, the deviation of the residuals obeys
\begin{align}
\label{eq:growthbound}
\norm[2]{\ve_{\iter + 1}}
\leq
\stepsize \beta \left(\epsilon_0 + \epsilon \right) \norm[2]{\widetilde{\vr}_\iter}
+
\left(1 + \stepsize \epsilon^2\right) \norm[2]{\ve_\iter}.
\end{align}
\end{lemma}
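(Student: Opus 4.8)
The plan is to derive an exact one-step recursion for the nonlinear residual, subtract from it the linear recursion that defines $\widetilde{\vr}_\iter$, and then control the two resulting pieces separately: a near-contraction acting on the accumulated discrepancy $\ve_\iter$, and a forcing term driven by $\widetilde{\vr}_\iter$. First I would represent the change of the nonlinear residual through a path-averaged Jacobian. Setting $\bar{\Jac}_\iter \defeq \int_0^1 \Jac\!\left(\vtheta_\iter + s(\vtheta_{\iter+1}-\vtheta_\iter)\right)ds$ and using the update $\vtheta_{\iter+1}-\vtheta_\iter = -\stepsize\,\transp{\Jac}(\vtheta_\iter)\vr_\iter$, the fundamental theorem of calculus gives $f(\vtheta_{\iter+1})-f(\vtheta_\iter)=\bar{\Jac}_\iter(\vtheta_{\iter+1}-\vtheta_\iter)$, so that
\[
\vr_{\iter+1} = \left(\mI - \stepsize\,\bar{\Jac}_\iter\transp{\Jac}(\vtheta_\iter)\right)\vr_\iter.
\]
Subtracting $\widetilde{\vr}_{\iter+1}=(\mI-\stepsize\,\mJ\transp{\mJ})\widetilde{\vr}_\iter$ and inserting $\vr_\iter=\widetilde{\vr}_\iter+\ve_\iter$ yields the clean identity
\[
\ve_{\iter+1} = \left(\mI - \stepsize\,\bar{\Jac}_\iter\transp{\Jac}(\vtheta_\iter)\right)\ve_\iter - \stepsize\left(\bar{\Jac}_\iter\transp{\Jac}(\vtheta_\iter) - \mJ\transp{\mJ}\right)\widetilde{\vr}_\iter,
\]
which already separates the error-propagation and forcing contributions appearing in~\eqref{eq:growthbound}. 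Since $\vtheta_\iter,\vtheta_{\iter+1}\in\setB_R(\vtheta_0)$ and the ball is convex, the whole averaging segment lies in $\setB_R(\vtheta_0)$, so Assumptions~\ref{BndSpec}--\ref{assJac3} apply to $\bar{\Jac}_\iter$ and $\Jac(\vtheta_\iter)$ alike; in particular $\norm{\bar{\Jac}_\iter}\le\beta$, and setting $D_\iter\defeq\bar{\Jac}_\iter-\Jac(\vtheta_\iter)$ the two Jacobians are each within $\epsilon/2$ of $\Jac(\vtheta_0)$, so $\norm{D_\iter}\le\epsilon$.

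The heart of the argument is the bound $\norm[2]{(\mI - \stepsize\,\bar{\Jac}_\iter\transp{\Jac}(\vtheta_\iter))\ve_\iter}\le(1+\stepsize\epsilon^2)\norm[2]{\ve_\iter}$, which I would establish by expanding the square rather than by a crude operator-norm estimate (the latter only yields the far weaker $1+\stepsize\epsilon\beta$). Splitting $\bar{\Jac}_\iter\transp{\Jac}(\vtheta_\iter)=\Jac(\vtheta_\iter)\transp{\Jac}(\vtheta_\iter)+D_\iter\transp{\Jac}(\vtheta_\iter)$ and abbreviating $\vct{q}\defeq\transp{\Jac}(\vtheta_\iter)\ve_\iter$,
\begin{align*}
\norm[2]{(\mI - \stepsize\,\bar{\Jac}_\iter\transp{\Jac}(\vtheta_\iter))\ve_\iter}^2
&= \norm[2]{\ve_\iter}^2 - 2\stepsize\,\norm[2]{\vct{q}}^2 - 2\stepsize\,\innerprod{\transp{D_\iter}\ve_\iter}{\vct{q}} + \stepsize^2\norm[2]{\bar{\Jac}_\iter\vct{q}}^2 \\
&\le \norm[2]{\ve_\iter}^2 - 2\stepsize\,\norm[2]{\vct{q}}^2 + 2\stepsize\,\epsilon\,\norm[2]{\ve_\iter}\norm[2]{\vct{q}} + \stepsize\,\norm[2]{\vct{q}}^2 \\
&\le \norm[2]{\ve_\iter}^2 + \stepsize\,\epsilon^2\,\norm[2]{\ve_\iter}^2,
\end{align*}
where the first inequality uses $\norm{D_\iter}\le\epsilon$ together with $\norm{\bar{\Jac}_\iter}\le\beta$ and $\stepsize\le1/\beta^2$ (so that $\stepsize^2\norm[2]{\bar{\Jac}_\iter\vct{q}}^2\le\stepsize\norm[2]{\vct{q}}^2$), and the second uses Young's inequality $2\epsilon\norm[2]{\ve_\iter}\norm[2]{\vct{q}}\le\epsilon^2\norm[2]{\ve_\iter}^2+\norm[2]{\vct{q}}^2$, after which the three $\norm[2]{\vct{q}}^2$ contributions cancel. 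Taking square roots and using $\sqrt{1+\stepsize\epsilon^2}\le1+\stepsize\epsilon^2$ gives the stated factor.

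For the forcing term I would bound $\norm{\bar{\Jac}_\iter\transp{\Jac}(\vtheta_\iter) - \mJ\transp{\mJ}}$ by writing it as $(\bar{\Jac}_\iter-\mJ)\transp{\Jac}(\vtheta_\iter)+\mJ\transp{(\Jac(\vtheta_\iter)-\mJ)}$ and invoking Assumptions~\ref{assJac2}--\ref{assJac3}, which give $\norm{\bar{\Jac}_\iter-\mJ}\le\epsilon_0+\epsilon/2$ and $\norm{\Jac(\vtheta_\iter)-\mJ}\le\epsilon_0+\epsilon/2$, together with $\norm{\Jac(\vtheta_\iter)},\norm{\mJ}\le\beta$ from Assumption~\ref{BndSpec}; this yields a coefficient of order $\stepsize\beta(\epsilon_0+\epsilon)$ on $\norm[2]{\widetilde{\vr}_\iter}$, matching~\eqref{eq:growthbound}. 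Because the identity for $\ve_{\iter+1}$ already routes $\widetilde{\vr}_\iter$ through this forcing operator only, no further splitting of $\vr_\iter$ is needed, and the triangle inequality combines the two paragraphs into the claimed bound.

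The main obstacle is precisely the sharpness of the $\ve_\iter$-coefficient. One must resist bounding $\mI-\stepsize\,\bar{\Jac}_\iter\transp{\Jac}(\vtheta_\iter)$ in operator norm, and instead exploit the dissipation $-\stepsize\norm[2]{\vct{q}}^2$ generated by the symmetric positive-semidefinite part $\Jac(\vtheta_\iter)\transp{\Jac}(\vtheta_\iter)$ (which survives the $\stepsize\le1/\beta^2$ control of the second-order $\stepsize^2$ term) to absorb the cross term coming from $D_\iter$, so that only the genuinely second-order $\stepsize\epsilon^2$ remains. Getting this cancellation and the accompanying constant bookkeeping right is the delicate point, and is exactly the content of the cited variant of \cite[Lem.~6.7]{oymak_generalization_2019}.
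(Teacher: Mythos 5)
The paper does not actually prove this lemma: it is invoked in Step~II of the proof of Theorem~\ref{thm:maintechnical} and attributed, without proof, to a variant of \cite[Lem.~6.7]{oymak_generalization_2019}. Your proposal therefore supplies an argument the paper omits, and it follows what is essentially the standard route in that line of work: a path-averaged Jacobian to get the exact one-step recursion $\ve_{\iter+1}=(\mI-\stepsize\,\bar{\Jac}_\iter\transp{\Jac}(\vtheta_\iter))\ve_\iter-\stepsize(\bar{\Jac}_\iter\transp{\Jac}(\vtheta_\iter)-\mJ\transp{\mJ})\widetilde{\vr}_\iter$, followed by expanding the square so that the dissipation $-2\stepsize\norm[2]{\transp{\Jac}(\vtheta_\iter)\ve_\iter}^2$ absorbs both the $\stepsize^2$ term (via $\stepsize\le 1/\beta^2$) and the cross term with $D_\iter$ (via Young), leaving only $\stepsize\epsilon^2\norm[2]{\ve_\iter}^2$. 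That computation is correct, and you are right that this cancellation is the non-obvious step — a crude operator-norm bound would give the useless factor $1+\stepsize\epsilon\beta$.

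The one genuine discrepancy is the constant on the forcing term. Your decomposition $(\bar{\Jac}_\iter-\mJ)\transp{\Jac}(\vtheta_\iter)+\mJ(\transp{\Jac}(\vtheta_\iter)-\transp{\mJ})$ gives, under Assumptions~\ref{assJac2} and~\ref{assJac3}, two factors each bounded by $\beta(\epsilon_0+\epsilon/2)$, hence a coefficient $\stepsize\beta(2\epsilon_0+\epsilon)$ rather than the stated $\stepsize\beta(\epsilon_0+\epsilon)$; no rearrangement of the triangle inequality removes the factor of $2$ on $\epsilon_0$, since $\epsilon_0$ necessarily enters through both factors of the product. You paper over this with ``of order,'' but the lemma asserts a precise inequality. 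The slack is harmless downstream — Lemma~\ref{lem:sequenceserrorcont} is applied with $\xi=\beta(\epsilon_0+\epsilon)$, and replacing it by $2\beta(\epsilon_0+\epsilon)$ merely doubles the bound in \eqref{eq:propvrkclose} and the resulting constants in $R$ — but you should either prove the lemma with $2\epsilon_0+\epsilon$ and track that through, or note explicitly that the constant is not what your argument yields. A second, very minor point: the identity $f(\vtheta_{\iter+1})-f(\vtheta_\iter)=\bar{\Jac}_\iter(\vtheta_{\iter+1}-\vtheta_\iter)$ requires absolute continuity of $f$ along the segment since the ReLU network is only differentiable almost everywhere; this is standard and implicitly assumed by the paper as well, but it deserves a sentence.
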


By the previous step, $\vtheta_\iter,\vtheta_{\iter + 1} \in \setB_R(\vtheta_0)$. 
We next bound the two terms on the right hand side of~\eqref{eq:growthbound}. 
Regarding the first term, we note that an immediate consequence of Theorem~\ref{lem:rk} is the following bound on the residual of the linearized iterates:
\begin{align}
\label{eq:lem2eq}
\norm[2]{\widetilde{\vr}_\iter}\le \left(1-\eta\alpha^2\right)^\iter\norm[2]{\vr_0}.
\end{align}
In order to bound the second term in~\eqref{eq:growthbound}, namely, $\norm[2]{\ve_\iter}$, we used the following lemma, proven later in Section \ref{pf9}.
\begin{lemma}
\label{lem:sequenceserrorcont}
Suppose that for positive scalars $\alpha,\stepsize,\rho ,\xi > 0$, $\stepsize \leq 1/\alpha^2$, the sequences $\widetilde{r}_\iter$ and $e_\iter$ obey
\begin{align}
\widetilde{r}_\tau &\leq \left(1 - \stepsize \alpha^2\right)^\iter \rho 
\label{eq:lem2ass1}
\\
e_\tau &\le \left(1 + \stepsize \epsilon^2\right) e_{\tau-1} + \stepsize \xi \widetilde{r}_{\tau-1}
\label{eq:lem2ass2}
\end{align}
Then, for all $\tau \leq \frac{1}{2\eta\epsilon^2}$, we have that
\begin{align}
\label{eq:boundek}
e_\tau \leq 2\xi \frac{\rho}{\alpha^2}.
\end{align}
\end{lemma}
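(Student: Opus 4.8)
The plan is to treat \eqref{eq:lem2ass2} as a linear inhomogeneous first-order recurrence with a constant growth factor $1+\eta\epsilon^2>1$ and an exponentially decaying forcing term supplied by \eqref{eq:lem2ass1} — that is, a discrete Gr\"onwall estimate. First I would record that in the application the linear and nonlinear residuals coincide at initialization, so the recurrence is started from $e_0=0$; this is the base case that makes the unrolling clean (any nonzero $e_0\le 2\xi\rho/\alpha^2$ would also work, but $e_0=0$ is what occurs).

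Next I would unroll \eqref{eq:lem2ass2} from $e_0=0$, substituting the decay bound $\widetilde{r}_{\tau-1}\le(1-\eta\alpha^2)^{\tau-1}\rho$ from \eqref{eq:lem2ass1} into the forcing term at each step. Writing $a=1+\eta\epsilon^2$ and $b=1-\eta\alpha^2$ (note $b\in[0,1)$ since $\eta\le 1/\alpha^2$), this produces the explicit bound
\[
e_\tau \le \eta\xi\rho\sum_{j=0}^{\tau-1} a^{\tau-1-j}\,b^{\,j}.
\]

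The key estimate is then to decouple the growth and decay factors. Since $a>1$ I would bound $a^{\tau-1-j}\le a^{\tau-1}$ and sum the remaining geometric series in $b$ by its infinite-sum value $\sum_{j\ge 0} b^{\,j}=1/(\eta\alpha^2)$; the factor $\eta\alpha^2$ cancels the leading $\eta$, leaving $e_\tau\le \frac{\xi\rho}{\alpha^2}\,a^{\tau-1}$. Finally I would invoke the horizon constraint $\tau\le \frac{1}{2\eta\epsilon^2}$ together with $1+x\le e^{x}$ to get $a^{\tau-1}=(1+\eta\epsilon^2)^{\tau-1}\le e^{\eta\epsilon^2(\tau-1)}\le e^{1/2}<2$, which yields $e_\tau\le 2\xi\rho/\alpha^2$, exactly \eqref{eq:boundek}.

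I do not expect a genuine obstacle here, as the lemma is a controlled-growth recurrence. The one point requiring care is quantitative rather than conceptual: because the growth factor exceeds one, a naive induction carrying the target bound as hypothesis does not close, so one must instead use the explicit unrolled sum and exploit that the stopping horizon $\tau\le 1/(2\eta\epsilon^2)$ keeps $(1+\eta\epsilon^2)^\tau$ below $e^{1/2}$. Pinning the constant at exactly $2$ relies on bounding the partial geometric sum by the infinite one (which is what cancels the $\eta$) and on $e^{1/2}<2$; both margins are comfortable, so the constant is not tight and any reasonable variant of the argument still gives an $O(\xi\rho/\alpha^2)$ bound.
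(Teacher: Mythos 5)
Your proof is correct, and it reaches the stated bound by a route that is close in spirit to, but technically distinct from, the paper's. The paper argues by induction on the target bound itself: assuming $e_t \le 2\xi\rho/\alpha^2$ for earlier iterations, it bounds each increment by $e_{t+1}-e_t \le \eta\xi\bigl(\epsilon^2\tfrac{2\rho}{\alpha^2} + (1-\eta\alpha^2)^t\rho\bigr)$ and then telescopes from $e_0=0$; the horizon constraint enters linearly, as $2\tau\eta\epsilon^2\le 1$, so the two contributions add up to $\xi\rho/\alpha^2+\xi\rho/\alpha^2$. You instead unroll the recurrence completely, pull out the worst-case growth factor $(1+\eta\epsilon^2)^{\tau-1}$, sum the geometric series in $1-\eta\alpha^2$ (which cancels the $\eta$, exactly as in the paper), and control the growth factor by $(1+\eta\epsilon^2)^{\tau-1}\le e^{1/2}<2$. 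Your version avoids any induction hypothesis and makes the discrete Gr\"onwall structure explicit, at the cost of obtaining the constant $2$ through the looser mechanism $e^{1/2}<2$ rather than the paper's exact $1+1=2$; both arguments require $e_0=0$, which you correctly flag and which the paper uses silently in its telescoping sum. One small quibble: your remark that an induction carrying the target bound as hypothesis ``does not close'' is accurate only for a single-step induction of the form $e_\tau\le B\Rightarrow e_{\tau+1}\le B$; the paper's induction does carry the target bound and does close, precisely because it re-sums all increments at each stage rather than iterating one step at a time.
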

With these lemmas in place we now have all the tools to prove that the original and linear residuals are close. In particular, from Step I, we know that $\vtheta_\iter \in \setB_R(\vtheta_0)$ so that the assumptions of Lemma~\ref{lem:boundPerturbation} are satisfied.
Lemma~\ref{lem:boundPerturbation} implies that the assumption~\eqref{eq:lem2ass2} of Lemma~\ref{lem:sequenceserrorcont} is satisfied with $\xi = \beta(\epsilon_0 + \epsilon)$ and the bound~\eqref{eq:lem2eq} implies that the assumption~\eqref{eq:lem2ass1} of  Lemma~\ref{lem:sequenceserrorcont} is satisfied with $\rho = \norm[2]{\vr_0}$. 
Thus, Lemma~\ref{lem:sequenceserrorcont} implies that for all 
$\iter \leq \frac{1}{2\eta\epsilon^2}$
\begin{align}
\label{errt}
\norm[2]{\ve_\tau} 
\le 
2\frac{\beta}{\alpha^2} (\epsilon_0 + \epsilon)
\norm[2]{\vr_0}.
\end{align}
This concludes the proof of~\eqref{eq:propvrkclose}.
\paragraph{Step III: Original and linearized parameters are close:}
First note that by the triangle inequality and Assumptions~\ref{assJac2} and~\ref{assJac3} we have
\begin{align}
\label{eq:bounddiffjacbs}
\opnorm{\Jac(\vtheta_\iter) - \mJ} 
\leq
\norm{\Jac(\vtheta_\iter) - \Jac(\vtheta_0)}
+
\norm{\Jac(\vtheta_0) - \mJ}
\leq
\epsilon_0 + \epsilon.
\end{align}
The difference between the parameter of the original iterate $\vtheta$ and the linearized iterate $\widetilde{\vtheta}$ obey
\begin{align*}
\frac{1}{\stepsize} \norm[2]{\vtheta_\iter - \widetilde{\vtheta}_\iter}
&=
\norm[2]{
\sum_{\iteralt=0}^{\tau-1}
\nabla \loss(\vtheta_\iteralt) - \nabla \losslin(\widetilde{\vtheta}_\iteralt)
} \\
&=
\norm[2]{
\sum_{\iteralt=0}^{\tau-1}
\transp{\Jac}(\vtheta_\iteralt) \vr_\iteralt
-
\transp{\mJ} \widetilde{\vr}_\iteralt
} \\
&\leq
\sum_{\iteralt=0}^{\tau-1}
\norm[2]{(\transp{\Jac}(\vtheta_\iteralt) - \transp{\mJ}) \widetilde{\vr}_\iteralt}
+
\norm[2]{\transp{\Jac}(\vtheta_\iteralt) ( \vr_\iteralt -\widetilde{\vr}_\iteralt) } \\
&\mystackrel{(i)}{\le}
\sum_{\iteralt=0}^{\tau-1}
(\epsilon_0 + \epsilon)
\norm[2]{\widetilde{\vr}_\iteralt}
+
\beta \norm[2]{ \ve_\iteralt }\\
&\mystackrel{(ii)}{\le}(\epsilon_0 + \epsilon)\sum_{\iteralt=0}^{\tau-1}\left(1-\eta\alpha^2\right)^{\iter-1}\norm[2]{\vr_0}
+
\beta \sum_{\iteralt=0}^{\tau-1}\norm[2]{ \ve_\iteralt }\\
&=(\epsilon_0 + \epsilon)\frac{1-(1-\eta\alpha^2)^\tau}{\eta\alpha^2}\norm[2]{\vr_0}
+
\beta \sum_{\iteralt=0}^{\tau-1}\norm[2]{ \ve_\iteralt }\\
&\mystackrel{(iii)}{\le}
\frac{(\epsilon_0 + \epsilon)}{\eta\alpha^2}\norm[2]{\vr_0}
+  2\iter\frac{\beta^2}{\alpha^2} (\epsilon_0 + \epsilon)
\norm[2]{\vr_0} \\
%
&= \frac{1}{\eta \alpha^2}(\epsilon_0 + \epsilon)
\left( 1 + 2 \eta \iter \beta^2 \right) \norm[2]{\vr_0}.
\end{align*}
Here, inequality (i) follows from~\eqref{eq:bounddiffjacbs} and an application of Assumption~\ref{BndSpec}, inequality~(ii) from \eqref{eq:lem2eq}, inequality~(iii) from $\eta\le 1/\beta^2$ which implies $(1-\eta\alpha^2)\ge 0$ and \eqref{errt}. 
This concludes the proof of the bound~\eqref{eq:propthetakclose}. 

\paragraph{Step IV: Completing the proof of \eqref{eq:indhypinRBall}:} By the triangle inequality
\begin{align*}
\norm[2]{\vtheta_\iter - \vtheta_0}
&\leq
\norm[2]{\widetilde{\vtheta}_\iter - \vtheta_0}
+
\norm[2]{\vtheta_\iter - \widetilde{\vtheta}_\iter} \\
&\mystackrel{(i)}{\leq}
\norm[2]{\mJ^\dagger\vr_0}
+\frac{1}{\alpha^2}(\epsilon_0 + \epsilon)
\left( 1 + 2 \eta \iter \beta^2 \right) \norm[2]{\vr_0} \\
&\mystackrel{(ii)}{\leq}
R/2,
\end{align*}
where inequality (i) follows from the bound~\eqref{eq:propthetakclose}, which we just proved, and the fact that, from equation~\eqref{eq:iadf} in Theorem~\ref{lem:rk},
\begin{align*}
\norm[2]{\widetilde{\vtheta}_\iter - \vtheta_0}^2
&=
\sum_{i=1}^n
\innerprod{\vw_i}{\vr_0}^2
\frac{(1 - (1 - \stepsize \sigma^2_i)^\iter)^2}{\sigma_i^2} \\
&\leq
\sum_{i=1}^n
\innerprod{\vw_i}{\vr_0}^2 / \sigma_i^2 \\
&= \norm{\pinv{\mJ} \vr_0 }^2.
\end{align*}
Finally, inequality~(ii) follows from the definition of $R$ in equation~\eqref{eq:defR} along with $\iter \leq T$, by assumption.

\subsubsection{Proof of Lemma~\ref{lem:sequenceserrorcont}}\label{pf9}

We prove the result by induction. Assume equation~\eqref{eq:boundek} holds true for some $\iter$. We prove that then it also holds true for $\iter + 1$. 
By the two assumptions in the lemma,
\begin{align*}
e_{\iter + 1} - e_\iter
&\leq \stepsize \epsilon^2 e_\iter  + \stepsize \xi \widetilde{r}_{\iter} \\
&\leq  \stepsize \epsilon^2 e_\iter + \stepsize \xi (1 - \stepsize \alpha^2)^{\iter} \rho \\
&\mystackrel{(i)}{\leq} \stepsize \xi
\left( \epsilon^2 \frac{2 \rho}{\alpha^2} + (1 - \stepsize \alpha^2)^{\iter} \rho \right),
\end{align*}
where (i) follows from the induction assumption~\eqref{eq:boundek}. Summing up the difference of the errors gives
\begin{align*}
\frac{e_\iter}{\xi} 
&= \sum_{\iteralt=0}^{\iter-1} \frac{(e_{\iteralt+1} - e_\iteralt)}{\xi} \\
&\leq
\tau\stepsize  \epsilon^2 \frac{2\rho}{\alpha^2} + \stepsize \rho \sum_{t=0}^{\iter-1} (1 - \stepsize \alpha^2)^\iteralt \\
&=
\tau \stepsize  \epsilon^2 \frac{2 \rho}{\alpha^2} 
+ \stepsize \rho \frac{1 - (1 - \stepsize\alpha^2)^\iter}{\stepsize \alpha^2} \\
&\leq
\frac{\rho}{\alpha^2} ( \stepsize 2 \iter \epsilon^2  + 1) \\
&\leq
2 \frac{\rho}{\alpha^2},
\end{align*}
where the last inequality follows from the assumption that
$\tau \le \frac{1}{2\eta\epsilon^2}$.

\subsection{Proof of Theorem \ref{lem:rk}}
The proof of identity~\eqref{eq:linresformula} is equivalent to the proof of Proposition~\eqref{prop:residual}. 
Regarding inequality~\eqref{eq:lindifftheta}, note that
\begin{align*}
\widetilde{\vtheta}_\iter - \widetilde{\vtheta}_0
=
- \stepsize \sum_{t=0}^{\iter-1} \nabla \losslin(\widetilde{\vtheta}_t)
=
- \stepsize \sum_{t=0}^{\iter-1}
\transp{\mJ} \widetilde{\vr}_t
=
-\stepsize \mV
\left( \sum_{t=0}^{\iter-1} \mtx{\Sigma} \left(\mI - \stepsize \mtx{\Sigma}^2\right)^t \right) 
\begin{bmatrix}
\innerprod{\vw_1}{\vr_0} \\
\vdots \\
\innerprod{\vw_n}{\vr_0}
\end{bmatrix}.
\end{align*}
Thus
\begin{align}
\label{eq:iadf}
\innerprod{\vv_i}{\widetilde{\vtheta}_\iter - \widetilde{\vtheta}_0}
=
- \stepsize \sigma_i \innerprod{\vw_i}{\vr_0}
\left( \sum_{t=0}^{\iter-1} (1 - \stepsize \sigma_i^2)^t \right)
=
- \stepsize
\sigma_i \innerprod{\vw_i}{\vr_0}
\frac{1 - (1 - \stepsize \sigma^2_i)^\iter}{\stepsize \sigma_i^2}.
\end{align}
This in turn implies that
\[
\norm[2]{\widetilde{\vtheta}_\iter - \widetilde{\vtheta}_0}^2
=
\sum_{i=1}^n
\innerprod{\vv_i}{\widetilde{\vtheta}_\iter - \widetilde{\vtheta}_0}^2
=
\sum_{i=1}^n
\left(
\innerprod{\vw_i}{\vr_0}
\frac{1 - (1 - \stepsize \sigma^2_i)^\iter}{\sigma_i}
\right)^2.
\]


\section{Proofs for neural network generators (proof of Theorem~\ref{thm:mainResult})}
\label{pfthm3}


The proof of Theorem~\ref{thm:mainResult} relies on the fact that, in the overparameterized regime, the non-linear least squares problem is well approximated by an associated linearized least squares problem. Studying the associated linear problem enables us to prove the result.

We apply Theorem \ref{thm:maintechnical},
which ensures that the associated linear problem is a good approximation of the non-linear least squarest problem,
with the nonlinearity 
$G(\mC) = \relu( \mU \mC) \vv$ and with the parameter given by $\vtheta=\mC$. Recall that $\vv$ is a fixed vector with half of the entries $1/\sqrt{k}$, and the other half $-1/\sqrt{k}$. 

As the reference Jacobian in the associated linear problem, we choose a matrix $\mJ \in \reals^{n \times n k}$, that is very close to the original Jacobian at initialization, $\Jac(\mC_0)$, and that obeys $\mJ \transp{\mJ} = \EX{ \Jac(\mC) \transp{\Jac}(\mC) }$. 
The concrete choice of $\mJ$ is discussed in the next paragraph. 
We apply Theorem \ref{thm:maintechnical} with the following choices of parameters:
\begin{align*}
\alpha=\sigma_{n}\left(\mtx{\Sigma}(\mU)\right),
\quad \beta 
= \opnorm{\mU},
\quad
\epsilon_0
=
\beta
\left(4 \frac{\log(\frac{2n}{\delta})}{k} \right)^{1/4} ,
\quad
\epsilon = \frac{\xi}{8} 
\beta \frac{\alpha^2}{\beta^2},
\quad 
\omega = \frac{\norm[2]{\vy}}{\sqrt{n} \beta } \xi \frac{\alpha^2}{\beta^2}.
\end{align*}

A few comments on the intuition behind those choices:
As shown below, $\alpha$ and $\beta$ are chosen so that Assumption~\ref{BndSpec} is satisfied. 
Regarding the choices of $\epsilon, \epsilon_0$, and $\omega$, recall that the difference between the linear and non-linear residual is bounded by $2 \frac{\beta}{\alpha^2} (\epsilon_0 + \epsilon) \norm[2]{\vr_0}$ (by Theorem~\ref{thm:maintechnical}). We want this error to be bounded  by $\xi \norm[2]{\vy}$. This bound is guaranteed by our choice of $\epsilon$, by choosing $\epsilon_0$ such that it obeys $\epsilon_0 \leq \epsilon$, and by choosing $\omega$ so that the initial residual is bounded by $\norm[2]{\vr_0} \leq 2 \norm[2]{\vy}$. Note that $\epsilon_0 \leq \epsilon$ holds by assumption~\eqref{eq:thmoverparamass}. 

We next verify by applying a series of Lemmas proven in Appendix \ref{pfsidelemmasthm3} that the conditions of Theorem~\ref{thm:maintechnical} are  satisfied (specifically, Assumptions \ref{BndSpec}, \ref{assJac2}, \ref{assJac3} on the Jacobians of the non-linear map and the associated linearized map, to be bounded and sufficiently close to each other).

\paragraph{Choice of reference Jacobian and verification of assumption~\ref{assJac2}:} 
We chose the reference Jacobian $\mJ$ so that it is $\epsilon_0$ close to the random initialization $\Jac(\mC_0)$ (with high probability), and thus satisfies assumption~\ref{assJac2}. Towards this goal, we require the following two lemmas.

\begin{lemma}[Concentration lemma]
\label{lem:ConcentrationLemma}
Consider $G(\mC) = \relu(\mU \mC) \vv$ with $\vv\in \reals^k$ and $\mU \in \reals^{n\times k}$ and associated Jacobian $\Jac(\mC)$~\eqref{eq:JacmC}. 
Let $\mC\in\reals^{n\times k}$ be generated at random with i.i.d.~$\mathcal{N}(0,\omega^2)$ entries. Then,
\begin{align*}
\opnorm{\Jac(\mC)\Jac^T(\mC)-\mtx{\Sigma}\left(\mU\right)}
\le\norm{\mU}^2 \sqrt{ \log\left(\frac{2n}{\delta}\right)  \sum_{\ell=1}^k v_\ell^4 }
,
\end{align*}
holds with probability at least $1-\delta$.
\end{lemma}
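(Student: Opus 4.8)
The plan is to write the Jacobian Gram matrix as a sum of $k$ independent random matrices indexed by the columns of $\mC$, identify $\mtx{\Sigma}(\mU)$ as its mean, and then apply a matrix concentration inequality. Starting from the block form of the transposed Jacobian in~\eqref{eq:JacmC}, carrying out the product $\Jac(\mC)\transp{\Jac}(\mC)$ block-by-block gives
\begin{align}
\label{eq:concplan-decomp}
\Jac(\mC)\transp{\Jac}(\mC)
=
\sum_{\ell=1}^k v_\ell^2\,\mtx{A}_\ell,
\qquad
\mtx{A}_\ell
\defeq
\diag\!\big(\relu'(\mU\vc_\ell)\big)\,\mU\transp{\mU}\,\diag\!\big(\relu'(\mU\vc_\ell)\big),
\end{align}
where $\vc_\ell$ denotes the $\ell$-th column of $\mC$. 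Since the columns $\vc_\ell$ are i.i.d.\ Gaussian, the matrices $\mtx{A}_\ell$ are i.i.d., and taking expectations term-by-term identifies $\mtx{\Sigma}(\mU)=\EX{\Jac(\mC)\transp{\Jac}(\mC)}=\sum_{\ell=1}^k v_\ell^2\,\EX{\mtx{A}_\ell}$, whose entries are exactly~\eqref{eq:neuralTangentKernel}. Subtracting, the quantity to control is $\opnorm{\sum_{\ell=1}^k\mtx{M}_\ell}$ with $\mtx{M}_\ell\defeq v_\ell^2(\mtx{A}_\ell-\EX{\mtx{A}_\ell})$, i.e.\ the spectral norm of a sum of independent, symmetric, mean-zero random matrices.

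The key deterministic input is a uniform spectral bound on each summand. Because $\relu'$ takes values in $\{0,1\}$, the matrix $\diag(\relu'(\mU\vc_\ell))$ is an orthogonal coordinate projection, so combined with $\mU\transp{\mU}\preceq\norm{\mU}^2\mI$ this yields $0\preceq\mtx{A}_\ell\preceq\norm{\mU}^2\mI$, and the same bound holds for $\EX{\mtx{A}_\ell}$. As $\mtx{A}_\ell$ and $\EX{\mtx{A}_\ell}$ are both positive semidefinite with spectral norm at most $\norm{\mU}^2$, their difference obeys $-\norm{\mU}^2\mI\preceq\mtx{A}_\ell-\EX{\mtx{A}_\ell}\preceq\norm{\mU}^2\mI$, and therefore $\mtx{M}_\ell^2\preceq v_\ell^4\,\norm{\mU}^4\,\mI$ almost surely.

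With this almost-sure bound I would invoke the matrix Hoeffding inequality for $\sum_\ell\mtx{M}_\ell$, with matrix-variance proxy $\sigma^2=\opnorm{\sum_\ell v_\ell^4\norm{\mU}^4\mI}=\norm{\mU}^4\sum_\ell v_\ell^4$. This produces a tail of the form $2n\exp\!\big(-t^2/(c\,\norm{\mU}^4\sum_\ell v_\ell^4)\big)$; equating the right-hand side to $\delta$ and solving for $t$ gives $\opnorm{\sum_\ell\mtx{M}_\ell}\le\norm{\mU}^2\sqrt{\log(2n/\delta)\sum_\ell v_\ell^4}$ up to the absolute constant $c$, which is the asserted bound. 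One may equally use matrix Bernstein: there the variance term reproduces $\norm{\mU}^2\sqrt{\log(2n/\delta)\sum_\ell v_\ell^4}$ as the leading contribution, while the uniform bound $\max_\ell v_\ell^2\norm{\mU}^2=\norm{\mU}^2/k$ only adds a lower-order logarithmic term.

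The reduction to an independent sum in~\eqref{eq:concplan-decomp} and the spectral bound on each summand are routine; the only delicate point is quoting the matrix concentration inequality with the precise constant displayed in the statement. The factor $2n$ inside the logarithm is exactly the ambient dimension (with the factor $2$ coming from bounding $\pm\lambda$) that the matrix Hoeffding bound contributes, and the factor $\sum_\ell v_\ell^4$ is the natural variance proxy generated by the weights $v_\ell^2$; matching the leading absolute constant is the single place where the exact form of the concentration inequality matters.
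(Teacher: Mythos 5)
Your proposal is correct and follows essentially the same route as the paper: decompose $\Jac(\mC)\transp{\Jac}(\mC)-\mtx{\Sigma}(\mU)$ into a sum of independent, centered, symmetric matrices indexed by the columns of $\mC$, bound each summand in the semidefinite order using that $\diag(\relu'(\mU\vc_\ell))$ is a coordinate projection, and apply matrix Hoeffding with variance proxy $\norm{\mU}^4\sum_\ell v_\ell^4$. The only cosmetic difference is that the paper dominates each summand by $\pm v_\ell^2\,\mU\transp{\mU}$ rather than $\pm v_\ell^2\norm{\mU}^2\mI$, which yields the same variance proxy, and your caveat about the absolute constant in the final bound is apt since the paper's own last step is loose on exactly that point.
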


%
To see that Lemma~\ref{lem:ConcentrationLemma} implies the condition~\eqref{eq:assJacJclose}, we use the following lemma. 

\begin{lemma}[{\cite[Lem.~6.4]{oymak_generalization_2019}}]
\label{eq:lemmaC1impC2}
Let  $\mX \in \reals^{n\times \N}$, $\N\geq n$ and let $\mSigma$ be $n\times n$ psd matrix obeying $\norm{ \mX \transp{\mX} - \mSigma } \leq \epsilon_0^2/4$, for a scalar $\epsilon \geq 0$. Then there exists a matrix $\mJ \in \reals^{n\times \N}$ obeying $\mSigma = \mJ \transp{\mJ}$ such that
\[
\norm{\mJ - \mX} \leq \epsilon_0.
\]
\end{lemma}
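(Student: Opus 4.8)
The plan is to construct the desired factor $\mJ$ by ``rotating'' the positive semidefinite square root $\mSigma^{1/2}$ into alignment with $\mX$ using the orthogonal (polar) factor of $\mX$. Concretely, I would first form the polar decomposition $\mX = \mtx{H}\mtx{R}$, where $\mtx{H} = (\mX\transp{\mX})^{1/2}$ is the positive semidefinite left polar factor and $\mtx{R} \in \reals^{n \times \N}$ has orthonormal rows, i.e.\ $\mtx{R}\transp{\mtx{R}} = \mI$. When $\mX$ has full row rank this is immediate with $\mtx{R} = \mtx{H}^{-1}\mX$; in the rank-deficient case one extends the resulting partial isometry to one with orthonormal rows, which is possible precisely because $\N \ge n$ leaves enough room in $\reals^{\N}$. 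I would then \emph{define} $\mJ := \mSigma^{1/2}\mtx{R}$. The factorization property is immediate: $\mJ\transp{\mJ} = \mSigma^{1/2}\mtx{R}\transp{\mtx{R}}\mSigma^{1/2} = \mSigma^{1/2}\mSigma^{1/2} = \mSigma$, as required.

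It then remains only to bound $\norm{\mJ - \mX}$. Writing $\mX = \mtx{H}\mtx{R}$ and using $\norm{\mtx{R}} = 1$ (orthonormal rows), I would estimate
\[
\norm{\mJ - \mX} = \norm{(\mSigma^{1/2} - \mtx{H})\mtx{R}} \le \norm{\mSigma^{1/2} - \mtx{H}} = \norm{\mSigma^{1/2} - (\mX\transp{\mX})^{1/2}}.
\]
Thus the whole statement reduces to controlling the perturbation of the matrix square root in operator norm, in terms of the perturbation of the matrices themselves.

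The key ingredient, and the main technical point, is the operator H\"older bound: for positive semidefinite $\mtx{A}, \mtx{B}$ one has $\norm{\mtx{A}^{1/2} - \mtx{B}^{1/2}} \le \norm{\mtx{A} - \mtx{B}}^{1/2}$. I would prove this directly. Setting $\mtx{C} = \mtx{A}^{1/2} - \mtx{B}^{1/2}$ and using $\mtx{A} - \mtx{B} = \mtx{A}^{1/2}\mtx{C} + \mtx{C}\mtx{B}^{1/2}$, I test against a unit eigenvector $\vct{v}$ of the symmetric matrix $\mtx{C}$ whose eigenvalue $\lambda$ has largest modulus, obtaining $\innerprod{\vct{v}}{(\mtx{A}-\mtx{B})\vct{v}} = \lambda\innerprod{\vct{v}}{(\mtx{A}^{1/2}+\mtx{B}^{1/2})\vct{v}}$. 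The relation $\mtx{A}^{1/2}\vct{v} = \mtx{B}^{1/2}\vct{v} + \lambda\vct{v}$ then forces $\innerprod{\vct{v}}{(\mtx{A}^{1/2}+\mtx{B}^{1/2})\vct{v}} \ge |\lambda|$ (treating the cases $\lambda \ge 0$ and $\lambda < 0$ symmetrically), so that $\norm{\mtx{C}}^2 = \lambda^2 \le |\innerprod{\vct{v}}{(\mtx{A}-\mtx{B})\vct{v}}| \le \norm{\mtx{A}-\mtx{B}}$. Applying this with $\mtx{A} = \mSigma$ and $\mtx{B} = \mX\transp{\mX}$, together with the hypothesis $\norm{\mX\transp{\mX} - \mSigma} \le \epsilon_0^2/4$, gives $\norm{\mJ - \mX} \le (\epsilon_0^2/4)^{1/2} = \epsilon_0/2 \le \epsilon_0$, which even leaves a factor of two to spare. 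I expect the only genuine subtlety to be the rank-deficient polar-decomposition case (where $\N \ge n$ is used essentially); everything else is a short chain of inequalities resting on the square-root bound above.
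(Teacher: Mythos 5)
Your argument is correct, but note that the paper does not actually prove this lemma --- it imports it verbatim as Lemma~6.4 of the cited reference \citep{oymak_generalization_2019}, so there is no in-paper proof to compare against. What you have written is a valid, self-contained derivation. The construction $\mX = \mtx{H}\mtx{R}$ with $\mtx{H}=(\mX\transp{\mX})^{1/2}$ and $\mtx{R}\transp{\mtx{R}}=\mI$ (obtainable from the SVD $\mX=\mW\mtx{\Sigma}_X\transp{\mV}$ by taking $\mtx{R}=\mW\transp{\mV}$, which handles the rank-deficient case uniformly and is where $\N\ge n$ enters) immediately gives $\mJ=\mSigma^{1/2}\mtx{R}$ with $\mJ\transp{\mJ}=\mSigma$, and reduces everything to the operator-H\"older bound $\norm{\mtx{A}^{1/2}-\mtx{B}^{1/2}}\le\norm{\mtx{A}-\mtx{B}}^{1/2}$ for psd $\mtx{A},\mtx{B}$. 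Your eigenvector proof of that bound checks out: with $\mtx{C}=\mtx{A}^{1/2}-\mtx{B}^{1/2}$ and $\mtx{C}\vct{v}=\lambda\vct{v}$, the identity $\innerprod{\vct{v}}{(\mtx{A}-\mtx{B})\vct{v}}=\lambda\,\innerprod{\vct{v}}{(\mtx{A}^{1/2}+\mtx{B}^{1/2})\vct{v}}$ together with $\innerprod{\vct{v}}{(\mtx{A}^{1/2}+\mtx{B}^{1/2})\vct{v}}=2\innerprod{\vct{v}}{\mtx{B}^{1/2}\vct{v}}+\lambda\ge|\lambda|$ (and symmetrically for $\lambda<0$) yields $\norm{\mtx{C}}^2\le\norm{\mtx{A}-\mtx{B}}$. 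Applied with $\mtx{A}=\mSigma$, $\mtx{B}=\mX\transp{\mX}$ this gives $\norm{\mJ-\mX}\le\epsilon_0/2$, which indeed beats the stated bound by a factor of two --- consistent with the hypothesis being written as $\epsilon_0^2/4$ precisely so that the square root lands at $\epsilon_0/2\le\epsilon_0$. The only cosmetic issue is inherited from the statement itself, where ``for a scalar $\epsilon\ge 0$'' should read $\epsilon_0\ge 0$.
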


In order to verify the condition~\eqref{eq:assJacJclose}, note that using the fact that $\sum_\ell^k v_\ell^4 = \frac{1}{k}$ by Lemma~\ref{lem:ConcentrationLemma}, with probability at least $1-\delta$, we have 
\begin{align}
\label{mytmpf}
\norm{\Jac(\mC_0)\Jac^T(\mC_0)-\mtx{\Sigma}(\mU)}
\leq 
\norm{\mU}^2 \sqrt{\frac{\log\left(\frac{2n}{\delta}\right)}{k}} 
= (\epsilon_0/2)^2,
\end{align}
Combining inequality~\eqref{mytmpf} with Lemma~\ref{eq:lemmaC1impC2} (applied with $\mX = \Jac(\mC_0)$), it follows that condition~\eqref{eq:assJacJclose} holds for the chosen value of $\epsilon_0$, concluding the proof of Assumption~\ref{assJac2} being satisfied.
This proof specifies the reference Jacobian $\mJ$ as the matrix that is $\epsilon_0$-close to $\Jac(\mC_0)$, and that exists by Lemma~\ref{eq:lemmaC1impC2}.

\paragraph{Verifying Assumption~\ref{BndSpec}:} To verify Assumption \ref{BndSpec}, note that by definition $\mJ\mJ^T=\mtx{\Sigma}(\mU)$ thus trivially $\sigma_{n}\left(\mtx{\Sigma}(\mU)\right)\ge \alpha$ holds. 
Furthermore, Lemma~\ref{lem:boundJac} below combined with the fact that $\norm[2]{\vv}=1$ implies that $\opnorm{\mJ}\le \beta$ and $\opnorm{\Jac(\mC)}\le \beta$ for all $\mC$. This completes the verification of Assumption \ref{BndSpec}. 
It remains to show that the Jacobian has bounded spectrum:

\begin{lemma}[Spectral norm of Jacobian]
\label{lem:boundJac}
Consider $G(\mC) = \relu(\mU \mC) \vv$ with $\vv\in \reals^k$ and $\mU \in \reals^{n\times k}$ and associated Jacobian $\Jac(\mC)$~\eqref{eq:JacmC}, and let $\mJ$ be any matrix obeying 
$\mJ \transp{\mJ} = \EX{\Jac(\mC) \transp{\Jac}(\mC)}$, 
where the expectation is over a matrix $\mC$ with iid $\mc N(0,\omega^2)$ entries.
Then
\[
\opnorm{\Jac(\mC)}
\leq
\norm[2]{\vv}\opnorm{\mU}
\quad
\text{and}
\quad
\opnorm{\mJ}
\leq
\norm[2]{\vv}
\opnorm{\mU}.
\]
\end{lemma}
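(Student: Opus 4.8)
The plan is to control $\opnorm{\Jac(\mtx{C})}$ by passing to the Gram matrix $\Jac(\mtx{C})\transp{\Jac}(\mtx{C})$ and bounding it in the Loewner (positive-semidefinite) order, and then to transfer the resulting bound to $\mJ$ by observing that the relevant inequality is \emph{deterministic} and therefore survives the expectation defining $\mJ\transp{\mJ}$. First I would read off from the block form~\eqref{eq:JacmC} that, writing $\mtx{D}_\ell \defeq \diag(\relu'(\mU \vc_\ell))$, the Jacobian is the horizontal concatenation $\Jac(\mtx{C}) = [\,v_1 \mtx{D}_1 \mU \mid \cdots \mid v_k \mtx{D}_k \mU\,]$, so that
\[
\Jac(\mtx{C})\transp{\Jac}(\mtx{C}) = \sum_{\ell=1}^k v_\ell^2 \, \mtx{D}_\ell \mU \transp{\mU} \mtx{D}_\ell .
\]
The key structural fact is that $\relu'(t) = \ind{t>0}\in\{0,1\}$, so each $\mtx{D}_\ell$ is a diagonal $0/1$ projection obeying $0 \preceq \mtx{D}_\ell \preceq \mI$ and $\mtx{D}_\ell^2 = \mtx{D}_\ell$.

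Next I would bound each summand in the PSD order. Since $\mU\transp{\mU} \preceq \opnorm{\mU}^2 \mI$, sandwiching by $\mtx{D}_\ell$ gives $\mtx{D}_\ell \mU\transp{\mU} \mtx{D}_\ell \preceq \opnorm{\mU}^2 \mtx{D}_\ell^2 = \opnorm{\mU}^2 \mtx{D}_\ell \preceq \opnorm{\mU}^2 \mI$. Summing over $\ell$ and using $\sum_\ell v_\ell^2 = \norm[2]{\vv}^2$ then yields
\[
\Jac(\mtx{C})\transp{\Jac}(\mtx{C}) \preceq \Big(\sum_{\ell=1}^k v_\ell^2\Big)\, \opnorm{\mU}^2\, \mI = \norm[2]{\vv}^2\, \opnorm{\mU}^2\, \mI ,
\]
and taking operator norms gives $\opnorm{\Jac(\mtx{C})}^2 = \opnorm{\Jac(\mtx{C})\transp{\Jac}(\mtx{C})} \leq \norm[2]{\vv}^2 \opnorm{\mU}^2$, which is the first claimed bound. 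I would emphasize that this inequality holds for \emph{every} realization of $\mtx{C}$, not merely in expectation.

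Finally, for $\mJ$ I would exploit exactly that per-realization strength. Since $\Jac(\mtx{C})\transp{\Jac}(\mtx{C}) \preceq \norm[2]{\vv}^2 \opnorm{\mU}^2 \mI$ holds pointwise and the expectation preserves the Loewner order, we get $\mJ\transp{\mJ} = \EX{\Jac(\mtx{C})\transp{\Jac}(\mtx{C})} \preceq \norm[2]{\vv}^2 \opnorm{\mU}^2 \mI$, whence $\opnorm{\mJ}^2 = \opnorm{\mJ\transp{\mJ}} \leq \norm[2]{\vv}^2 \opnorm{\mU}^2$. I do not expect any genuine obstacle in this argument: everything reduces to the single observation that the ReLU derivative acts as a $0/1$ mask, so that each $\mtx{D}_\ell$ is a contraction and the spectral bound on $\mU$ passes cleanly through the sandwich $\mtx{D}_\ell \mU\transp{\mU}\mtx{D}_\ell$. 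The only point that warrants a moment's care is that the bound on $\mJ$ must be derived from the deterministic PSD inequality rather than from a bound on $\opnorm{\Jac(\mtx{C})}$ in isolation, since it is the full matrix inequality—and not merely a scalar norm bound—that is stable under the averaging that defines $\EX{\Jac(\mtx{C})\transp{\Jac}(\mtx{C})}$.
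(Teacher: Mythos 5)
Your proof is correct, and it reaches the bound by a slightly different route than the paper. Both arguments start from the same Gram-matrix identity $\Jac(\mC)\transp{\Jac}(\mC)=\sum_{\ell=1}^k v_\ell^2\,\mtx{D}_\ell \mU\transp{\mU}\mtx{D}_\ell$ with $\mtx{D}_\ell=\diag(\relu'(\mU\vc_\ell))$, but the paper then rewrites this sum as the Hadamard product $\bigl(\relu'(\mU\mC)\,\diag(v_1^2,\dots,v_k^2)\,\transp{\relu'(\mU\mC)}\bigr)\odot\bigl(\mU\transp{\mU}\bigr)$ and invokes the Schur-product eigenvalue inequality $\lambda_{\max}(\mA\odot\mB)\le\lambda_{\max}(\mA)\max_i \mB_{ii}$, whereas you keep the sum and bound each summand by elementary Loewner-order sandwiching, using only that each $\mtx{D}_\ell$ is a $0/1$ diagonal projection. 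Your treatment of $\mJ$ is also more economical: the paper recomputes $\mJ\transp{\mJ}=\mtx{\Sigma}(\mU)$ as a Hadamard product of the expected activation-correlation matrix with $\mU\transp{\mU}$ and applies the Schur bound a second time, while you simply observe that the deterministic inequality $\Jac(\mC)\transp{\Jac}(\mC)\preceq\norm[2]{\vv}^2\opnorm{\mU}^2\mI$ survives the expectation, handling both claims with one argument. What the paper's route buys is a representation (the Hadamard/kernel form of $\mtx{\Sigma}(\mU)$) that it reuses elsewhere to identify the eigenvectors and eigenvalues of the expected Jacobian; what your route buys is a self-contained proof that avoids the Schur product theorem entirely and makes transparent why the bound is uniform in $\mC$. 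Your closing caveat—that the bound on $\mJ$ must come from the matrix inequality rather than from the scalar norm bound—is well taken, though in fact here even the scalar route would work since $\opnorm{\E[\mtx{X}]}\le\E[\opnorm{\mtx{X}}]$ for the PSD matrices $\mtx{X}=\Jac(\mC)\transp{\Jac}(\mC)$; still, the Loewner-order version is the cleaner statement.
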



\paragraph{Bound on initial residual:}
For what follows, we require a bound on the initial residual. To prove this bound, we apply the following lemma.
\begin{lemma}[Initial residual]
\label{lem:InitialResidual}
Consider $G(\mC) = \relu(\mU \mC) \vv$, and let $\mC\in\reals^{n\times k}$ be generated at random with i.i.d.~$\mathcal{N}(0,\omega^2)$ entries. Suppose half of the entries of $\vv$ are $\nu/\sqrt{k}$ and the other half are $-\nu/\sqrt{k}$, for some constant $\nu>0$. Then, with probability at least $1-\delta$,
\begin{align*}
\norm[2]{G(\mC)} 
\leq 
\nu \omega \sqrt{8 \log(2n/\delta)}\fronorm{\mU}.
\end{align*}
\end{lemma}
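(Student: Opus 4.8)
The plan is to bound $\norm[2]{G(\mC)}$ coordinate-by-coordinate via Gaussian concentration of Lipschitz functions, exploiting the fact that the balanced output weights $\vv$ force each coordinate of $G(\mC)$ to have mean exactly zero. Writing $\mC = [\vc_1,\ldots,\vc_k]$ with independent columns $\vc_\ell \sim \mathcal{N}(\vct{0},\omega^2 \mtx{I})$ and letting $\vu_j$ denote the $j$-th row of $\mU$, the $j$-th output coordinate is
\[
[G(\mC)]_j = \sum_{\ell=1}^k v_\ell \,\relu(\innerprod{\vu_j}{\vc_\ell}).
\]
Since $\innerprod{\vu_j}{\vc_\ell}\sim\mathcal{N}(0,\omega^2\norm[2]{\vu_j}^2)$ and $\E[\relu(X)] = s/\sqrt{2\pi}$ for $X\sim\mathcal{N}(0,s^2)$, the mean of this coordinate equals $\tfrac{\omega\norm[2]{\vu_j}}{\sqrt{2\pi}}\sum_{\ell=1}^k v_\ell$. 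Because half the entries of $\vv$ equal $+\nu/\sqrt{k}$ and half equal $-\nu/\sqrt{k}$, the sum $\sum_\ell v_\ell$ vanishes, so each coordinate of $G(\mC)$ is exactly centered. This cancellation is the heart of the lemma and the reason the sign pattern of $\vv$ is needed.

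Next I would show that, viewed as a function of the Gaussian matrix $\mC$ in the Frobenius norm, each coordinate $[G(\mC)]_j$ is Lipschitz with constant $\nu\norm[2]{\vu_j}$. This follows from two applications of Cauchy--Schwarz together with the $1$-Lipschitzness of $\relu$: for any $\mC,\mC'$,
\[
\big|[G(\mC)]_j - [G(\mC')]_j\big|
\leq \sum_{\ell=1}^k |v_\ell|\,\norm[2]{\vu_j}\,\norm[2]{\vc_\ell-\vc'_\ell}
\leq \norm[2]{\vu_j}\,\norm[2]{\vv}\,\fronorm{\mC-\mC'},
\]
and $\norm[2]{\vv}=\nu$. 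Applying the standard Gaussian concentration inequality for Lipschitz functions (writing $\mC = \omega\,\mtx{G}$ with $\mtx{G}$ standard normal, so the Lipschitz constant picks up a factor $\omega$) and recalling that the mean is zero, each coordinate obeys
\[
\Pr\!\big[\,|[G(\mC)]_j| \geq t\,\big] \leq 2\exp\!\left(-\frac{t^2}{2\nu^2\omega^2\norm[2]{\vu_j}^2}\right).
\]

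Finally I would set the per-coordinate tail probability to $\delta/n$, which gives $t_j = \nu\omega\norm[2]{\vu_j}\sqrt{2\log(2n/\delta)}$, and union-bound over $j=1,\ldots,n$. On the resulting event of probability at least $1-\delta$ we have $[G(\mC)]_j^2 \leq 2\nu^2\omega^2\norm[2]{\vu_j}^2\log(2n/\delta)$ for every $j$, so summing over $j$ and using $\sum_{j=1}^n \norm[2]{\vu_j}^2 = \fronorm{\mU}^2$ yields $\norm[2]{G(\mC)}^2 \leq 2\nu^2\omega^2\log(2n/\delta)\fronorm{\mU}^2$. Taking square roots proves the claim, in fact with the sharper constant $\sqrt{2}$ in place of $\sqrt{8}$, which comfortably implies the stated bound. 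The only genuine subtlety is the mean computation in the first paragraph; the Lipschitz estimate and the concentration inequality are routine, and it is the union bound over the $n$ coordinates that produces the $\log(2n/\delta)$ factor.
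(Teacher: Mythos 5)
Your proof is correct, and in fact yields the sharper constant $\sqrt{2}$ in place of $\sqrt{8}$. The overall skeleton matches the paper's: both arguments bound each coordinate $[G(\mC)]_j$ separately, take a union bound over the $n$ coordinates to produce the $\log(2n/\delta)$ factor, and then sum using $\sum_j \norm[2]{\vu_j}^2 = \fronorm{\mU}^2$. The mechanism for the per-coordinate concentration differs, however. The paper pairs each $+\nu/\sqrt{k}$ weight with a $-\nu/\sqrt{k}$ weight and observes that each paired difference $q_j = (\relu(\transp{\vu}_i\vc_j)-\relu(\transp{\vu}_i\vc_{j'}))/\norm[2]{\vu_i}$ is automatically mean-zero and sub-Gaussian (being a difference of two i.i.d.\ $1$-Lipschitz images of Gaussians), then sums $k/2$ independent sub-Gaussian terms; this sidesteps any mean computation because the symmetry is built in by the pairing. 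You instead treat the whole coordinate as a $\nu\omega\norm[2]{\vu_j}$-Lipschitz function of the standard Gaussian matrix and invoke Gaussian concentration, which forces you to verify explicitly that the mean vanishes --- which you do correctly via $\E[\relu(X)]=s/\sqrt{2\pi}$ and $\sum_\ell v_\ell=0$. Your route is slightly more general (it needs only $\sum_\ell v_\ell = 0$ and $\twonorm{\vv}=\nu$ rather than the exact $\pm$ pairing) and gives a better constant; the paper's route avoids computing $\E[\relu]$ at the cost of relying on the specific balanced structure of $\vv$. Both are complete proofs of the stated bound.
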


Now, the initial residual can be upper bounded as
\begin{align}
\norm[2]{\vr_0}
&\leq 
\norm[2]{\vy} + \norm[2]{G(\mC_0)} 
\leq
\frac{3}{2} \norm[2]{\vy},
\label{eq:boundinitres}
\end{align}
where we used that, by Lemma~\ref{lem:InitialResidual},
\begin{align}
\label{eq:boundinitgen}
\norm[2]{G(\mC_0)}
\leq 
\omega \sqrt{8 \log(2n/\delta)}\fronorm{\mU}
\leq
\norm[2]{\vy}
\xi \frac{\alpha^2}{\beta^2}
\sqrt{8 \log(2n/\delta)}
\leq
\frac{1}{2} \norm[2]{\vy}.
\end{align}
Here, the second inequality follows from $\norm[F]{\mU} \leq \sqrt{n} \norm{\mU}$ and our choice of the parameter $\omega$, and the last inequality follows from $\xi \leq 1/ \sqrt{ 32 \log(2n/\delta)}$ and $\alpha / \beta \leq 1$.

\paragraph{Verifying Assumption \ref{assJac3}:} 
Verification of the assumption requires us to control the perturbation of the Jacobian matrix around a random initialization. 

\begin{lemma}[Jacobian perturbation around initialization]
\label{lem:jacobianperturbation}
Let $\mC_0$ be a matrix with i.i.d. $\mathcal \mathcal{N}(0,\omega^2)$ entries.
Then, for all $\mC$ obeying
\begin{align*}
 \opnorm{\mC - \mC_0} \le \omega \widetilde{R}\quad\text{with}\quad \widetilde{R} \leq \frac{1}{2}\sqrt{k},
 \end{align*} 
the Jacobian mapping~\eqref{eq:JacmC}associated with the generator $G(\mC) = \relu(\mU \mC) \vv$ obeys
\begin{align*}
\opnorm{\Jac(\mC) - \Jac(\mC_0)}
\leq
\infnorm{\vv}
 2 (k \widetilde{R})^{1/3}\opnorm{\mU},
\end{align*}
with probability at least  $1- n e^{- \frac{1}{2} \widetilde{R}^{4/3} k^{7/3} }$.
\end{lemma}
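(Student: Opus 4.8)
The plan is to reduce the operator-norm bound to a combinatorial count of ReLU sign flips and then control that count uniformly over all admissible perturbations. Writing $\mathrm D_\ell = \diag(\relu'(\mU\vc_\ell))$ and $\mathrm D_{0,\ell} = \diag(\relu'(\mU\vc_{0,\ell}))$, the block structure~\eqref{eq:JacmC} shows that the blocks of $\Delta := \Jac(\mC)-\Jac(\mC_0)$ are $v_\ell(\mathrm D_\ell - \mathrm D_{0,\ell})\mU$. Since $\relu'$ is the indicator of positivity, each $\mathrm D_\ell - \mathrm D_{0,\ell}$ is diagonal with entries in $\{-1,0,1\}$, so that $(\mathrm D_\ell - \mathrm D_{0,\ell})^2 = \mathrm P_\ell$ is the diagonal $0/1$ projection onto the coordinates $j$ where the sign of $(\mU\vc_\ell)_j$ differs from that of $(\mU\vc_{0,\ell})_j$. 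Using $\mU\transp{\mU}\preceq\opnorm{\mU}^2\mI$ to sandwich each block, I would bound
\[
\Delta\transp{\Delta}
=
\sum_{\ell=1}^k v_\ell^2 (\mathrm D_\ell - \mathrm D_{0,\ell})\mU\transp{\mU}(\mathrm D_\ell - \mathrm D_{0,\ell})
\preceq
\opnorm{\mU}^2 \sum_{\ell=1}^k v_\ell^2 \mathrm P_\ell .
\]
Because the right-hand side is diagonal, its operator norm is its largest diagonal entry, giving $\opnorm{\Delta}^2 \le \opnorm{\mU}^2\infnorm{\vv}^2\max_j R_j$, where $R_j := \#\{\ell : \text{a sign flip occurs at }(j,\ell)\}$. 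It then suffices to show $\max_j R_j \le 4(k\widetilde R)^{2/3}$.

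To bound a single $R_j$, fix the row $\vu_j$ of $\mU$. Then $g_\ell := \innerprod{\vu_j}{\vc_{0,\ell}}$ are i.i.d.\ $\mathcal N(0,\omega^2\norm{\vu_j}^2)$, while the perturbation $h_\ell := \innerprod{\vu_j}{\vc_\ell - \vc_{0,\ell}}$ satisfies $\sum_\ell h_\ell^2 = \norm{\transp{\vu_j}(\mC-\mC_0)}^2 \le \norm{\vu_j}^2\opnorm{\mC-\mC_0}^2 \le \norm{\vu_j}^2\omega^2\widetilde R^2$. A sign flip at $(j,\ell)$ forces $|g_\ell| \le |h_\ell|$. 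The device that avoids any net over perturbations is to split the flips by a threshold $t$: flips with $|g_\ell|>t$ number at most $\sum_\ell h_\ell^2/t^2 \le \norm{\vu_j}^2\omega^2\widetilde R^2/t^2$, a \emph{deterministic} bound valid for every admissible $\mC$, whereas flips with $|g_\ell|\le t$ number at most $B_t := \#\{\ell:|g_\ell|\le t\}$, a quantity \emph{independent of the perturbation}. Hence $R_j \le B_t + \norm{\vu_j}^2\omega^2\widetilde R^2/t^2$ uniformly over the ball. Since $B_t \sim \mathrm{Binomial}(k,p_t)$ with $p_t = \Pr(|g_\ell|\le t)\le t/(\omega\norm{\vu_j})$, a Chernoff upper tail controls $B_t$ around its mean $kp_t$; choosing $t \asymp \omega\norm{\vu_j}(\widetilde R^2/k)^{1/3}$ balances the two terms at the common order $(k\widetilde R)^{2/3}$, yielding $R_j \lesssim (k\widetilde R)^{2/3}$.

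Finally I would union-bound the per-row binomial tail over the $n$ rows, which produces the stated failure probability, and combine it with the reduction of the first paragraph to conclude $\opnorm{\Delta} \le 2\infnorm{\vv}(k\widetilde R)^{1/3}\opnorm{\mU}$ simultaneously for all $\mC$ with $\opnorm{\mC-\mC_0}\le\omega\widetilde R$. The hypothesis $\widetilde R\le\tfrac12\sqrt k$ keeps the optimal threshold and flip count in the regime where the binomial mean governs the tail.

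I expect the main obstacle to be exactly the uniformity over all perturbations in the ball. For a single fixed $\mC$ one could simply invoke Gaussian anti-concentration of each $g_\ell$, but here the perturbation is adversarial and coupled across coordinates through the operator-norm constraint. The threshold-splitting step is what resolves this, since it cleanly isolates an adversary-controlled term bounded purely by the energy budget $\sum_\ell h_\ell^2$ from a random term $B_t$ that does not depend on the perturbation at all; verifying that this split is tight enough to deliver the $(k\widetilde R)^{2/3}$ scaling, and correspondingly the one-third power $(k\widetilde R)^{1/3}$ in the final bound, is the crux of the argument.
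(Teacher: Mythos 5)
Your proposal is correct and takes essentially the same route as the paper's proof: the identical reduction of $\opnorm{\Jac(\mC)-\Jac(\mC_0)}$ to the maximal per-row number of ReLU sign flips, followed by the same two-part control of that count --- a deterministic bound from the perturbation energy $\sum_\ell\innerprod{\vu_j}{\vc_\ell-\vc_{0,\ell}}^2\le\norm[2]{\vu_j}^2\omega^2\widetilde{R}^2$ combined with binomial concentration of the number of small $|\innerprod{\vu_j}{\vc_{0,\ell}}|$; your threshold split at level $t$ is precisely the paper's order-statistic argument with $t$ taken to be the $q$-th smallest magnitude, and both handle uniformity over the ball in the same way (the random count depends only on $\mC_0$). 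The one bookkeeping caveat is that a correctly applied Hoeffding bound on $k$ indicator variables yields a failure exponent of order $k^{1/3}\widetilde{R}^{4/3}$ rather than the stated $k^{7/3}\widetilde{R}^{4/3}$, but the paper's own Step 3 makes the identical slip (writing $e^{-2k(m/2)^2}$ where Hoeffding gives $e^{-2(m/2)^2/k}$), so your argument matches the paper's proof in every substantive respect.
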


In order to verify Assumption \ref{assJac3}, 
first note that the radius in the theorem, defined in equation~\eqref{eq:defR}, obeys
\begin{align*}
R 
&=
2\norm[2]{\mJ^\dagger \vr_0}
+
\frac{2}{\alpha^2}(\epsilon_0 + \epsilon)
\left( 1 + 2 \eta T \beta^2 \right) \norm[2]{\vr_0}
\\
&\mystackrel{(i)}{\le}
\left(
\frac{2}{\alpha} 
+
\frac{2}{\alpha^2}(\epsilon_0 + \epsilon)
\left( 1 + 2 \eta T \beta^2 \right)
\right) \norm[2]{\vr_0} \\
&\mystackrel{(ii)}{\leq}
\frac{2}{\alpha}
\left(
1
+
\frac{2\epsilon}{\alpha}
\left( 1 + 2 \eta T \beta^2 \right)
\right) 2\omega \sqrt{n}  \xi^{-1} \beta  \frac{\beta^2}{\alpha^2}
=
4
\left(
1
+
\frac{2\epsilon}{\alpha}
\left( 1 + 2 \eta T \beta^2 \right)
\right) \omega \sqrt{n}  \xi^{-1} \frac{\beta^3}{\alpha^3}
\\
&\mystackrel{(iii)}{\leq}
4\omega 
\left(
2
+
\frac{\xi}{2} \frac{\alpha}{\beta}
\eta T \beta^2
\right)  \sqrt{n} \xi^{-1} \frac{\beta^3}{\alpha^3}  \\
&\mystackrel{(iv)}{\le}\omega  \left( \frac{\xi}{16} \frac{\alpha^2}{\beta^2} \right)^3 \sqrt{k}\\
&:=\omega \widetilde{R}.
\end{align*}
Here,
(i) follows from the fact that $\norm[2]{\mJ_{\mathcal{S}}^\dagger\vr_0}\le \frac{1}{\alpha}\norm[2]{\vr_0}$, 
(ii) from the bound on the initial residual~\eqref{eq:boundinitres}, using that $\epsilon_0 \leq \epsilon$, as well as our choice of $\omega$.
Moreover (iii) follows from the definition of $\epsilon$ and (iv) from the lower bound on $k$ in assumption~\eqref{eq:thmoverparamass}.

%

Note that since $\frac{\xi^2}{16}\frac{\alpha^{2}}{\beta^{2}}\le \frac{1}{2}$ for this choice of radius $\tilde R$, by Lemma~\ref{lem:jacobianperturbation} we have, with $\norm[\infty]{\vv} = 1/\sqrt{k}$, that
\begin{align*}
\opnorm{\Jac(\mC) - \Jac(\mC_0)}
\leq
\infnorm{\vv}
2 (k \widetilde{R})^{1/3}\opnorm{\mU}
=
\frac{1}{\sqrt{k}} 2 \frac{\xi}{16} \frac{\alpha^2}{\beta^2} (k\cdot k^{1/2})^{1/3} \beta
=
\frac{\xi}{8}
\frac{\alpha^2}{\beta}
= \epsilon.
\end{align*}
holds with probability at least
\begin{align*}
1-n e^{-\frac{1}{2} \tilde R k^{7/3}}
=
1-n e^{- 2^{-17} \xi^4 \frac{\alpha^8}{\beta^8} k^{3}}
\mystackrel{(i)}{\ge} 1-\delta,
\end{align*}
where in (i) we used \eqref{eq:thmoverparamass} together with $\xi \le \sqrt{8\log\left(\frac{2n}{\delta} \right)}$.
Therefore, Assumption~\ref{assJac3} holds with high probability by our choice of $\epsilon = \frac{\xi}{8} \frac{\alpha^2}{\beta}$.
%

\paragraph{Verifying the bound on number of iterations:} Finally we note that the constraint on the number of iterations, $T$, $T \leq \frac{1}{2\eta\epsilon^2}$, from Theorem~\ref{thm:maintechnical} is satisfied under the number of constraints of Theorem \ref{thm:mainResult} by $\frac{1}{2\eta\epsilon^2} = \frac{2^5 \beta^2}{\stepsize \xi^2 \alpha^4}$.
 
\paragraph{Concluding the proof of Theorem~\ref{thm:mainResult}:}
Now that we have verified the conditions of Theorem~\ref{thm:maintechnical} we can apply the theorem. This allows us to conclude that
\begin{align*}
\norm[2]{G(\mtx{C}_\tau)-\vct{x}}
&= \norm[2]{G(\mtx{C}_\tau)+\vz-\vct{y}}\\
&=\norm[2]{\vr_\iter+\vz}\\
&=\norm[2]{\widetilde{\vr}_\iter+\vz+\vr_\iter-\widetilde{\vr}_\iter}\\
&\mystackrel{(i)}{\leq}
\norm[2]{\widetilde{\vr}_\iter+\vz}
+
\norm[2]{\vr_\iter - \widetilde{\vr}_\iter}\\
&\mystackrel{(ii)}{\le} \norm[2]{\widetilde{\vr}_\iter+\vz}
+ \xi \norm[2]{\vy} \\
&\mystackrel{(iii)}{=}\norm[2]{\mtx{W}\left(\mtx{I}-\eta\mtx{\Sigma}^2\right)^\tau\mtx{W}^T\vr_0+\vz} + \frac{1}{2}\xi \norm[2]{\vy}\\
&\mystackrel{(iv)}{=}
\norm[2]{\mtx{W}\left(\mtx{I}-\eta\mtx{\Sigma}^2\right)^\tau\mtx{W}^T(G(\mtx{C}_0)-\vct{x})-\left(\mtx{W}\left(\mtx{I}-\eta\mtx{\Sigma}^2\right)^\tau\mtx{W}^T-\mtx{I}\right)\vz}
+
\frac{1}{2}\xi \norm[2]{\vr_0}\\
&\mystackrel{(v)}{\le}\norm[2]{\left(\mtx{I}-\eta\mtx{\Sigma}^2\right)^\tau\mtx{W}^T\vct{x}}
+
\norm[2]{\left(\left(\mtx{I}-\eta\mtx{\Sigma}^2\right)^\tau-\mtx{I}\right)\mtx{W}^T\vz}
+
\norm[2]{G(\mC_0)}
+
\frac{1}{2}\xi \norm[2]{\vy}\\
&\mystackrel{(vi)}{\leq}
(1 - \stepsize \sigma_\pp^2)^\iter \norm[2]{\vct{x}}
+
\sqrt{ \sum_{i=1}^n ( (1 - \stepsize \sigma_i^2 )^\iter -1 )^2 \innerprod{\vw_i}{\vz}^2
}
%
+
\xi \norm[2]{\vy}.
\end{align*}
Here, (i) follows from the triangular inequality, (ii) from Theorem ~\ref{thm:maintechnical} equation \eqref{eq:propvrkclose} combined with our choice for $\epsilon$ and $\norm[2]{\vr_0} \leq \frac{1}{2} \norm[2]{\vy}$, shown above.
Moreover, (iii) follows from Theorem~\ref{lem:rk}, (iv) from $\vr_0=G(\mtx{C}_0)-\vct{y}=G(\mtx{C}_0)-\vct{x}-\vz$, (v) from the triangular inequality.
Finally, for (vi) we used the fact that $\vct{x}\in \text{span}\{\vw_1, \vw_2, \ldots,\vw_\pp\}$ combined with the fact that $\opnorm{\mtx{W}\left(\mtx{I}-\eta\mtx{\Sigma}^2\right)^\tau\mtx{W}^T}\le 1$,
as well as using the bound 
$\norm[2]{G(\mC_0)} \leq \frac{1}{2} \norm[2]{\vy}$
proven in \eqref{eq:boundinitgen}. 
This proves the final bound~\eqref{finalconcc}, as desired.

%

\section{Proof of Proposition~\ref{prop:residual} and equation~\eqref{eq:linearspread}}
The residual of gradient descent at iteration $\iter$ is
\begin{align*}
\vr_\iter 
&=
\vy - \mJ \vc_{\iter} \\
&=
\vy - \mJ (\vc_{\iter-1} - \stepsize \transp{\mJ} (\mJ \vc_{\iter-1} - \vy) ) \\
&=
(\mI - \stepsize \mJ \transp{\mJ} ) (\vy - \mJ \vc_{\iter-1}) \\
&=
(\mI - \stepsize \mJ \transp{\mJ} )^\iter (\vy - \mJ \vc_{0}) \\
&=
(\mI - \stepsize \mJ \transp{\mJ} )^\iter \vy \\
&= 
(\mI - \stepsize \mW \Sigma^2 \transp{\mW} )^\iter \vy
\end{align*}
where we used that $\vc_0 = 0$ and the SVD $\mJ = \mW \Sigma \transp{\mV}$. 
Expanding $\vy$ in terms of the singular vectors $\vw_i$ (i.e., the columns of $\mW$), as 
$\vy = \sum_i \vw_i \innerprod{\vw_i}{\vy}$, 
and noting that $(\mI - \stepsize \mW \Sigma^2 \transp{\mW})^\iter = \sum_i (1 - \stepsize \sigma_i^{2})^\iter \vw_i \transp{\vw}_i$ 
we get
\[
\vr_\iter
=  \sum_i (1 - \stepsize \sigma_i^2)^\iter \vw_i \innerprod{\vw_i}{\vy},
\]
as desired.

\paragraph{Proof of equation~\eqref{eq:linearspread}:}
By proposition~\ref{prop:residual} and using that $\vx$ and lies in the signal subspace
\[
\vx - \mJ \vc_\iter
=  \sum_{i=1}^p \vw_i ( 1 - \stepsize \sigma_i^2 )^\iter  
\innerprod{\vw_i}{\vx} 
+
\sum_{i=1}^n \vw_i 
(( 1 - \stepsize \sigma_i^2 )^\iter - 1)
\innerprod{\vw_i}{\vz}.
\]
By the triangle inequality,
\begin{align*}
\norm[2]{\vx - \mJ \vc_\iter}
&\leq
\norm[2]{\sum_{i=1}^p \vw_i ( 1 - \stepsize \sigma_i^2 )^\iter  
\innerprod{\vw_i}{\vx}}
+
\norm[2]{\sum_{i=1}^n \vw_i 
(( 1 - \stepsize \sigma_i^2 )^\iter - 1)
\innerprod{\vw_i}{\vz}} \\
&\leq
( 1 - \stepsize \sigma_\pp^2 )^\iter
\norm[2]{\vx}
+
\sqrt{
\sum_{i=1}^n
(( 1 - \stepsize \sigma_i^2 )^\iter - 1)^2
\innerprod{\vw_i}{\vz}^2,
}
\end{align*}
where the second inequality follows by using orthogonality of the $\vw_i$ and by using $( 1 - \stepsize \sigma_i^2 ) \leq 1$, from $\stepsize \leq 1/\sigma_{\max}^2$. This concludes the proof of equation~\eqref{eq:linearspread}.



\section{The expected Jacobian for convolutional generators}
\label{sec:JacobianExpectation}

We first prove the closed form expression of the expected Jacobian, or more precisely of the matrix $\mtx{\Sigma}(\mU)$ defined in~\eqref{eq:neuralTangentKernel}. 
By the expression for the Jacobian in equation~\eqref{eq:JacmC}, we have that
\begin{align*}
\Jac(\mC) \transp{\Jac}(\mC)
&=
\sum_{\ell=1}^k 
v_\ell^2
\diag( \sigma'(\mU \vc_\ell))
\mU \transp{\mU} 
\diag( \sigma'(\mU \vc_\ell)) \\
&=
\sum_{\ell=1}^k 
v_\ell^2
\sigma'(\mU \vc_\ell) \transp{\sigma'(\mU \vc_\ell)}
\odot
\mU \transp{\mU} \\
&=
\sigma'(\mU \mC) 
\diag(v_1^2,\dots,v_k^2)
\transp{ \sigma'(\mU \mC)  } 
\odot
\mU \transp{\mU},
\end{align*}
where $\odot$ denotes the entrywise product of the two matrices. 
Then, 
\begin{align}
\EX{ \Jac(\mC) \transp{\Jac}(\mC) }
=
\sum_{\ell=1}^k
v_\ell^2 \EX{ \sigma'(\mU \vc_\ell) \transp{ \sigma'(\mU \vc_\ell) }  } \odot \mU \transp{\mU}.
\end{align}
Next, we have with \cite[Sec.~4.2]{daniely_toward_2016} and using that the derivative of the ReLU function is the step function,
\begin{align*}
\left[
\EX{ \sigma'(\mU \vc_\ell) \transp{ \sigma'(\mU \vc_\ell) } }
\right]_{ij}
=
\frac{1}{2}
\left(
1 - \cos^{-1} \left( \frac{\innerprod{\vu_i}{\vu_j}}{ \norm[2]{\vu_i} \norm[2]{\vu_j} }  \right) / \pi
\right).
\end{align*}
Using that $\norm[2]{\vv} = 1$, we get
\begin{align*}
\left[ 
\EX{ \Jac(\mC) \transp{\Jac}(\mC) }
\right]_{ij}
=
\frac{1}{2}
\left(
1 - \cos^{-1} \left( \frac{\innerprod{\vu_i}{\vu_j} }{ \norm[2]{\vu_i} \norm[2]{\vu_j} } \right) / \pi
\right) \innerprod{\vu_i}{\vu_j},
\end{align*}
where $\vu_i$ are the rows of $\mU$.
This concludes the proof of equation~\eqref{eq:neuralTangentKernel}. 


We next briefly comment on the singular value decomposition of a circulant matrix and explain that the singular vectors are given by Definition~\ref{basisfunc}. 
Recall, that $\mU \in \reals^{n\times n}$ is a circulant matrix, implementing the convolution with a filter. Assume for simplicity that $n$ is even. 
It is well known that the discrete Fourier transform diagonalizes $\mU$, i.e., 
\[
\mU = \inv{\mF} \hat \mU \mF,
\]
where $\mF \in \complexset^{n\times n}$ is the DFT matrix with entries
\[
[\mF]_{jk} = e^{i 2\pi j k / n}, \quad j,k = 0,\ldots, n-1,
\]
and $\hat \mU$ is a diagonal matrix with diagonal $\mF \vu$, where $\vu$ is the first column of the circulant matrix $\mU$. 
From this, we can compute the singular value decomposition of $\mU$ by using that $\hat \vu = \mF \vu$ is conjugate symmetric (since $\vu$ is real) so that 
\[
[\hat \vu]_{n-k+2} = \hat \vu^\ast, 
\quad
k = 2, \ldots, n/2.
\]
Let $\mU = \mW \Sigma \transp{\mV}$ be the singular value decomposition of $\mU$. 
The entries of the left singular vectors are given by the trigonometric basis functions defined in~\eqref{eq:trigonfunc}, and the singular values are given by the absolute values of $\hat \vu$.


\section{Proofs of Lemmas for neural network denoisers (Proofs of auxiliary lemmas in Section \ref{pfthm3})}
\label{pfsidelemmasthm3}
\subsection{Proof of Lemma~\ref{lem:jacobianperturbation}: Jacobian perturbation around initialization}

The proof follows that of \cite[Lem.~6.9]{oymak_towards_2019}.

\paragraph{Step 1:}
We start by relating the perturbation of the Jacobian to a perturbation of the activation patterns.
For any $\mC,\mC'$, we have that
\begin{align}
\label{eq:reljacJaccact}
\norm{\Jac(\mC) - \Jac(\mC')} 
\leq
\norm[\infty]{\vv}
\norm{\mU} \max_j \norm{\sigma'(\transp{\vu}_j \mC)  -  \sigma'(\transp{\vu}_j \mC') }^2.
\end{align}
To see this, first note that, by~\eqref{eq:JacmC},
\[
\Jac(\mC) - \Jac(\mC')
= [\ldots  v_j (\diag(\sigma'(\mU \vc_j))  -  \diag(\sigma'(\mU \vc_j))' )   \mU \ldots].
\]
This in turn implies that
\begin{align*}
\norm{\Jac(\mC) - \Jac(\mC')}^2
%
&= 
\norm{(\Jac(\mC) - \Jac(\mC')) \transp{(\Jac(\mC) - \Jac(\mC'))} } \\
&=
\norm{ (\sigma'(\mU \mC) - \sigma'(\mU\mC')) 
\diag(v_1^2,\dots,v_k^2)
\transp{ ( \sigma'(\mU \mC) - \sigma'(\mU\mC') ) } 
\odot
\mU \transp{\mU} }^2  \\
&\mystackrel{(i)}{\leq}
\norm{\mU}^2
\max_j \norm{(\sigma'(\transp{\vu}_j \mC)  -  \sigma'(\transp{\vu}_j \mC') ) \diag(\vv) }^2
 \\
&\leq
\norm[\infty]{\vv}^2
\norm{\mU}^2
\max_j \norm{\sigma'(\transp{\vu}_j \mC)  -  \sigma'(\transp{\vu}_j \mC') }^2,
\end{align*}
where for (i) we used that for two positive semidefinite matrices $\mA,\mB$, $\lambda_{\max}(\mA \odot\mB) \leq  \lambda_{\max}(\mA) \max_i \mB_{ii}$.
This concludes the proof of equation~\eqref{eq:reljacJaccact}.
\paragraph{Step 2:}
 Step one implies that we only need to control $\sigma'(\mU \vc_j)$ around a neighborhood of $\vc_j'$. 
Since $\sigma'$ is the step function, we need to count the number of sign flips between the matrices $\mU \mC$ and $\mU \mC'$.
Let $|\vv|_{\pi(q)}$ be the $q$-th smallest entry of $\vv$ in absolute value.
\begin{lemma}
Suppose that, for all $i$, and $q \leq k$,
\[
\norm{\mC - \mC'} \leq \sqrt{q} \frac{|\transp{\vu}_i\mC'|_{\pi(q)}}{ \norm{\vu_i} }.
\]
Then
\[
\max_i \norm{ \sigma'( \transp{\vu}_i\mC ) - \sigma'( \transp{\vu}_i\mC' ) }
\leq
\sqrt{2q}
\]
\end{lemma}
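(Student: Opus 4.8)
The plan is to recognize $\norm{\sigma'(\transp{\vu}_i\mC) - \sigma'(\transp{\vu}_i\mC')}^2$ as the number of sign flips in row $i$ and then bound this count using the hypothesis. Since $\sigma'$ is the step function $\sigma'(t)=\ind{t>0}$, the vector $\sigma'(\transp{\vu}_i\mC)-\sigma'(\transp{\vu}_i\mC')$ lives in $\{-1,0,1\}^k$, and its $\ell$-th entry is nonzero exactly when $\transp{\vu}_i\vc_\ell$ and $\transp{\vu}_i\vc'_\ell$ lie on opposite sides of $0$. Writing $S_i$ for the set of such flipped coordinates, $\norm{\sigma'(\transp{\vu}_i\mC)-\sigma'(\transp{\vu}_i\mC')}^2=\lvert S_i\rvert$, so it suffices to prove $\lvert S_i\rvert\le 2q$ for each $i$.

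The key elementary observation I would establish first is that a sign flip forces the perturbation to cross zero: for $\ell\in S_i$,
\[
\lvert\transp{\vu}_i(\vc_\ell-\vc'_\ell)\rvert\ge \lvert\transp{\vu}_i\vc'_\ell\rvert=\lvert[\transp{\vu}_i\mC']_\ell\rvert,
\]
because if, say, $\transp{\vu}_i\vc'_\ell>0\ge\transp{\vu}_i\vc_\ell$ then the displacement has magnitude at least $\transp{\vu}_i\vc'_\ell$, and symmetrically in the opposite case. I would then set $\tau:=\lvert\transp{\vu}_i\mC'\rvert_{\pi(q)}$ and split $S_i=S_i^{<}\cup S_i^{\ge}$ according to whether $\lvert[\transp{\vu}_i\mC']_\ell\rvert<\tau$ or $\ge\tau$. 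Since $\tau$ is the $q$-th smallest magnitude among the $k$ entries of the row $\transp{\vu}_i\mC'$, at most $q-1$ of those entries are strictly below $\tau$, giving $\lvert S_i^{<}\rvert\le q-1$ for free.

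The hypothesis enters in bounding $\lvert S_i^{\ge}\rvert$. For each $\ell\in S_i^{\ge}$ the crossing inequality yields $\lvert\transp{\vu}_i(\vc_\ell-\vc'_\ell)\rvert^2\ge\tau^2$, so summing and comparing to the full row norm,
\[
\lvert S_i^{\ge}\rvert\,\tau^2\le\sum_{\ell=1}^k\lvert\transp{\vu}_i(\vc_\ell-\vc'_\ell)\rvert^2=\norm{\transp{\vu}_i(\mC-\mC')}^2\le\norm{\vu_i}^2\norm{\mC-\mC'}^2\le q\,\tau^2,
\]
where the second inequality passes from the row-vector norm to $\norm{\vu_i}$ times the spectral norm of $\mC-\mC'$, and the last one substitutes the assumed bound $\norm{\mC-\mC'}\le\sqrt{q}\,\tau/\norm{\vu_i}$. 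Hence $\lvert S_i^{\ge}\rvert\le q$, and combining, $\lvert S_i\rvert=\lvert S_i^{<}\rvert+\lvert S_i^{\ge}\rvert\le(q-1)+q<2q$, so $\norm{\sigma'(\transp{\vu}_i\mC)-\sigma'(\transp{\vu}_i\mC')}\le\sqrt{2q}$ for every $i$, which is the claim.

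The routine parts are the passage from the row norm to the spectral norm and the crossing inequality. I expect the only real subtlety to be the counting argument in the third paragraph---precisely turning ``$q$-th smallest entry'' into the bound $\lvert S_i^{<}\rvert\le q-1$ while correctly assigning ties at value $\tau$ to $S_i^{\ge}$---together with the degenerate case $\tau=0$, where the hypothesis forces $\norm{\mC-\mC'}=0$ and hence $S_i=\varnothing$, so that the bound holds trivially.
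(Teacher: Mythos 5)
Your proof is correct and follows essentially the same route as the paper's: the paper argues by contradiction that $2q$ sign flips would force $\norm{\transp{\vu}_i(\mC-\mC')}^2\ge q\,|\transp{\vu}_i\mC'|_{\pi(q)}^2$ (since at most $q-1$ of the flipped coordinates can have magnitude below the $q$-th smallest), which is exactly your direct split into $S_i^{<}$ and $S_i^{\ge}$ combined with the same crossing inequality and the same passage from the row norm to $\norm{\vu_i}\,\norm{\mC-\mC'}$. Your direct phrasing is slightly cleaner about ties and the degenerate case $\tau=0$, and in fact yields the marginally sharper count $|S_i|\le 2q-1$, but there is no substantive difference in the argument.
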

\begin{proof}
Suppose that $\sigma'( \transp{\vu}_i\mC)$ and $\sigma'( \transp{\vu}_i\mC')$ have $2q$ many different entries, then the conclusion of the statement would be violated. We show that this implies the assumption is violated as well, proving the statement by contraction. 
By the contradiction hypothesis,
\begin{align*}
\norm{\mC - \mC'}^2
&\geq \norm{ \transp{\vu}_i (\mC - \mC') }^2 / \norm{\vu_i}^2 \\
&\geq 
q \frac{|\transp{\vu}_i\mC'|_{\pi(q)}^2}{ \norm{\vu_i}^2 },
\end{align*}
where the last inequality follows by noting that at least $2q$ many entries have different signs, thus their difference is larger than their individual magnitudes, and at least $q$ many individual magnitudes are lower bounded by the $q$-th smallest one.
\end{proof}

\paragraph{Step 3:}
Next, we note that, with probability at least $1 - n e^{- k q^2/2 }$, the $q$-th smallest entry of $\transp{\vu}_i \mC' \in \reals^k$ obeys
\begin{align}
\frac{|\transp{\vu}_i \mC'|_{\pi(q)} }{ \norm{\vu_i} } \geq \frac{q}{2k}\nu
\quad \text{for all } i = 1,\ldots,n.
\end{align}
We note that it is sufficient to prove this result for $\nu=1$. This follows from anti-concentration of Gaussian random variables.
Specifically, with the entries of $\mC'$ being iid $\mc N(0,1)$ distributed, the entries of $\vg = \transp{\vu}_i \mC' / \norm{\vu_i} \in \reals^k$ are iid standard Gaussian random variables as well.
We show that with probability at least $1-e^{-k q^2 / 2}$, at most $q$ entries are larger than $\frac{q}{2k}$.
Let $\gamma_\delta$ be the number for which $\PR{|g_\ell| \leq \gamma_\delta} \leq \delta$, where $g$ is a standard Gaussian random variable. Note that $\gamma_\delta \geq \sqrt{\pi/2} \delta \geq \delta$. Define the random variable
\[
\delta_\ell = 
\begin{cases}
1 & \text{ if } |g_\ell| \leq \gamma_\delta,\\
0 & \text{ otherwise}.
\end{cases}
\]
with $\delta= \frac{q}{2k}$. 
With $\EX{\delta_\ell} = \delta$, by Hoeffding's inequality,
\begin{align}
\PR{\sum_{\ell=1}^k \delta_\ell \geq m }
=
\PR{\sum_{\ell=1}^k \delta_\ell - \EX{\delta_\ell} \geq m/2 }
\leq
e^{-2 k (m/2)^2 }
=
e^{- k m^2/2 }.
\end{align}
Thus, with probability at least $1 - ke^{-kq^2/2}$ no more than $m$ entries are smaller than $\gamma_\delta \geq \delta = \frac{q}{2k}$. The results now follows by taking the union bound over all $i=1,\ldots,n$.

We are now ready to conclude the proof of the lemma. By equation~\eqref{eq:reljacJaccact},
\begin{align*}
\norm{\Jac(\mC) - \Jac(\mC')} 
&\leq
\norm[\infty]{\vv}
\norm{\mU}
\max_j \norm{\sigma'(\transp{\vu}_j \mC)  -  \sigma'(\transp{\vu}_j \mC') } \\
&\leq
\norm[\infty]{\vv}
\norm{\mU} \sqrt{2q} 
\end{align*}
provided that 
\[
\norm{\mC - \mC'} \leq \sqrt{q} \frac{q}{2k}\nu,
\]
with probability at least $1- n e^{- kq^2/2}$.
Setting $q = (2kR)^{2/3}$ concludes the proof (note that the assumption $R \leq \frac{1}{2} \sqrt{k}$ ensures $q \leq k$).


\subsection{Proof of Lemma \ref{lem:boundJac}: Bounded Jacobian}
By the expression of the Jacobian in equation~\eqref{eq:JacmC},
\begin{align*}
\opnorm{\Jac(\mC)}^2
&= 
\opnorm{\Jac(\mC) \transp{\Jac(\mC)} } \\
&=
\opnorm{ \sigma'(\mU \mC)  
\diag(v_1^2,\dots,v_k^2)
\transp{ \sigma'(\mU \mC) } 
\odot
\mU \transp{\mU} }^2  \\
&\mystackrel{(i)}{\leq}
\opnorm{\mU}^2
\max_j \norm[2]{\sigma'(\transp{\vu}_j \mC)  \diag(\vv) }^2
 \\
&\leq
\norm[2]{\vv}^2
\opnorm{\mU}^2 ,
\end{align*}
where for (i) we used that for two positive semidefinite matrices $\mA,\mB$, $\lambda_{\max}(\mA \odot\mB) \leq  \lambda_{\max}(\mA) \max_i \mB_{ii}$.
To prove the second inequality note that
\begin{align*}
\opnorm{\mJ\mJ^T}=&\opnorm{\mtx{\Sigma}(\mU)}\\
=&\norm[2]{\vv}^2\opnorm{\EX[\vc \sim \mc N(0,\mI)]{ \relu'(\mU \vc) \transp{\relu'(\mU\vc)} }\odot\left(\mU\mU^T\right)}\\
\mystackrel{(i)}{\le}&\norm[2]{\vv}^2\opnorm{\mU\mU^T}.
\end{align*}
Here, (i) follows from the fact that for two positive semidefinite matrices $\mA,\mB$, $\lambda_{\max}(\mA \odot\mB) \leq  \lambda_{\max}(\mA) \max_i \mB_{ii}$.

\subsection{Proof of Lemma~\ref{lem:ConcentrationLemma}: Concentration lemma}

We begin by defining the zero-mean random matrices
\begin{align*}
\mtx{S}_\ell=v_\ell^2\left(\sigma'(\widetilde{\mU}\vct{c}_\ell)\sigma'(\widetilde{\mU}\vct{c}_\ell)^T-\E\Big[\sigma'(\widetilde{\mU}\vct{c}_\ell)\sigma'(\widetilde{\mU}\vct{c}_\ell)^T\Big]\right)\odot\left(\widetilde{\mU}\widetilde{\mU}^T\right).
\end{align*}
With this notation,
\begin{align*}
\Jac(\mC)\Jac^T(\mC)-\mtx{\Sigma}(\widetilde{\mU})=&\sum_{\ell=1}^k v_\ell^2\left(\sigma'(\widetilde{\mU}\vct{c}_\ell)\sigma'(\widetilde{\mU}\vct{c}_\ell)^T-\E\Big[\sigma'(\widetilde{\mU}\vct{c}_\ell)\sigma'(\widetilde{\mU}\vct{c}_\ell)^T\Big]\right)\odot\left(\widetilde{\mU}\widetilde{\mU}^T\right)\\
=&\sum_{\ell=1}^k \mtx{S}_\ell.
\end{align*}
To show concentration we use the matrix Hoeffding inequality. 
To this aim note that the summands are centered in the sense that $\E\big[\mtx{S}_\ell\big]=\mtx{0}$. Next note that
\begin{align*}
\left(\sigma'(\widetilde{\mU}\vct{c}_\ell)\sigma'(\widetilde{\mU}\vct{c}_\ell)^T-\E\Big[\sigma'(\widetilde{\mU}\vct{c}_\ell)\sigma'(\widetilde{\mU}\vct{c}_\ell)^T\Big]\right)\odot\left(\widetilde{\mU}\widetilde{\mU}^T\right)\preceq& \left(\sigma'(\widetilde{\mU}\vct{c}_\ell)\sigma'(\widetilde{\mU}\vct{c}_\ell)^T\right)\odot\left(\widetilde{\mU}\widetilde{\mU}^T\right)\\
=&\text{diag}\left(\sigma'(\widetilde{\mU}\vct{c}_\ell)^T\right)\widetilde{\mU}\widetilde{\mU}^T\text{diag}\left(\sigma'(\widetilde{\mU}\vct{c}_\ell)\right)\\
\preceq& B^2 \widetilde{\mU}\widetilde{\mU}^T
\end{align*}
Similarly, 
\begin{align*}
\left(\sigma'(\widetilde{\mU}\vct{c}_\ell)\sigma'(\widetilde{\mU}\vct{c}_\ell)^T-\E\Big[\sigma'(\widetilde{\mU}\vct{c}_\ell)\sigma'(\widetilde{\mU}\vct{c}_\ell)^T\Big]\right)\odot\left(\widetilde{\mU}\widetilde{\mU}^T\right)\succeq& -\E\Big[\sigma'(\widetilde{\mU}\vct{c}_\ell)\sigma'(\widetilde{\mU}\vct{c}_\ell)^T\Big]\odot \left(\widetilde{\mU}\widetilde{\mU}^T\right)\\
\succeq&-B^2 \widetilde{\mU}\widetilde{\mU}^T.
\end{align*}
Thus,
\begin{align*}
-v_\ell^2B^2\widetilde{\mU}\widetilde{\mU}^T\preceq \mtx{S}_\ell\preceq v_\ell^2B^2\widetilde{\mU}\widetilde{\mU}^T.
\end{align*}
Therefore, using $\mtx{A}_\ell:=v_\ell^2B^2\widetilde{\mU}\widetilde{\mU}^T$ we have
\begin{align*}
\mtx{S}_\ell^2\preceq  \mtx{A}_\ell^2,
\end{align*}
and
\begin{align*}
\sigma^2
\defeq
\opnorm{\sum_{\ell=1}^k \mtx{A}_\ell^2}
\le 
B^4
\left(\sum_{\ell=1}^k v_\ell^4\right)
\opnorm{\widetilde{\mU}}^4
\end{align*}
To continue we will apply matrix Hoeffding inequality stated below. 
\begin{theorem}[Matrix Hoeffding inequality, Theorem 1.3 \citep{Tropp2011}] Consider a finite sequence $\mtx{S}_\ell$ of independent, random, self-adjoint matrices with dimension $n$, and let $\{\mtx{A}_\ell\}$ be a sequence of fixed self-adjoint matrices. Assume that each random matrix satisfies
\begin{align*}
\E[\mtx{S}_\ell]=\mtx{0}\quad\text{and}\quad \mtx{S}_\ell^2\preceq \mtx{A}_\ell^2\quad\text{almost surely.}
\end{align*}
Then, for all $t\ge0$,
\begin{align*}
\PR{
\opnorm{\sum_{\ell=1}^k \mtx{S}_\ell}\ge t
}
\le 
2n e^{-\frac{t^2}{8\sigma^2}}
\quad\text{where}\quad \sigma^2:=\opnorm{\sum_{\ell=1}^k \mtx{A}_\ell^2}.
\end{align*}
\end{theorem}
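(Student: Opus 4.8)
The final statement is Tropp's matrix Hoeffding inequality, a cited external tool; its proof is not contained in the excerpt, so I sketch the standard argument, which proceeds by the matrix Laplace-transform (moment generating function) method of Ahlswede--Winter as refined by Tropp. Write $\mtx{Y} = \sum_{\ell=1}^k \mtx{S}_\ell$. Since $\opnorm{\mtx{Y}} = \max\{\lambda_{\max}(\mtx{Y}),\,\lambda_{\max}(-\mtx{Y})\}$ and $-\mtx{S}_\ell$ satisfies the same hypotheses (it is centered and $(-\mtx{S}_\ell)^2 = \mtx{S}_\ell^2 \preceq \mtx{A}_\ell^2$), a union bound over the two tails reduces the claim to a one-sided estimate $\PR{\lambda_{\max}(\mtx{Y}) \ge t} \le n\,e^{-ct^2/\sigma^2}$, applied to both $\mtx{Y}$ and $-\mtx{Y}$ to produce the dimensional factor $2n$.

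First I would establish the matrix Markov bound: for any $\theta>0$,
\[
\PR{\lambda_{\max}(\mtx{Y}) \ge t}
= \PR{e^{\theta\lambda_{\max}(\mtx{Y})} \ge e^{\theta t}}
\le e^{-\theta t}\,\E\,\lambda_{\max}\!\left(e^{\theta \mtx{Y}}\right)
\le e^{-\theta t}\,\E\,\operatorname{tr} e^{\theta \mtx{Y}},
\]
where the last step uses that $e^{\theta \mtx{Y}}$ is positive semidefinite, so its top eigenvalue is dominated by its trace. The crux is then to control $\E\,\operatorname{tr}\,e^{\theta \mtx{Y}}$ for a sum of independent, noncommuting terms, where the exponential cannot be factored. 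I would invoke Lieb's concavity theorem (the map $\mtx{A}\mapsto\operatorname{tr}\exp(\mtx{H}+\log\mtx{A})$ is concave on positive-definite $\mtx{A}$) together with Jensen's inequality, applied iteratively over the independent summands, to obtain subadditivity of the matrix cumulant generating function:
\[
\E\,\operatorname{tr}\,\exp\!\left(\theta\sum_{\ell=1}^k \mtx{S}_\ell\right)
\le
\operatorname{tr}\,\exp\!\left(\sum_{\ell=1}^k \log \E\, e^{\theta \mtx{S}_\ell}\right).
\]

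Next I would prove the per-summand Hoeffding moment bound: whenever $\E\,\mtx{S}_\ell=\mtx{0}$ and $\mtx{S}_\ell^2 \preceq \mtx{A}_\ell^2$, the semidefinite inequality $\E\,e^{\theta\mtx{S}_\ell}\preceq e^{g(\theta)\mtx{A}_\ell^2}$ holds with $g(\theta)=O(\theta^2)$, equivalently $\log\E\,e^{\theta\mtx{S}_\ell}\preceq g(\theta)\mtx{A}_\ell^2$ by operator monotonicity of the logarithm. This is the scalar Hoeffding moment estimate lifted to matrices through the spectral calculus, where the linear term drops out because $\E\,\mtx{S}_\ell=\mtx{0}$ and the quadratic term is controlled by $\E\,\mtx{S}_\ell^2\preceq\mtx{A}_\ell^2$. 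Substituting this into the subadditivity bound and using monotonicity of $\operatorname{tr}\exp$ with respect to $\preceq$ gives
\[
\E\,\operatorname{tr}\,e^{\theta \mtx{Y}}
\le
\operatorname{tr}\,\exp\!\left(g(\theta)\sum_{\ell=1}^k \mtx{A}_\ell^2\right)
\le
n\,e^{g(\theta)\sigma^2},
\qquad
\sigma^2 = \opnorm{\sum_{\ell=1}^k \mtx{A}_\ell^2},
\]
the last step bounding the trace of a positive-semidefinite matrix by $n$ times its largest eigenvalue. Combining with the Markov bound and optimizing the free parameter $\theta$ yields a Gaussian-type tail, and the two-sided union bound produces the stated $2n\,e^{-t^2/(8\sigma^2)}$ form, the explicit numerical constant in the exponent being an artifact of the scalar moment majorant $g(\theta)$ employed.

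I expect the main obstacle to be the subadditivity step and the per-summand matrix moment bound, both of which are genuinely noncommutative: the matrix exponential of a sum does not split into a product, so the argument cannot be reduced to the scalar Hoeffding inequality term by term. The essential nontrivial input is \emph{Lieb's concavity theorem}, which is precisely what licenses pulling the expectation inside and summing the matrix cumulants; everything else amounts to spectral-calculus bookkeeping and an elementary optimization in $\theta$.
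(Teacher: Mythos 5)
The paper does not prove this statement—it is imported verbatim as Theorem 1.3 of the cited reference \citep{Tropp2011} and used as a black box—so there is no internal proof to compare against. Your sketch correctly reproduces the standard matrix Laplace-transform argument from that source (Markov on the trace exponential, subadditivity of the matrix cumulant generating function via Lieb's concavity theorem, the per-summand Hoeffding moment majorant, and optimization in $\theta$), including the two-sided union bound that produces the dimensional factor $2n$.
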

Therefore, applying matrix Hoeffding inequality we get that
\begin{align*}
\PR{
\opnorm{\sum_{\ell=1}^k \mtx{S}_\ell}\ge t 
}
\le 2ne^{-\frac{t^2}{B^4\norm[4]{\vv}^4 \opnorm{\widetilde{\mU}}^4} },
\end{align*}
which concludes the proof.


\subsection{Proof of Lemma~\ref{lem:InitialResidual} (bound on initial residual)}

Without loss of generality we prove the result for $\nu=1$. First note that by the triangle inequality
\begin{align*}
\norm[2]{\vr_0} 
=
\norm[2]{\sigma(\mU \mC) \vv - \vy}
\leq
\norm[2]{\sigma(\mU \mC) \vv} + \norm[2]{\vy}.
\end{align*}
We next bound $\norm[2]{\sigma(\mU \mC) \vv}$. 
Consider the $i$-th entry of the vector $\sigma(\mU \mC) \vv \in \reals^n$, given by 
$\sigma(\transp{\vu}_i\mC) \vv$, and note that 
$q_j = (\sigma(\transp{\vu}_i \vc_j ) -  \sigma(\transp{\vu}_i \vc_{n-j} ) ) / \norm[2]{\vu_i}$ is sub-Gaussian with parameter $2$, i.e., $\PR{ |q_j| \geq t } \leq 2 e^{-2t^2}$.
It follows that 
\[
\PR{ 
\left|\sum_{j=1}^{k/2} q_j \right|
\geq 
\beta \sqrt{k} } \leq 2 e^{ - \frac{\beta^2}{8}}.
\]
Thus, 
\[
\PR{
\left| \sigma(\transp{\vu}_i\mC) \vv \right|
\geq 
\norm[2]{\vu_i} \xi
\beta 
} \leq 2 e^{ - \frac{\beta^2}{8}},
\]
where we used that $|v_j| = \xi/\sqrt{k}$. 
Taking a union bound over all $n$ entries,
\[
\PR{
\norm[2]{ \sigma(\mU\mC) \vv}^2
\geq 
\fronorm{\mU}^2
\xi^2
\beta^2 
} \leq 2 n e^{ - \frac{\beta^2}{8}}.
\]
Choosing $\beta = \sqrt{8 \log(2n/\delta)}$ concludes the proof.





\end{document}